\newcommand{\appendixtocname}{Appendix contents}
\newcommand{\appsection}[1]{%
  \section{#1}%
  \addcontentsline{app}{section}{\protect\numberline{\thesection}#1}%
}
\newcommand{\Econv}{\mathcal{E}_{\mathrm{conv}\{f^{e_1},\,\dots,\,f^{e_K}\}}}
\newtheorem{setting}[theorem]{Setting}
\newtheorem{assumption}[theorem]{Assumption}
\definecolor{mygray}{gray}{0.5}
\definecolor{myblue}{RGB}{87,144,252}
\definecolor{myorange}{RGB}{248,156,32}
\definecolor{mygreen}{rgb}{0.133, 0.545, 0.133}
\definecolor{mypurple}{rgb}{0.58, 0.34, 0.92}
\tikzset{
  MSEconvexhull/.pic={
    \coordinate (Center) at (2,2);
    \def\radius{1.5}
    \coordinate (V1) at ($(Center)+(\radius,0)$);
    \coordinate (V2) at ($(Center)+({\radius*cos(140)},{\radius*sin(140)})$);
    \coordinate (V3) at ($(Center)+({\radius*cos(280)},{\radius*sin(280)})$);

    \draw[->] (-0.2,0) -- (4,0) node[right] {$\beta_1$};
    \draw[->] (0,-0.2) -- (0,4) node[above] {$\beta_2$};

    \fill[myblue, opacity=0.2] (V1) -- (V2) -- (V3) -- cycle;
    \draw[line width=1pt, myblue] (V1) -- (V2) -- (V3) -- cycle;
    \filldraw[myblue] (V1) circle (2pt) node[right] {$\bm{\beta}^{(1)}$};
    \filldraw[myblue] (V2) circle (2pt) node[above left, xshift=2pt] {$\bm{\beta}^{(2)}$};
    \filldraw[myblue] (V3) circle (2pt) node[below left, yshift=2pt] {$\bm{\beta}^{(3)}$};
    
    \coordinate (Est) at (V2);
    \filldraw[myorange] (Est) circle (3pt)
        node[above right, xshift=2pt] {$\bm{\beta}^{\text{MSE}}$};
    \node[above=4pt] at ($(0,4.25)!0.5!(4.25,4.25)$) {\textbf{MSE}};
  }
}
\tikzset{
  NRWconvexhull/.pic={
    \coordinate (Center) at (2,2);
    \def\radius{1.5}
    \coordinate (V1) at ($(Center)+(\radius,0)$);
    \coordinate (V2) at ($(Center)+({\radius*cos(140)},{\radius*sin(150)})$);
    \coordinate (V3) at ($(Center)+({\radius*cos(280)},{\radius*sin(280)})$);

    \draw[->] (-0.2,0) -- (4,0) node[right] {$\beta_1$};
    \draw[->] (0,-0.2) -- (0,4) node[above] {$\beta_2$};

    \fill[myblue, opacity=0.2] (V1) -- (V2) -- (V3) -- cycle;
    \draw[line width=1pt, myblue] (V1) -- (V2) -- (V3) -- cycle;
    \filldraw[myblue] (V1) circle (2pt) node[right] {$\bm{\beta}^{(1)}$};
    \filldraw[myblue] (V2) circle (2pt) node[above left, xshift=2pt] {$\bm{\beta}^{(2)}$};
    \filldraw[myblue] (V3) circle (2pt) node[below left, yshift=2pt] {$\bm{\beta}^{(3)}$};

    \path let
      \p1 = (V2),
      \p2 = (V3),
      \p0 = (0,0)
    in
      \pgfextra{
        \pgfmathsetmacro{\dx}{\x2-\x1}
        \pgfmathsetmacro{\dy}{\y2-\y1}
        \pgfmathsetmacro{\px}{0pt-\x1}
        \pgfmathsetmacro{\py}{0pt-\y1}
        \pgfmathsetmacro{\dot}{\px*\dx + \py*\dy}
        \pgfmathsetmacro{\lensq}{\dx*\dx + \dy*\dy}
        \pgfmathsetmacro{\t}{\dot/\lensq}
      }
      coordinate (EstAlt) at (\x1 + \t*\dx, \y1 + \t*\dy);

    \coordinate (Est) at (EstAlt);
    \filldraw[myorange] (Est) circle (3pt)
        node[left=1pt,xshift=-2.5pt] {$\bm{\beta}^{\text{NRW}}$};
    \draw[dashed, myorange, line width=1.5pt] (0,0) -- (Est);
    \node[above=4pt] at ($(0,4.25)!0.5!(4.25,4.25)$) {\textbf{Negative Reward}};
  }
}
\tikzset{
  Regconvexhull/.pic={
    \coordinate (Center) at (2,2);
    \draw[dashed, myorange, line width=1.5pt] (Center) circle (1.5);
    \def\radius{1.5}
    \coordinate (V1) at ($(Center)+(\radius,0)$);
    \coordinate (V2) at ($(Center)+({\radius*cos(140)},{\radius*sin(140)})$);
    \coordinate (V3) at ($(Center)+({\radius*cos(280)},{\radius*sin(280)})$);

    \draw[->] (-0.2,0) -- (4,0) node[right] {$\beta_1$};
    \draw[->] (0,-0.2) -- (0,4) node[above] {$\beta_2$};

    \fill[myblue, opacity=0.2] (V1) -- (V2) -- (V3) -- cycle;
    \coordinate (Est) at (Center);
    \filldraw[myorange] (Est) circle (3pt)
        node[below right, xshift=2pt] {$\bm{\beta}^{\text{Reg}}$};
    \draw[dashed, myorange, line width=1.5pt] (V1) -- (Est);
    \draw[dashed, myorange, line width=1.5pt] (V2) -- (Est);
    \draw[dashed, myorange, line width=1.5pt] (V3) -- (Est);

    \draw[line width=1pt, myblue] (V1) -- (V2) -- (V3) -- cycle;
    \filldraw[myblue] (V1) circle (2pt) node[right] {$\bm{\beta}^{(1)}$};
    \filldraw[myblue] (V2) circle (2pt) node[above left, xshift=2pt] {$\bm{\beta}^{(2)}$};
    \filldraw[myblue] (V3) circle (2pt) node[below left, yshift=2pt] {$\bm{\beta}^{(3)}$};
        
    \node[above=4pt] at ($(0,4.25)!0.5!(4.25,4.25)$) {\textbf{Regret}};
  }
}
\begin{document}

\title{Maximum Risk Minimization with Random Forests}

\author{\name Francesco Freni\thanks{Most of this work was done while FF was at ETH Z\"urich.} \email freni.francesco@math.ku.dk \\
       \addr Department of Mathematical Sciences\\
       University of Copenhagen\\
       Copenhagen, Denmark
       \AND
       \name Anya Fries \email anya.fries@stat.math.ethz.ch \\
       \addr Seminar for Statistics\\
       ETH Z\"urich\\
       Z\"urich, Switzerland
       \AND
       \name Linus Kühne \email linus.kuehne@ai.ethz.ch \\
       \addr Seminar for Statistics and ETH AI Center\\
       ETH Z\"urich\\
       Z\"urich, Switzerland
       \AND
       \name Markus Reichstein \email mreichstein@bgc-jena.mpg.de \\
       \addr Biogeochemical Integration Department \\
       Max-Planck-Institute for Biogeochemistry\\
       Jena, Germany
       \AND
       \name Jonas Peters \email jonas.peters@stat.math.ethz.ch \\
       \addr Seminar for Statistics\\
       ETH Z\"urich\\
       Z\"urich, Switzerland
}
\editor{My editor}

\maketitle

\begin{abstract}%
    We consider a regression setting where observations are collected 
    in different environments modeled by  different data distributions.
    The field of out-of-distribution (OOD) generalization aims to design methods that 
    generalize better to test environments whose distributions differ from those observed during training.
  One line of such works has
  proposed to 
  minimize the maximum risk across environments, %
    a principle that we refer to as 
    MaxRM (Maximum Risk Minimization).
    In this work, we introduce variants of random forests %
    based on the principle of 
    MaxRM.
    We 
    provide computationally efficient algorithms
    and prove statistical consistency for our primary method.
    Our proposed method can be used with each of the following three risks: the mean squared error, the negative reward,
    and the regret (which quantifies the excess risk relative to the best predictor). 
    For MaxRM with regret as the risk, we prove a novel out-of-sample guarantee over unseen test distributions.
    Finally, we evaluate the proposed methods on both simulated and real-world data. %
\end{abstract}

\begin{keywords}
  maximum risk minimization,
  random forest, distribution generalization
\end{keywords}

\section{Introduction}
We consider a regression setting, where the goal is to predict a response $Y\in\mathbb{R}$ from a set of covariates $X\in\mathbb{R}^p$. When applying
traditional machine learning methods, one often assumes
that %
training and test distribution %
are identical and that
training data are sampled i.i.d.\ from the joint distribution of $(X,\,Y)$. 
However, in many real-world problems,
observations are collected from multiple environments\footnote{
We use the term \emph{environments} to refer to conditions that induce different distributions over $(X,Y)$, such as 
different subpopulations, experimental conditions or time periods.
}
and the test distribution may differ from all of the training distributions---a setup that is often referred to as out-of-distribution (OOD) generalization or said to be subject to distribution shift (across environments).
In this work, 
we assume that the training data is partitioned into $K$ environments, denoted by $\mathcal{E}_\text{tr}\coloneqq\{e_1,\dots,e_K\}$
and 
that the test data come from 
environments $e~\in~\mathcal{E}_\text{te}$, which may differ from the training environments.
For all $e~\in~\mathcal{E}_\text{tr} \cup \mathcal{E}_\text{te}$, environment $e$ has data drawn i.i.d.\ from a distribution $P_e$ over $\mathbb{R}^p\times\mathbb{R}$.

Several strands of literature 
have emerged that aim to tackle OOD generalization.
\citet{Liu2023} categorize existing methods and provide a list of datasets that are commonly used to evaluate models under distribution shifts.

One research direction relies on unsupervised learning, such as unsupervised domain adaptation \citep{Zhang2015, Zhao2019}, unsupervised domain generalization \citep{Zhang2022} and disentangled representation learning \citep{Bengio2013, Locatello2019}.

Another line of work models distribution changes 
 via
interventions and builds on the link between invariance and causality \citep{Haavelmo1943, Pearl2009}: 
the conditional distribution of $Y$ given the causal predictors of $Y$ remains %
unchanged under interventions on
covariates other than $Y$. Such invariance principles have been used in causal discovery \citep{Peters2016, HeinzeDeml2017, Mogensen2022icml, Christiansen2018} and modern distribution generalization approaches \citep{Scholkopf2012, Rojas-Carulla2018, Meinshausen2018, Arjovsky2020, Rothenhausler2021, Christiansen2022, Lu2022}. 

Because this
causality-oriented perspective 
provides
worst-case optimality guarantees over a class of interventions \citep[e.g.,][]{Rojas-Carulla2018,Rothenhausler2021, Christiansen2022}, it 
can be viewed as an instance of the broader distributionally robust optimization (DRO) framework
\citep[e.g.,][]{Bental1998, Bendavid2006}. DRO formulates the learning problem as a minimax optimization. Given covariates $X$, a response $Y$ and 
a loss function $\ell$, 
one considers
\begin{equation} \label{eq:dro}
    \min_\theta\,\sup_{P\in\mathcal{P}}\,\mathbb{E}_P[\ell(X,Y;\theta)],
\end{equation}
where
$\theta$ is the model parameter vector,
$\mathbb{E}_P[\cdot]$ is the expectation under $P$, and $\mathcal{P}\coloneqq\{P:D(P,P_0)\le\rho\}$ is the neighborhood of radius $\rho$ around the training distribution $P_0$, under some divergence $D$, such as $f$-divergence or Wasserstein distance \citep{Namkoong2016, Sinha2020, Duchi2021}.

In linear settings, the maximin regression framework \citep{Meinshausen2015} offers a different approach to achieve robustness against environment-based shifts, defining an estimator that maximizes the minimum explained variance across the observed training environments. This estimator 
equals the coefficient vector
closest to the origin within the convex hull formed by all possible environment-specific coefficient vectors. 
\citet{Buhlmann2016} extend this idea to nonlinear models, introducing the magging (maximin aggregating) estimator, which is a convex combination of environment-specific predictors and corresponds to the maximin estimator in the linear case. \citet{Guo2024} and \citet{Wang2025} apply the principle of maximizing the minimum explained variance across environments to multi-source unsupervised domain adaptation (MSDA) for linear and nonlinear models, respectively. In contrast to the maximin and magging estimators, which rely only on labeled data from multiple environments, this setting additionally makes use of unlabeled data from the test environment and accounts for potential shifts in the distribution of the covariates. \citet{Guo2025} further study MSDA for classification tasks under the cross-entropy loss.

In this work, 
we allow the different training and test distributions to stem from differences in conditional or marginal distributions and assume neither an underlying causal structure, nor an invariant set of covariates,
nor the availability of additional test-environment observations. 
Under this setting, we focus on methods that aim to solve the optimization problem 
\begin{equation}\label{eq:prob_gdro}
    \min_{\theta}\,\max_{e\in\mathcal{E}_\text{tr}}\,\mathbb{E}_{P_e}[\ell(X^e,Y^e;\theta)] \quad \text{or} \quad 
    \min_{f \in \mathcal{F}}\,\max_{e\in\mathcal{E}_\text{tr}}\,\mathbb{E}_{P_e}[\ell(X^e,Y^e;f)],
\end{equation}
which we refer to as maximum risk minimization (MaxRM). This optimization problem
has a close connection to DRO. Specifically, 
group DRO \citep{Sagawa2020, Oren2019, Hu2018}
studies DRO in the 
setting with $K$ training environments described above.
If $\mathcal{P}$ 
equals the set of all convex combinations
of environment-specific 
training distributions $P_e$,
$e \in \mathcal{E}_{\text{tr}}$,
the set of solutions to~\eqref{eq:dro} %
can also be written as the solutions of the optimization problem~\eqref{eq:prob_gdro}.
If the loss is convex in the model parameters,~\eqref{eq:prob_gdro} 
can be solved using convex optimization methods \citep{Shapiro2002}. When the loss is non-convex, existing approaches rely on stochastic approximation methods \citep{Sagawa2020, Zhang2024SA}, which provide global convergence guarantees in the convex case. Moreover, most applications of group DRO focus on classification rather than regression,
and existing implementations rely on neural networks, 
making the methods
sensitive to model architecture and hyperparameter selection 
\citep[][Section~5.4]{Liu2021}.

While existing work on MaxRM, as described, 
focuses on model classes such as linear regression or neural networks, 
we 
propose novel methods that
apply MaxRM to random forests 
\citep{Breiman2001}---a model class that is known to work 
well in a wide range of applications, in particular those with large noise. %
The risk can be specified as either the mean squared error (MSE), the negative reward 
as in \citet{Wang2025},
or the regret\footnote{The regret is introduced by \citet{Agarwal22} within the DRO framework and further studied by \citet{Mo2024};
it is
also considered in causal inference \citep[e.g.,][]{Zhang2024}.}.

Current  results about 
theoretical properties of these estimators 
do not include the regret.
We thus extend existing proofs 
and show that, 
for all of the above choices, we obtain worst-case optimality guarantees if the test distribution comes from the convex hull of the training distributions (Theorem~\ref{thm:gdro_equivalence}).

In terms of algorithms, we 
propose computationally efficient methods that 
adapt the construction of regression trees by either adjusting 
only the 
assignment of values 
to the leaf regions according to the specified risk
(MaxRM-RF-posthoc) or 
additionally adjusting the partitioning of the input space 
(MaxRM-RF-local and MaxRM-RF-global).
We then show how to adapt the 
tree weights used to combine the trees into a single predictor according to the MaxRM objective.
For the post-hoc adjustment strategy, we 
develop two additional algorithms %
based on either the extragradient method or block-coordinate descent;
we also
prove a consistency result stating
that the leaf values obtained from the empirical optimization problem converge to the population minimizers
(Theorem~\ref{thm:consistency}).

In our simulation experiments, we find that 
our method outperforms
an existing group DRO implementation relying on neural networks.
Furthermore, in contrast to magging \citep{Buhlmann2016},
MaxRM random forests allow the covariate distribution to vary across environments (in such settings, 
magging is 
not expected to perform well because
the predictor which minimizes
the maximum risk can, in general, not be written as a convex combination of environment-specific predictors; see Appendix~\ref{app:comparison_magging} for a proof). 
Section~\ref{sec:ca_housing} shows that 
our methodology may lead to improved worst-case prediction 
when compared to existing baselines 
on real data.

To illustrate, Figure~\ref{fig:intro} shows a 
simple simulation example. 
MaxRM random forest,
configured to minimize the maximum MSE across environments using the post-hoc adjustment strategy (MaxRM-RF(mse) in the figure) is compared with
the magging estimator based on random forests (Magging-RF(mse) in the figure), and group DRO implemented via neural networks using \citet[][Algorithm~$1$]{Sagawa2020} (GroupDRO-NN(mse)).
Both are configured 
to minimize
the maximum MSE across environments.
Here, the proposed MaxRM random forest (orange) 
most closely approximates
the oracle function (dashed red), which minimizes the population maximum MSE across environments. 
\begin{figure}[t]
  \centering
  \begin{minipage}[c]{0.60\linewidth}
    \vspace{0pt}
    \centering
    \includegraphics[width=\linewidth]{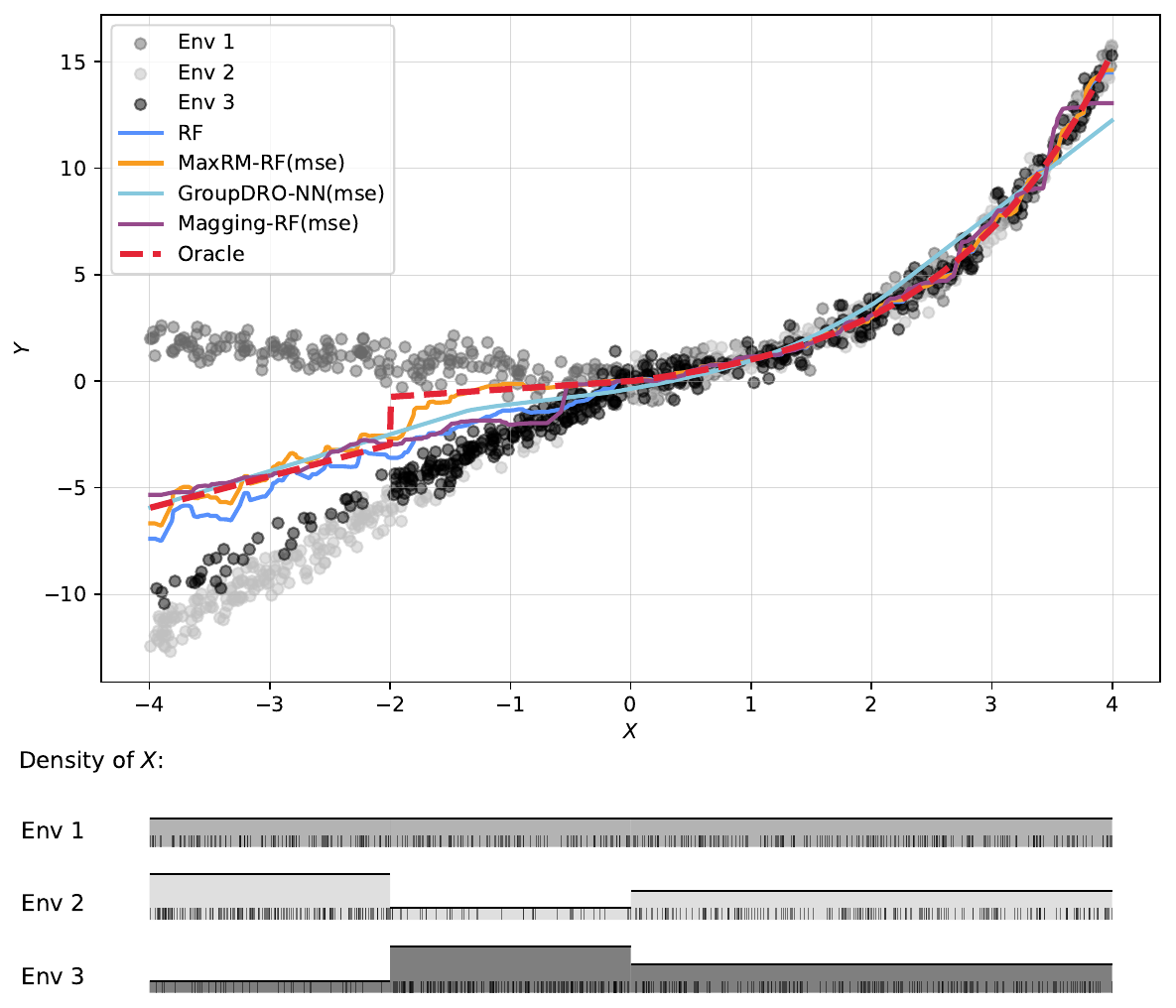}
  \end{minipage}\hfill
  \begin{minipage}[c]{0.4\linewidth}
    \vspace{0pt}
    \centering
    \small
    \begin{tabular}{@{}lc@{}}
      \toprule
      Method & MISE \\
      \midrule
      RF                 & $6.97\pm0.25$ \\
      MaxRM-RF(mse)     & $2.25\pm0.19$ \\
      GroupDRO-NN(mse)  & $11.88\pm1.07$\\
      Magging-RF(mse)       & $4.48\pm0.10$ \\
      \bottomrule
    \end{tabular}
  \end{minipage}
  \caption{
  {\it Left:} 
  The observed data (dots) are drawn from three distinct training environments, exhibiting both shifts in the conditionals $P^{Y|X}_e$ and marginals $P^X_e$ (bottom left subplot) across environments. MaxRM random forest (solid orange) more closely approximates the analytically computable oracle solution (minimizing the population maximum MSE across environments; dashed red)
  than
  magging; this 
  is expected in that the latter is only guaranteed to minimize the maximum risk across training environments if there are no shifts in $P^X_e$ (see Appendix~\ref{app:comparison_magging}).
  Similarly to MaxRM-RF,
  group DRO also
  aims to minimize the maximum (empirical) MSE 
  across training environments
  but its implementation 
  performs worse than MaxRM-RF. %
  {\it Right:} mean integrated squared error (MISE) with $95\%$ confidence intervals over $100$ simulation runs. MaxRM random forest achieves the lowest MISE.
  }
  \label{fig:intro}
\end{figure}

\subsection{Notation}
We now introduce some notation used throughout the paper. We use $(X,\,Y)$ for a random vector with joint distribution $P$ over $\mathbb{R}^p\times\mathbb{R}$, and denote by $P^X$ and $P^{Y|X}$ the marginal distribution of $X$ and the conditional distribution of $Y|X$, respectively. 
We denote the set of observable environments by $\mathcal{E}$ and denote by 
$\mathcal{E}_\text{tr}, \, \mathcal{E}_\text{te} \subseteq \mathcal{E}$
the sets of training and test environments, respectively. 
For all $e \in \mathcal{E}$, let $(X^e,\, Y^e)$ be a random vector with joint distribution $P_e$, marginal $P_e^X$ and conditional $P_e^{Y|X}$. Let $\mathcal{P}$ denote a set of probability measures over $\mathbb{R}^p\times\mathbb{R}$ containing the joint distributions that can be encountered at test time ($\mathcal{P}$ may depend on the context and the considered method). Expectations under $P_e$, $P_e^X$ and $P\in\mathcal{P}$ are denoted by $\mathbb{E}_e[\cdot]$, $\mathbb{E}_{P_e^X}[\cdot]$ and $\mathbb{E}_P[\cdot]$, respectively.

\section{Maximum risk minimization}\label{sec:maxrm}
Empirical risk minimization \citep[ERM;][]{Vapnik1998} minimizes the average loss over all training observations. In contrast, 
we 
focus on minimizing the maximum risk across training environments, which we refer to as maximum risk minimization (MaxRM); we will see in Theorem~\ref{thm:gdro_equivalence} that this principle comes with optimality guarantees on the test set. Throughout the paper, we assume the following setting.
\begin{setting}\label{setting:setting_maxrm}
For all $e\in\mathcal{E}$, the response satisfies $$Y^e=f^e(X^e)+\epsilon^e,$$ 
where the noise $\epsilon^e$ is such that $\mathbb{E}_e[(\epsilon^e)^2\mid X^e]=:\sigma_e^2<\infty$
and $\mathbb{E}_e[\epsilon^e\mid X^e] = 0$. Thus, 
$f^e : \mathbb{R}^p \to \mathbb{R}$ satisfies, for all $\bm{x}\in\mathbb{R}^p$, $f^e(\bm{x})=\mathbb{E}_e[Y^e\mid X^e=\bm{x}]$. We only observe data from a subset of $K \in \mathbb{N}$ training environments $\mathcal{E}_\textnormal{tr} ~\coloneqq ~\{e_1,\dots,e_K\} \subseteq \mathcal{E}$. In particular, for all $e\in\mathcal{E}_\textnormal{tr}$, we have $n_e$ i.i.d.\ realizations $\{(\bm{x}_i^e,y_i^e)\}_{i=1}^{n_e}$, with known environment labels, and denote the sample size as $n \coloneqq \sum_{k=1}^K n_{e_k}$. 
We consider predictors $f: \mathbb{R}^p \to \mathbb{R}$ from a class $\mathcal{F}$ of measurable functions. Finally, we assume that the data-generating process, the function class, and the loss function are such that 
all expectations appearing in this paper
are finite.
\end{setting}
The risk of a function $f\in\mathcal{F}$ under probability distribution $P\in\mathcal{P}$ and loss function $\ell:\mathbb{R}^p\times\mathbb{R}\to\mathbb{R}^{+}$ is defined as $R_P(f)\coloneqq\mathbb{E}_P[\ell(X,\,Y;\,f)]$, which depends on the choice of the loss~$\ell$. For all $e\in\mathcal{E}$, we let $R_e(f)$ denote the risk of function $f$ under $P_e$. 
ERM
considers 
(the empirical version of)
$\min_{f \in \mathcal{F}} \sum_{e \in \mathcal{E}_{\text{tr}}} n_e R_{e}(f)$,
ignoring possible differences among these environments, as well as changes in distribution at test time. 
MaxRM, instead, focuses on the worst-case risk across all training environments,
$\min_{f \in \mathcal{F}}\,\max_{e\in\mathcal{E}_\text{tr}}\,R_{e}(f)$,
see~\eqref{eq:prob_gdro}. 
We will see in Theorem~\ref{thm:gdro_equivalence} that this comes with optimality guarantees over test distributions.
More concretely, the corresponding predictors can be shown to solve optimization problems of the %
form 
\begin{equation}\label{eq:main_goal}
    \min_{f\in\mathcal{F}}\,\max_{P\in\mathcal{P}}\,R_P(f),
\end{equation}
which involves the risk $R_P$ and an uncertainty set\footnote{The name `uncertainty set' stems from the fact that problems of the form~\eqref{eq:main_goal} have been studied as an object of interest in itself \citep[see, e.g.,][Section~$2$ for an overview]{Kuhn2025}.} $\mathcal{P}$.
Section~\ref{subsec:risk}
describes choices for risks and in
Section~\ref{subsec:uncertainty_set} 
we discuss 
different generalization guarantees.

\subsection{Choice of the risk}\label{subsec:risk}

A common choice of risk, both in the DRO framework for regression \citep{Duchi2021} and in standard regression settings, is the mean squared error (MSE), defined as the expected squared loss:
\begin{equation*}%
    R_P^\text{MSE}(f)\coloneqq\mathbb{E}_P[(Y-f(X))^2].
\end{equation*}
In the linear setting, \citet{Meinshausen2015} introduce the maximin estimator, which is obtained by minimizing the maximum \emph{negative reward} (NRW) across environments. The negative reward is 
expressed as the MSE of $f$ compared to the MSE of the null model:%
\begin{equation*}%
    R_P^\text{NRW}(f)\coloneqq\mathbb{E}_P[(Y-f(X))^2]-\mathbb{E}_P[Y^2].
\end{equation*}
Minimizing the maximum negative reward is also considered by  \citet{Buhlmann2016} and \citet{Wang2025}, who study the nonlinear setting, with the latter focusing on multi-source unsupervised domain adaptation (MSDA).
These two methods coincide when the covariate distribution is the same across environments.

In contrast, the \emph{regret} (Reg), introduced by \citet{Agarwal22} in the DRO framework, measures the MSE of $f$ against the optimal predictor within $\mathcal{F}$:
\begin{equation*}%
    R_P^\text{Reg}(f)\coloneqq\mathbb{E}_P[(Y-f(X))^2]-\inf_{g\in\mathcal{F}}\mathbb{E}_P[(Y-g(X))^2].
\end{equation*}
\citet{Mo2024} study the regret for linear regression, and \citet{Zhang2024} for heterogeneous treatment effect estimation in causal inference.

\begin{remark}[Heterogeneous noise levels]
    In Setting~\ref{setting:setting_maxrm},
    MaxRM when using MSE, see~\eqref{eq:prob_gdro}, can be dominated by the environment with the largest noise variance. Let us assume for all training environments $e \in \mathcal{E}_\mathrm{tr}$
    that $f^e \in \mathcal{F}$. 
    Then, for all $e \in \mathcal{E}_\mathrm{tr}$,
    $\inf_{g\in\mathcal{F}} \mathbb{E}_e[(Y^e-g(X^e))^2] = \sigma_e^2$. This implies for all $f \in \mathcal{F}$, $$\max_{e\in\mathcal{E}_\mathrm{tr}} R_e^{\mathrm{Reg}}(f) 
    = \max_{e\in\mathcal{E}_\mathrm{tr}} \bigl( R_e^{\mathrm{MSE}}(f) - \sigma_e^2 \bigr)= \max_{e\in\mathcal{E}_\mathrm{tr}} \mathbb{E}_e\left[(f^e(X) - f(X))^2\right].$$Thus, in a population setting, minimizing the maximum regret is equivalent to minimizing the maximum MSE corrected for environment-specific noise levels---the solution of MaxRM used with regret does not change when varying $\sigma_e$ and corresponds to the solution of MaxRM with MSE in a noiseless setting.
\end{remark}

\citet[Section~2.3]{Wang2025} provide a geometric interpretation for the above risks in the case of $K=3$ environments, where the uncertainty set 
$\mathcal{P}$ is the convex hull of the joint training distributions in Equation~\eqref{eq:gdro_uncertainty_set}.

\subsection{
Generalization guarantee
}\label{subsec:uncertainty_set}

For the cases of MSE and negative reward, the literature on group DRO \citep{Sagawa2020,Oren2019,Hu2018} has established that 
MaxRM solves 
\eqref{eq:main_goal}
for $\mathcal{P}$ being the convex hull of the joint training distributions\footnote{
In MSDA \citep{Wang2025}, unlabeled covariates from the test environment are used to learn an improved model tailored to that environment.
Let $e^\prime\in\mathcal{E}\setminus\mathcal{E}_\text{tr}$ be the test environment.
In this setting, $P_{e^\prime}^X$ can be estimated from the data, while $P_{e^\prime}^{Y|X}$ is unknown. To define the uncertainty set $\mathcal{P}$, \citet{Wang2025} fix the marginal to $P_{e^\prime}^X$, while allowing the conditional to vary withing the convex hull of the training conditionals:
$\mathcal{P}_\text{MSDA}\!\left(P_{e^\prime}^{X}\right) \coloneqq
\{\, Q \,|\, 
\forall A \in \mathcal{B}(\mathbb{R}^p),\, B \in \mathcal{B}(\mathbb{R}) :\ 
 Q(X\in A,\,Y\in B) = \int_{A} Q^{Y\mid X=\bm{x}}(B)\, P_{e^\prime}^{X}(\mathrm{d}\bm{x}),
 Q^{Y\mid X} = \sum_{k=1}^{K} q_k\, P_{e_k}^{Y\mid X},\ \bm{q}\in \Delta_{K}
\}
$.
\citet{Guo2024} first introduce this uncertainty set in the linear setting, where, for all $e\in\mathcal{E}_\text{tr}$ and $\bm{x}\in\mathbb{R}^2$, $f^e(\bm{x})=\bm{x}^\top\bm{\beta}^{(e)}$ with $\bm{\beta}^{(e)}\in\mathbb{R}^2$, and \citet{Guo2025} use $\mathcal{P}_\text{MSDA}$ in classification tasks.
When the covariate distribution is the same across environments, the uncertainty sets $\mathcal{P}_\text{CVXH}$ and $\mathcal{P}_\text{MSDA}$
coincide
(and thus, by Theorem~\ref{thm:gdro_equivalence}, MSDA is equivalent to MaxRM).
}: 
\begin{equation}\label{eq:gdro_uncertainty_set}
    \mathcal{P}_\text{CVXH}\coloneqq\left\{P\;\middle|\;  P=\sum_{k=1}^K q_k P_{e_k},\, \bm{q}\in\Delta_K\right\},
\end{equation}
where $\Delta_{K}\!\coloneqq\!\bigl\{\bm q\in\mathbb R^{K}\mid\min_{k\in \{1,\,\dots,\,K\}}q_K\!\ge0,\,\sum_{k=1}^K q_k=1\bigr\}$ is the $(K-1)$-dimensional simplex.

We now prove that this result holds for the regret, too. As for the proof for the MSE and the NRW, 
its main argument is based on
Bauer's maximum principle
\citep{Bauer1958}.
For completeness, 
the following theorem 
also contains the known result for the MSE and the NRW.
\begin{theorem}[Equivalence between group DRO and MaxRM]\label{thm:gdro_equivalence}
    For all risk definitions $r\in\{\mathrm{MSE},\,\mathrm{NRW},\,\mathrm{Reg}\}$, minimizing the maximum risk over $\mathcal{P}_\mathrm{CVXH}$ is equivalent to solving MaxRM:
for all $f \in \mathcal{F}$, we have
    \begin{equation*}
\max_{P\in\mathcal{P}_\mathrm{CVXH}}\,R_P^r(f)=\max_{e\in\mathcal{E}_\textnormal{tr}}\,R_e^r(f),
    \end{equation*}
so also the minimizers of both sides over $\mathcal{F}$ are equal whenever they exist.
\end{theorem}
A proof can be found in Appendix~\ref{sec:proof_gdro_equivalence}.

Under the assumption that the covariate distributions do not change,
the guarantees of MaxRM are even stronger. \citet[][Section~2.4]{Wang2025}
show that
MaxRM also solves~\eqref{eq:main_goal} when 
the uncertainty
set~\eqref{eq:gdro_uncertainty_set} is enlarged to distributions whose conditional mean lies in the convex hull of the training conditional means. 
They formally state the result only for the negative reward, but it also holds for the regret. 
For the
mean squared error, however, the statement is false without additional assumptions. For completeness, we 
state the result for all three risks in
Proposition~\ref{prop:generalization_test_env}.

To do so, we 
use the following
assumptions and definitions. 
\begin{assumption}[No changes in covariate distribution]\label{ass:no_changeXdistr}
    The distribution of the covariates is the same across all 
    environments: there exists $P^X$ such that, for all %
    $e \in \mathcal{E}$, $P_e^X = P^X$. 
\end{assumption} 
We define $\Econv$ as the subset of
environments whose regression functions lie in the convex hull of the training regression functions, that is,
\begin{equation*}
    \Econv \coloneqq \Bigl\{e' \in \mathcal{E}\!:\, 
    \exists\bm{q}\in\Delta_K\ \textrm{such that }
    f^{e'}(\cdot)=\sum_{k=1}^K q_k f^{e_k}(\cdot)\Bigr\}.
\end{equation*}

\begin{assumption}[Constant noise variance]\label{ass:constant_noisevar}
    The noise variance is constant across environments, that is, for all $e\in\mathcal{E}_\textnormal{tr}\cup\Econv$, $\sigma_e^2=\sigma^2$.
\end{assumption}
The next result characterizes when the maximum risk across $\Econv$ (all %
environments whose regression function is a convex combination of the training regression functions) coincides with the maximum risk across training environments. 
\begin{proposition}[Generalization to the convex hull of training conditional means]\label{prop:generalization_test_env}
Let $r \in \{\mathrm{NRW},\, \mathrm{Reg},\, \mathrm{MSE}\}$. 
Suppose Assumption~\ref{ass:no_changeXdistr} (no changes in covariate distribution) holds. If $r = \mathrm{Reg}$, assume that the function class $\mathcal{F}$ contains the convex hull of $\{f^{e_1},\,\dots,\,f^{e_K}\}$.
\begin{enumerate}[label=(\roman*)]
    \item\label{itm:prop_equality} 
    If Assumption~\ref{ass:constant_noisevar} (constant noise variance) holds, then,
   for all $f\in\mathcal{F}$,
    \begin{equation} \label{eq:prop_5}
        \max_{e^\prime \in \Econv} R_{e^\prime}^r(f) = \max_{e\in\mathcal{E}_\mathrm{tr}} R_e^r(f).
    \end{equation}
    \item\label{itm:prop_mse_degeneration} 
    If Assumption~\ref{ass:constant_noisevar} (constant noise variance)  does not hold, then
    Equality~\eqref{eq:prop_5} holds if $r\in\{\mathrm{NRW},\, \mathrm{Reg}\}$, but may fail if $r = \mathrm{MSE}$.
\end{enumerate} 
\end{proposition}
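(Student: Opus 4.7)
My plan is to use Assumption~\ref{ass:no_changeXdistr} to rewrite every risk as a functional of the regression function $f^{e}$ evaluated in $L^{2}(P^X)$, and then exploit the affineness or convexity of these functionals in $f^{e}$ to pass between convex combinations of regression functions and maxima over training environments. Concretely, conditioning on $X$ and using $\mathbb{E}_e[\epsilon^e\mid X^e]=0$ together with $P_e^X=P^X$, I would first derive, for every $e\in\mathcal{E}_\mathrm{tr}\cup\Econv$ and $f\in\mathcal{F}$,
\begin{align*}
R_e^{\mathrm{MSE}}(f) &= \mathbb{E}_{P^X}\bigl[(f^e(X)-f(X))^{2}\bigr]+\sigma_e^{2},\\
R_e^{\mathrm{NRW}}(f) &= -2\,\mathbb{E}_{P^X}\bigl[f^e(X)f(X)\bigr]+\mathbb{E}_{P^X}\bigl[f(X)^{2}\bigr],\\
R_e^{\mathrm{Reg}}(f) &= \mathbb{E}_{P^X}\bigl[(f^e(X)-f(X))^{2}\bigr].
\end{align*}
The regret identity relies on $f^{e}\in\mathcal{F}$ so that the $L^{2}(P^X)$-minimizer of the MSE is $f^{e}$ itself and the baseline reduces to $\sigma_e^{2}$; this is guaranteed for training environments and, by the convex-hull hypothesis on $\mathcal{F}$, also for every $e'\in\Econv$. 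Crucially, $\sigma_e^{2}$ cancels in the NRW and Reg identities but not in the MSE one.

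From these identities, the rest is a routine convexity/affineness argument. Fix $f\in\mathcal{F}$ and $e'\in\Econv$, write $f^{e'}=\sum_{k}q_{k}f^{e_{k}}$, and observe that, viewed as a function of $g\coloneqq f^{e'}$, $R^{\mathrm{NRW}}(f)$ is affine in $g$ while $R^{\mathrm{Reg}}(f)$ and the regression-error part of $R^{\mathrm{MSE}}(f)$ are convex in $g$ via the squared $L^{2}(P^X)$-distance to $f$. Affineness and $\sum_{k}q_{k}=1$ give $R_{e'}^{\mathrm{NRW}}(f)=\sum_{k}q_{k}R_{e_{k}}^{\mathrm{NRW}}(f)\le\max_{k}R_{e_{k}}^{\mathrm{NRW}}(f)$; Jensen's inequality yields the analogous bound for the regret; and for the MSE, Jensen controls the distance term while Assumption~\ref{ass:constant_noisevar} collapses $\sigma_{e'}^{2}$ and every $\sigma_{e_{k}}^{2}$ to a common $\sigma^{2}$, so adding $\sigma^{2}$ to both sides of Jensen's bound yields $R_{e'}^{\mathrm{MSE}}(f)\le\max_{k}R_{e_{k}}^{\mathrm{MSE}}(f)$. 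The matching reverse inequality is immediate because each training environment corresponds to a degenerate convex combination (with matching noise variance under Assumption~\ref{ass:constant_noisevar}), so part~(i) follows.

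For part~(ii), the NRW and Reg arguments above never invoked Assumption~\ref{ass:constant_noisevar}, so Equality~\eqref{eq:prop_5} persists under the weaker hypotheses. To falsify the MSE case, I would exhibit a simple counterexample in which only the test-time noise is inflated: take $K=2$ with constant regression functions $f^{e_1}\equiv 0$, $f^{e_2}\equiv 1$ and $\sigma_{e_1}^{2}=\sigma_{e_2}^{2}=1$, together with a test environment $e'\in\Econv$ having $f^{e'}\equiv 0$ (corresponding to $\bm{q}=(1,0)$) and $\sigma_{e'}^{2}$ arbitrarily large; for $f\equiv 0$, the training-side maximum is bounded by a constant while $R_{e'}^{\mathrm{MSE}}(f)=\sigma_{e'}^{2}$ is unbounded, breaking~\eqref{eq:prop_5}. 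The step I anticipate requiring the most care is the derivation of the regret identity: the $\sigma_{e}^{2}$ cancellation hinges on the population $L^{2}(P^X)$-minimizer of the MSE being $f^{e}$ itself, which is the sole reason the proposition needs $\mathcal{F}$ to contain the convex hull of the training regression functions.
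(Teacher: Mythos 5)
Your proposal is correct and follows essentially the same route as the paper: both reduce each risk to the same $L^{2}(P^X)$ functionals of $f^{e'}=\sum_k q_k f^{e_k}$ (including the observation that the regret baseline equals $\sigma_e^2$ precisely because $\mathcal{F}$ contains the relevant regression functions) and then exploit linearity/convexity in $\bm{q}$ over the simplex, with the paper invoking Bauer's maximum principle where you apply Jensen's inequality directly --- the same fact. The only substantive difference is your part-(ii) counterexample, which inflates the test-time noise for an arbitrary predictor instead of constructing the paper's degenerate noisiest-training-environment scenario; both validly refute the universally quantified equality for the MSE.
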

A proof can be found in Appendix~\ref{sec:proof_test_env}.

In the setting 
    with 
    no changes in covariate distribution
    (see Proposition~\ref{prop:generalization_test_env} above),
\citet[][Theorem~2.1]{Wang2025} show 
that
the function $f^{*,\,r}$ 
that minimizes the left-hand side of~\eqref{eq:prop_5}
can be written as a convex combination of the regression functions in the training environments $\{f^{e_1}, \ldots, f^{e_K}\}$. 
Namely, there exists $\bm{q}^{*,\,r}\in\Delta_K$ such that, for all $\bm{x}\in\mathbb{R}^p$, $f^{*,\,r}(\bm{x})\coloneqq\sum_{k=1}^K q_k^{*,\,r} f^{e_k}(\bm{x})$. 
The optimal $\bm{q}^{*,\,r}$ is found by solving a convex optimization problem that depends on the chosen risk. 
This fact is exploited by magging \citep{Buhlmann2016}, for example.
Our setting allows for the covariate distribution to vary across environments and in this case, $f^{*,\,r}$ does not admit such a representation. 

\section{MaxRM random forest}\label{sec:maxrm_rf}
We now show how to modify random forests \citep[RFs;][]{Breiman2001} to minimize the maximum risk across the observed training environments, that is, to solve MaxRM.
Define the uncertainty set $\mathcal{P}$ of joint distributions that can be observed at test time as 
the convex hull of joint training distributions, as in \eqref{eq:gdro_uncertainty_set}.
We consider the three definitions of risk introduced in Section~\ref{subsec:risk}, that is, for all $r\in\{\text{MSE, NRW, Reg}\}$, we aim to find
\begin{equation} \label{eq:RFform}
f^{*,\,r}\in\arg\min_{f\in\mathcal{F}}\,\max_{e\in\mathcal{E}_\text{tr}}\,R_e^r(f),
\end{equation}
where now $\mathcal{F}$ denotes the function class of random forests. We tackle Problem~\eqref{eq:RFform} from different angles, both at the tree level and the forest level.

RFs combine multiple noisy regression trees into a single ensemble predictor. The key ingredients for fitting the regression trees  are (i) bagging \citep[bootstrap aggregating;][]{Breiman1996} (each tree is trained on an independent bootstrap sample of the original dataset); and (ii) random covariate selection (when growing a tree, at each step, the determination of the next split and the values of the resulting nodes are based on a random subset of $m_\text{try}\le p$ covariates).
After constructing a finite collection of $B$ trees $\{h_{\bm{\theta}^{(b)}}\}_{b=1}^B$, where, for all $b\in\{1,\,\dots,\,B\}$, tree $h_{\bm{\theta}^{(b)}}$ is parameterized by a vector of leaf values $\bm{\theta}^{(b)}$, the RF prediction equals the average of the tree predictions:
\begin{equation*}%
    f_{\bm{c}, \bm{w}}(\cdot) \coloneqq \sum_{b=1}^B w_b\, h_{\bm{\theta}^{(b)}}(\cdot),
\end{equation*}
where $\bm{c} \coloneqq (\bm{\theta}^{(1)},\, \ldots,\, \bm{\theta}^{(B)})$ and $\bm{w}\coloneqq(w_1,\,\dots,\,w_B)\in\Delta_B$ concatenate the parameters and weights of all trees, respectively. In standard RFs, for all $b \in \{1, \ldots, B\}$, $w_b := \frac{1}{B}$. 

In MaxRM-RFs, we instead modify the construction of individual trees, the tree weights, or both to optimize the MaxRM objective. Section~\ref{sec:leaves} describes how to adjust the assignment of leaf values to the regions of a fixed partition by solving a convex optimization problem. Section~\ref{sec:partition} then proposes ways to combine this modified leaf-value assignment with different strategies for partitioning the input space. Section~\ref{sec:weights} explains how to choose $\bm{w}$ according to the MaxRM objective. Finally, Section~\ref{sec:algorithms_posthoc} presents alternative algorithms for the optimization problems introduced in Section~\ref{sec:leaves}. Code for the  methodology we propose is available at \url{https://github.com/francescofreni/adaXT/}.

\subsection{Assigning leaf values} \label{sec:leaves}
Both MaxRM and standard regression trees model the regression function as a piecewise constant function over a partition $\mathcal{L} \coloneqq \{\mathcal{X}_1, \ldots, \mathcal{X}_T\}$ of the input space $\mathbb{R}^p$ into $T$ disjoint leaf regions:
$\bigcup_{t=1}^T \mathcal{X}_t = \mathbb{R}^p$, $\mathcal{X}_t \cap \mathcal{X}_{t^\prime} = \emptyset$ for all $t \neq t^\prime,\; t,t^\prime \in \{1,\,\dots,\,T\}$.
Let $\bm{\theta}\coloneqq(\theta_1,\,\dots,\,\theta_T)^\top\in\mathbb{R}^T$ be the vector of leaf values. For all $\bm{x}\in\mathbb{R}^p$, the prediction of tree $h_{\bm{\theta}}$ at $\bm{x}$ is then given by $h_{\bm{\theta}}(\bm{x}) \coloneqq \sum_{t=1}^T \theta_t \mathbf{1}_{\{\bm{x} \in \mathcal{X}_t\}}$.

Let $\mathcal{L}$ be the partition of the input space found after constructing the regression tree as for standard RFs. For all $e\in\mathcal{E}_\text{tr}$, the MSE of tree $h_{\bm{\theta}}$ in environment $e$ is given by
$R_e^\text{MSE}(h_{\bm{\theta}})=\mathbb{E}_e\left[\sum_{t=1}^T(Y^e - \theta_t)^2\,\mathbf{1}_{\{X^e\in\mathcal{X}_t\}}\right]$.
We now determine the leaf values $\bm{\theta}$ which solve MaxRM, that is,
\begin{equation}\label{eq:obj_posthoc}
    \min_{\bm{\theta}\in\mathbb{R}^{T}}\,\max_{e\in\mathcal{E}_\text{tr}}\,R_e^\text{MSE}(h_{\bm{\theta}}).
\end{equation}
This problem can equivalently be written in epigraph form as
\begin{equation} \label{eq:obj_local_epigraphnew}
\begin{aligned}
\min_{\bm{\theta}\in\mathbb{R}^{T},\,z\in[0,\,\infty)}\quad &z,\\
\text{s.t.}\quad &R_e^\text{MSE}(h_{\bm{\theta}})\,\le\,z,\quad\forall\,e\in\mathcal{E}_\text{tr}.
\end{aligned}
\end{equation}
In practice, for all $e\in\mathcal{E}_\text{tr}$, we replace $R_e^\text{MSE}$ by its empirical counterpart
\begin{equation}\label{eq:empirical-MSE}
    \hat{R}_e^\text{MSE}(h_{\bm{\theta}})=\frac{1}{n_e}\sum_{i=1}^{n_e}\left[
    \sum_{t=1}^T(y_i^e - \theta_t)^2\,\mathbf{1}_{\{x_i^e\in\mathcal{X}_t\}}\right] =\frac{1}{n_e}\lVert A_e\bm{\theta}-\bm{y}_e\rVert_2^2,
\end{equation}
where $\bm{y}_e\coloneqq(y_1^e,\dots,y_{n_e}^e)^\top$ is the vector of observed responses in environment $e$, and $A_e~\in~\{0,1\}^{n_e\times(M+1)}$ is a binary matrix whose $(i,j)$-th entry is $1$ if observation $i$ falls into region $j$, and $0$ otherwise.
For all $e\in\mathcal{E}_\text{tr}$, the constraint $\hat{R}_e^\text{MSE}(h_{\bm{\theta}}) \leq z$ is thus equivalent to $$\lVert A_e\bm{\theta}-\bm{y}_e\rVert_2^2 \le n_ez,$$ which describes a rotated second-order cone.
Since a rotated second-order cone can be converted into a second-order cone \citep{Alizadeh2003}, 
\eqref{eq:obj_local_epigraphnew} can be represented as a second-order cone program (SOCP) and can be solved using interior‐point methods \citep[][Chapter~$11$]{Nesterov1994, Domahidi2013, Goulart2024, Boyd2004}.
(Appendix~\ref{sec:bcd} describes how to combine this with block-coordinate descent.)

The same line of arguments
extends analogously to the negative reward and the regret, as discussed in the following remark.
\begin{remark}\label{rmk:multiple_risks}
    The same procedure applies if, instead of the \textnormal{MSE}, we use the negative reward or the regret. In this case, in~\eqref{eq:obj_local_epigraphnew}, we replace, for all $e\in\mathcal{E}_\textnormal{tr}$, $R_e^\textnormal{MSE}$ with the corresponding risk. The resulting constraints differ from the \textnormal{MSE} case only by an additive constant $c_e$ independent of $\theta^L$ or $\theta^R$. Specifically, we impose an upper bound of $z+c_e\in[0,\,\infty)$ instead of $z\in[0,\,\infty)$, where $c_e=\mathbb{E}_e[(Y^e)^2]$ for the negative reward and $c_e=\inf_{g\in\mathcal{F}}\,\mathbb{E}_e[(Y^e - g(X^e))^2]$ for the regret. Therefore, if we replace, for all $e\in\mathcal{E}_\textnormal{tr}$, $R_e^\textnormal{NRW}$ (or $R_e^\textnormal{Reg}$) with its empirical counterpart $\hat{R}_e^\textnormal{NRW}$ (or $\hat{R}_e^\textnormal{Reg}$), the optimization problem can still be represented as a second-order cone program. 
\end{remark}

The optimization described  in~\eqref{eq:obj_posthoc}
is not strictly convex, as discussed in the following remark. 
\begin{remark} \label{rem:indeterm}
    After constructing the partition of the input space using standard regression trees, some leaf regions may contain no observations from certain environments. In particular, if a leaf region $\mathcal{X}_t$, $t\in\{1,\,\dots,\,T\}$, contains no observations from the worst-case environment, the MaxRM optimization problem imposes no constraints on the corresponding leaf value $\theta_t$, as long as the chosen value of $\theta_t$ does not increase the risk of another environment to the point that it becomes the new worst-case environment. Therefore, the MaxRM objective does not uniquely determine $\theta_t$ in such leaves, and any value that keeps the risks of all non-worst-case environments below the maximum risk is feasible. In practice,
    we simply use an implicit regularization towards the RF solution of that leaf induced by the choice of initial values (see also Appendix~\ref{app:indet}). 
\end{remark}

In summary, the first method we propose constructs, as RF, 
standard regression trees on bootstrap samples but then re-optimizes the leaf values (`post-hoc') according to the MaxRM objective. 
The trees are weighted equally to obtain the final predictor.
Given the convexity of the optimization procedure described above, this procedure is computationally efficient and scales to datasets with large numbers of variables and observations. We refer to this procedure as MaxRM-RF-posthoc
(we argue in Section~\ref{sec:exp_variants_comparison} that out of all proposed variants, MaxRM-RF-posthoc yields the best trade-off between performance and computational cost).

We now turn to the question whether we can improve the overall performance if we take the MaxRM objective into account when partitioning the input space.

\subsection{Partitioning the input space} \label{sec:partition}
In standard regression trees, introducing a new split does not affect the values of the already existing leaves: they are still optimal for the overall objective, irrespective of the new split and the values set in the resulting leaves.
This is no longer the case under the MaxRM objective. 
To address this, we propose two alternative strategies that determine both the partition of the input space and the corresponding leaf values. 
The first strategy (`local', see Section~\ref{subsec:localnew}) partitions the input space as in standard regression trees, and, after each split, sets the values of only the newly constructed leaves by solving MaxRM.
The second strategy (`global', see Section~\ref{subsec:globalnew}) updates, after each split, all other leaf values, too. 
Figure~\ref{fig:partitionnew} compares both approaches visually.
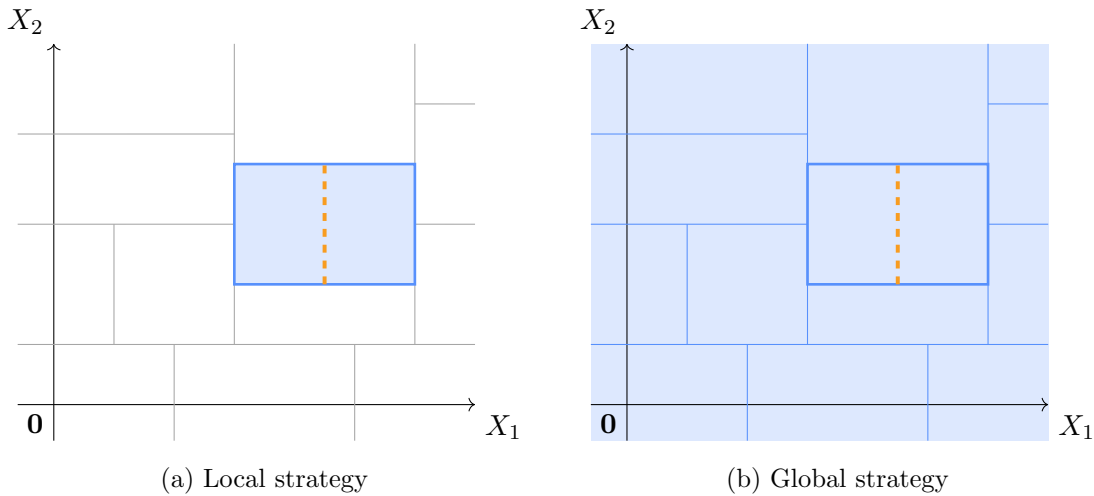
\begin{figure}[t]
  \centering
  \begin{subfigure}{0.5\textwidth}
    \centering
    \begin{tikzpicture}[scale=0.8]
      \def\xmin{-0.6} \def\xmax{7}
      \def\ymin{-0.6} \def\ymax{6}
      \draw[->] (\xmin,0) -- (\xmax,0) node[below right] {$X_1$};
      \draw[->] (0,\ymin) -- (0,\ymax) node[above left] {$X_2$};
      \node[below left] at (0,0) {$\bm{0}$};
      \coordinate (A) at (1,1);
      \coordinate (B) at (2,1);
      \coordinate (C) at (3,1);
      \coordinate (D) at (5,1);
      \coordinate (E) at (6,1);
      \coordinate (F) at (6,2);
      \coordinate (G) at (6,3);
      \coordinate (H) at (6,4);
      \coordinate (I) at (6,5);
      \coordinate (J) at (3,4.5);
      \coordinate (K) at (3,4);
      \coordinate (L) at (3,3);
      \coordinate (M) at (3,2);
      \coordinate (N) at (1,3);
      \draw[gray!70] (\xmin,1) -- (\xmax,1);
      \draw[gray!70] (B) -- (2,\ymin);
      \draw[gray!70] (D) -- (5,\ymin);
      \draw[gray!70] (E) -- (6,\ymax);
      \draw[gray!70] (G) -- (\xmax,3);
      \draw[gray!70] (I) -- (\xmax,5);
      \draw[gray!70] (C) -- (3,\ymax);
      \draw[gray!70] (K) -- (H);
      \draw[gray!70] (M) -- (F);
      \draw[gray!70] (N) -- (A);
      \draw[gray!70] (L) -- (\xmin,3);
      \draw[gray!70] (J) -- (\xmin,4.5);
      \fill[myblue, opacity=0.2] (M) -- (F) -- (H) -- (K) -- cycle;
      \draw[myblue, line width=1pt] (M) -- (F) -- (H) -- (K) -- cycle;
      \draw[dashed, myorange, line width=1.5pt] (4.5,2) -- (4.5,4);
    \end{tikzpicture}
    \caption{Local strategy}\label{fig:partition_local}
  \end{subfigure}\hfill
  \begin{subfigure}{0.5\textwidth}
    \centering
    \begin{tikzpicture}[scale=0.8]
      \def\xmin{-0.6} \def\xmax{7}
      \def\ymin{-0.6} \def\ymax{6}
      \begin{scope}
        \clip (\xmin,\ymin) rectangle (\xmax,\ymax);
        \fill[myblue, opacity=0.2] (\xmin,\ymin) rectangle (\xmax,\ymax);
      \end{scope}
      \draw[->] (\xmin,0) -- (\xmax,0) node[below right] {$X_1$};
      \draw[->] (0,\ymin) -- (0,\ymax) node[above left] {$X_2$};
      \node[below left] at (0,0) {$\bm{0}$};
      \coordinate (A) at (1,1);
      \coordinate (B) at (2,1);
      \coordinate (C) at (3,1);
      \coordinate (D) at (5,1);
      \coordinate (E) at (6,1);
      \coordinate (F) at (6,2);
      \coordinate (G) at (6,3);
      \coordinate (H) at (6,4);
      \coordinate (I) at (6,5);
      \coordinate (J) at (3,4.5);
      \coordinate (K) at (3,4);
      \coordinate (L) at (3,3);
      \coordinate (M) at (3,2);
      \coordinate (N) at (1,3);
      \draw[myblue] (\xmin,1) -- (\xmax,1);
      \draw[myblue] (B) -- (2,\ymin);
      \draw[myblue] (D) -- (5,\ymin);
      \draw[myblue] (E) -- (6,\ymax);
      \draw[myblue] (G) -- (\xmax,3);
      \draw[myblue] (I) -- (\xmax,5);
      \draw[myblue] (C) -- (3,\ymax);
      \draw[myblue] (K) -- (H);
      \draw[myblue] (M) -- (F);
      \draw[myblue] (N) -- (A);
      \draw[myblue] (L) -- (\xmin,3);
      \draw[myblue] (J) -- (\xmin,4.5);
      \draw[myblue, line width=1pt] (M) -- (F) -- (H) -- (K) -- cycle;
      \draw[dashed, myorange, line width=1.5pt] (4.5,2) -- (4.5,4);
    \end{tikzpicture}
    \caption{Global strategy}\label{fig:partition_global}
  \end{subfigure}
  \caption{Comparing, for a two-dimensional covariate space and a fixed partition of the input space, the local (a) and the global strategy (b), see Sections~\ref{subsec:localnew} and~\ref{subsec:globalnew}, respectively. Shaded blue regions are those whose values are updated and the dashed orange line indicates the candidate split. In the local strategy only the child regions of the region being split are updated, whereas in the global strategy the child regions and all other regions are updated jointly.}
  \label{fig:partitionnew}
\end{figure}

\subsubsection{Local leaf value assignments}\label{subsec:localnew}
To determine the next split, we explore candidate regions using a depth-first traversal strategy, as in standard regression trees. Within the region currently being explored, we consider all randomly selected $m_\text{try}\le p$ covariates and their candidate split points. 
For each candidate split of a region into two child regions, we optimize only the two corresponding leaf constants while keeping all other leaf values fixed. If the current partition consists of $M$ regions, when a split creates $M+1$ regions, we construct the parameter vector $\bm{\theta}^\prime\in\mathbb{R}^{M+1}$. We let $\theta^L$ and $\theta^R$ denote the values assigned to the two child regions and determine these by solving, for all $r\in\{\mathrm{MSE},\,\mathrm{NRW},\,\mathrm{Reg}\}$, $\min_{(\theta^L,\,\theta^R)\in\mathbb{R}^2}\,\max_{e\in\mathcal{E}_\text{tr}}\,R_e^r(h_{\bm{\theta}^\prime})$,
analogously to Section~\ref{sec:leaves} (see Appendix~\ref{sec:alternative_alg_local} 
for an alternative method to compute $\theta^L$ and $\theta^R$ that is computationally cheaper when $K\le3$).
We then evaluate the resulting maximum risk across training environments and select the split point that yields the largest reduction. We repeat this process until a stopping criterion is met, for example when a maximum depth is reached or the number of observations in a leaf falls below a predefined threshold. We refer to this procedure as MaxRM-RF-local.

\subsubsection{Global leaf value assignments}\label{subsec:globalnew}
We now traverse the current tree using depth-first search (DFS) as in standard regression trees and Section~\ref{subsec:localnew}. At each internal region visited in DFS order, we consider all randomly selected $m_\text{try}\le p$ covariates and their candidate split points. 
For each candidate split point, we solve a convex optimization problem that jointly assigns values to all leaf regions, not just to the region that is being explored: in contrast to Section~\ref{subsec:localnew}, we now perform leaf value assignment globally rather then locally, and update the constants leaf constants solving, for all $r\in\{\mathrm{MSE},\,\mathrm{NRW},\,\mathrm{Reg}\}$, $\min_{\bm{\theta}^\prime\in\mathbb{R}^{M+1}}\,\max_{e\in\mathcal{E}_\text{tr}}\,R_e^r(h_{\bm{\theta}^\prime})$, which, again, can be solved analogously to Section~\ref{sec:leaves}.
We then evaluate the maximum risk across environments, split the region at the point that yields the largest reduction, and the procedure recurs to the subtrees. 
This procedure comes with a much larger computational cost than MaxRM-RF-posthoc (see Section~\ref{sec:exp_variants_comparison} for an empirical comparison).
We refer to it as MaxRM-RF-global.

One can,
of course, also deviate from DFS and consider all candidate split points across all regions of the current partition. Then, for each candidate, we could solve an optimization problem to update the prediction values of all regions. We would then select both the region and the split point within that region that achieve the largest reduction in maximum risk across environments, split the corresponding region, update the leaf values globally, and repeat this process until no further improvement is possible or a stopping criterion is reached. This strategy, which we call MaxRM-RF-global-NonDFS, prioritizes the splits that provide the greatest improvement in maximum risk and that could be ignored using the DFS traversal order. In our experiments, however, we found that MaxRM-RF-global-NonDFS does not improve over the performance of MaxRM-RF-global but comes with 
an even 
higher computational costs (see Section~\ref{sec:exp_variants_comparison}), so we do not propose to use this in practice. 

\subsection{Choosing tree weights} \label{sec:weights}
Instead of equally weighting the predictions of each tree with $w_b = \tfrac{1}{B}$ for all $b \in \{1, \ldots, B\}$, we can assign the trees different weights according to the MaxRM objective.
Random forest ensembles with non-uniform weights have been proposed before \citep[e.g.,][]{Li10, Beck24}, but, to our knowledge, not in a distribution generalization context. 

Holding the leaf values $\bm{\theta}^{(1)}, \ldots, \bm{\theta}^{(B)}$ fixed, we can choose the weights $\bm{w}$ by optimizing
\begin{equation*}%
    \min_{{\bm{w} \in \Delta_B}}\, \max_{e \in \mathcal{E}_\text{tr}}\, R_e^\text{MSE}(f_{\bm{c},\bm{w}}),
\end{equation*}
where the MSE in environment $e \in \mathcal{E}_\text{tr}$ is given by
$$R_e^\text{MSE}(f_{\bm{c},\bm{w}}) = \mathbb{E}_e\left[ \left(Y^e - \sum_{b=1}^B w_b \, h_{\bm{\theta}^{(b)}}(X^e) \right)^2 \right].$$
Equivalently, we can write the optimization problem in epigraph form and solve it using interior-point methods after plugging in the empirical counterpart of $R_e^\text{MSE}$.
The optimization problem can be formulated analogously with the negative reward or the regret.

In practice, we randomly split the sample into two subsets, ensuring that all environments are represented in both.
We use the first subset to train either a standard random forest or one of the MaxRM-RF variants (post-hoc, local, or global), and the second subset to optimize the weights~$\bm{w}$, while keeping the partition of the input space and the leaf values of each tree fixed. 

If we train a standard RF and then optimize only the tree weights $\bm{w}$, we refer to this as MaxRM-RF-w. If, instead, we first train MaxRM-RF using one of the strategies in Sections~\ref{sec:leaves} or \ref{sec:partition} (post-hoc, local, or global) and then optimize the weights, we denote the resulting methods MaxRM-RF-posthoc-w, MaxRM-RF-local-w, and MaxRM-RF-global-w, respectively.

\subsection{Further details on optimization}\label{sec:algorithms_posthoc}
As we have argued in Section~\ref{sec:leaves},%
~\eqref{eq:obj_posthoc} 
can be solved using interior-point methods. However, the optimization may be challenging in practice when the number of leaves is large (e.g., because the sample size is large or the minimum number of observations per leaf is small) or when the number of environments---and hence the number of constraints---is large. We propose two alternatives to resort to when interior-point methods fail to terminate: an adaptation of the extragradient method \citep{Korpelevich1976} for leaf value optimization %
and a block‑coordinate descent approach.
We provide a detailed description of how these methods can be adapted to our problem in Appendices~\ref{sec:extragradient} and~\ref{sec:bcd}.
We evaluate their performance in %
Appendix~\ref{sec:algorithms_posthoc_evaluation} 
and find that the extragradient method achieves results comparable to interior-point methods, while block-coordinate descent performs %
only slightly less accurately for the chosen block size. 
In practice, in our experiments
we first attempt to solve~\eqref{eq:obj_posthoc} using interior-point methods (e.g., ECOS; \citealp{Domahidi2013}) and only when these solvers fail to terminate we resort to the extragradient method (initialized with the RF solution) or the block-coordinate descent approach. 
In Appendix~\ref{sec:alternative_alg_local}, we discuss faster computation of local split parameters for MaxRM-RF-local when the number of training
environments is small.

\section{Consistency of leaf value assignment
in MaxRM-RF-posthoc
}\label{sec:consistency_maxrm_rf}
We focus on MaxRM random forest with trees constructed using the post-hoc 
adjustment strategy (Section~\ref{sec:leaves}). Specifically, each tree is
trained on a bootstrap sample of the whole training data and its leaf values are re-optimized by solving the empirical version of~\eqref{eq:obj_posthoc}, which extends analogously to the negative reward and the regret. We now establish that, for each tree,
the assigned leaf values are consistent estimators of their population counterparts.

Let therefore $(S_e)_{e\in\mathcal{E}_\text{tr}}$ denote the
samples from each training environment, that is, for all $e\in\mathcal{E}_\text{tr}$, $S_e\coloneqq\{(\bm{x}_i^e,y_i^e)\}_{i=1}^{n_e}$. We draw a bootstrap sample of size $n$ from $\cup_{e\in\mathcal{E}_\text{tr}}S_e$ and denote by $\{S_e^*\}_{e\in\mathcal{E}_\text{tr}}$ the resulting samples in each environment, that is, for all $e\in\mathcal{E}_\text{tr}$, $S_e^*\coloneqq\{(\bm{x}_j^{e,*},y_j^{e,*})\}_{j=1}^{s_e}$,  with $\sum_{e\in\mathcal{E}_\text{tr}}s_e=n$. 
As introduced in Section~\ref{sec:maxrm_rf},
let $h_{\bm{\theta}}$ be a tree with fixed partition $\mathcal{L}$ and vector of leaf values $\bm{\theta}$; we denote the parameter space for $\bm{\theta}$
 by 
$\Theta\subseteq\mathbb{R}^T$. %
For all $\bm{\theta}\in\Theta$ and $e\in\mathcal{E}_\text{tr}$, the population and empirical MSE are
\begin{equation*}
    R_e^\text{MSE}(h_{\bm{\theta}}) = \mathbb{E}_e[(Y^e - h_{\bm{\theta}}(X^e))^2], \qquad \hat{R}_e^{\text{MSE},*}(h_{\bm{\theta}}) = \frac{1}{s_e\vee1} \sum_{j=1}^{s_e} (y_j^{e,*} - h_{\bm{\theta}}(\bm{x}_j^{e,*}))^2.
\end{equation*}
Analogous definitions apply for the negative reward and the regret. We define, for all $r\in\{\text{MSE, NRW, Reg}\}$ and $\bm{\theta}\in\Theta$, the 
maximum population risk and the maximum empirical risk across all training environments, respectively, as 
\begin{equation*}
    R^r(h_{\bm{\theta}}) \coloneqq \max_{e \in \mathcal{E}_\text{tr}}\, R_e^r(h_{\bm{\theta}}), \qquad 
    \hat{R}^{r,*}(h_{\bm{\theta}}) \coloneqq \max_{e \in \mathcal{E}_\text{tr}}\, \hat{R}_e^{r,*}(h_{\bm{\theta}}).
\end{equation*}
For all $r\in\{\text{MSE, NRW, Reg}\}$, the set of \emph{population minimizers} under risk $R^r$ and the set of \emph{post-hoc adjustment estimators} under risk $\hat{R}^{r,*}$ are defined, respectively, as
\begin{equation*}%
    \Theta^r_0\coloneqq\arg\min_{\bm{\theta}\in\Theta}\,R^r(h_{\bm{\theta}}),
    \qquad\hat{\Theta}^{r,*}\coloneqq\arg\min_{\bm{\theta}\in\Theta}\,\hat{R}^{r,*}(h_{\bm{\theta}}).
\end{equation*}
Generally, these sets contain more than one 
vector. 
To state consistency, we thus introduce the notion of deviation between two sets.
    For sets $A,\,B\subseteq\mathbb{R}^T$ and for all $\bm{x}\in\mathbb{R}^T$, the distance from $\bm{x}$ to $B$ is defined as
        $\mathrm{dist}(\bm{x},\,B)\coloneqq\inf_{\bm{x}^\prime\in B}\, \lVert\bm{x}-\bm{x}^\prime\rVert_2^2$.
 We define the \emph{deviation}\footnote{The deviation is also known as the \emph{directed Hausdorff distance}: the Hausdorff distance is defined as $d_H(A,B) \coloneqq \max\{d_{\subset}(A,B),\, d_{\subset}(B,A)\}$.} from $A$ to $B$ as
    \begin{equation*}
        d_{\subset}(A,\,B)\coloneqq\sup_{\bm{x}\in A}\,\mathrm{dist}(\bm{x},\,B).
    \end{equation*}
For all $r\in\{\text{MSE, NRW, Reg}\}$, we show that the post-hoc adjustment estimators $\hat{\Theta}^{r,*}$ are consistent under the following assumptions. 

\begin{assumption}[Compactness of the parameter space]\label{ass:compact_parameter_space} 
    $\Theta$ is compact.
\end{assumption}

\begin{assumption}[Bounded outputs]\label{ass:bdd_output}
    There exists a constant $V < \infty$ such that, for all $e \in \mathcal{E}_{\textnormal{tr}}$, $|Y^e| \leq V$ holds almost surely.
\end{assumption}

\begin{assumption}[Non-vanishing environment proportions]\label{ass:nonvanishing_prop}
    There exist  $(p_e^\star)_{e\in\mathcal{E}_\textnormal{tr}}$ such that, for all $e\in\mathcal{E}_\textnormal{tr}$, $p_e^\star\in(0,1]$,  $\sum_{e\in\mathcal{E}_\textnormal{tr}} p_e^\star=1$, and, for all $e\in\mathcal{E}_\textnormal{tr}$, $n_e/n\to p_e^\star$ as $n\to\infty$.
\end{assumption}
The following Theorem~\ref{thm:consistency} shows that solving the empirical optimization problem with 
the post-hoc adjustment strategy yields a set of values that converges in probability to the set of minimizers of the corresponding population version of the problem.
In addition, the risks converge, too.
A proof can be found in Appendix~\ref{sec:proof_consistency}.
\begin{theorem}[Consistency of post-hoc adjustment estimators]\label{thm:consistency} 
    Let Assumptions~\ref{ass:compact_parameter_space}, \ref{ass:bdd_output} and \ref{ass:nonvanishing_prop} hold true. 
    Then, for all $r\in\{\textnormal{MSE},\, \textnormal{NRW},\, \textnormal{Reg}\}$,
    \begin{equation*}
        d_{\subset}(\hat{\Theta}^{r,*},\,\Theta^r_0) \,\xrightarrow{p} \,0\quad \text{as } n\to\infty,
    \end{equation*}
    In addition, for all $r\in\{\textnormal{MSE},\, \textnormal{NRW},\, \textnormal{Reg}\}$, $\hat{\bm{\theta}}^{r,*}\in\hat{\Theta}^{r,*}$, and $\bm{\theta}^r_0\in\Theta^r_0$, 
    \begin{equation*}
        R^r(h_{\hat{\bm{\theta}}^{r,*}})\,\xrightarrow{p}\,R^r(h_{\bm{\theta}^r_0}) \quad \text{as } n\to\infty.
    \end{equation*}
\end{theorem}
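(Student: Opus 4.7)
The plan is to reduce both statements to a single uniform convergence result on $\Theta$ and then invoke a standard argmin-consistency argument. Concretely, I would first establish that, for each $r\in\{\textnormal{MSE},\,\textnormal{NRW},\,\textnormal{Reg}\}$,
\begin{equation*}
\sup_{\bm{\theta}\in\Theta}\,\lvert\hat{R}^{r,*}(h_{\bm{\theta}}) - R^r(h_{\bm{\theta}})\rvert\,\xrightarrow{p}\,0.
\end{equation*}
Since the maximum over the finitely many environments in $\mathcal{E}_\textnormal{tr}$ preserves uniform convergence, it suffices to show the analog per environment. On the population side, $\bm{\theta}\mapsto R_e^r(h_{\bm{\theta}})$ is a convex quadratic (up to an environment-specific additive constant for NRW and Reg), hence continuous; the maximum over $K$ environments is therefore continuous on the compact set $\Theta$ (Assumption~\ref{ass:compact_parameter_space}), so $\Theta^r_0$ is a nonempty compact subset of $\Theta$ and $R^r(h_{\bm{\theta}^r_0})$ does not depend on the particular $\bm{\theta}^r_0\in\Theta^r_0$.

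Given this uniform convergence and the continuity of $R^r$, both conclusions follow from a subsequence argument. For $d_{\subset}(\hat{\Theta}^{r,*},\Theta^r_0)\xrightarrow{p} 0$, I would argue by contradiction: if the deviation fails to vanish in probability, pick a subsequence along which some empirical minimizer $\hat{\bm{\theta}}^{r,*}_n$ satisfies $\mathrm{dist}(\hat{\bm{\theta}}^{r,*}_n,\Theta^r_0)\ge\epsilon$ on events of non-vanishing probability; passing to a further almost-surely convergent subsequence (available from uniform convergence in probability together with compactness of $\Theta$) yields a limit $\bm{\theta}_\infty$ with $R^r(h_{\bm{\theta}_\infty})>\min_{\Theta} R^r$. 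The uniform convergence together with the empirical optimality $\hat{R}^{r,*}(h_{\hat{\bm{\theta}}^{r,*}_n})\le\hat{R}^{r,*}(h_{\bm{\theta}^r_0})$ and continuity of $R^r$ then force $R^r(h_{\bm{\theta}_\infty})\le\min_{\Theta} R^r$, a contradiction. The risk convergence is immediate from
\begin{equation*}
\lvert R^r(h_{\hat{\bm{\theta}}^{r,*}})-R^r(h_{\bm{\theta}^r_0})\rvert\le 2\sup_{\bm{\theta}\in\Theta}\,\lvert\hat{R}^{r,*}(h_{\bm{\theta}})-R^r(h_{\bm{\theta}})\rvert,
\end{equation*}
using $\hat{R}^{r,*}(h_{\hat{\bm{\theta}}^{r,*}})=\min_{\Theta}\hat{R}^{r,*}$ and $R^r(h_{\bm{\theta}^r_0})=\min_{\Theta} R^r$.

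The main obstacle will be establishing the per-environment uniform convergence in the presence of the bootstrap. Conditionally on the full training sample, the counts $(s_e)_{e\in\mathcal{E}_\textnormal{tr}}$ are multinomial with probabilities $(n_e/n)$ that tend to $(p_e^\star)$ by Assumption~\ref{ass:nonvanishing_prop}, so $s_e/n\xrightarrow{p} p_e^\star$ and in particular $s_e\to\infty$ in probability. The loss class $\{(\bm{x},y)\mapsto(y-h_{\bm{\theta}}(\bm{x}))^2:\bm{\theta}\in\Theta\}$ is uniformly bounded under Assumptions~\ref{ass:compact_parameter_space} and~\ref{ass:bdd_output} and Lipschitz in $\bm{\theta}$ thanks to the piecewise-constant structure of $h_{\bm{\theta}}$, hence Glivenko--Cantelli; a two-step triangle argument---bootstrap average to average under the environment-$e$ empirical distribution, then empirical-distribution average to $P_e$-average, both uniformly in $\bm{\theta}\in\Theta$---delivers the MSE case. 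For NRW and Reg the empirical risks differ from their MSE counterparts by an additive constant $\hat{c}^*_e$ independent of $\bm{\theta}$ (cf.\ Remark~\ref{rmk:multiple_risks}); provided $\hat{c}^*_e\xrightarrow{p} c_e$ (a direct LLN for NRW, and a consistent plug-in estimator of the oracle risk for Reg, which I treat as implicit in the definition of $\hat{R}^{\textnormal{Reg},*}$), this shift is uniform in $\bm{\theta}$ and the MSE argument carries over. A subtlety to be careful with, beyond the bootstrap, is the genuine non-uniqueness of the argmin (Remark~\ref{rem:indeterm}), which is precisely why the statement is phrased in terms of the directed Hausdorff deviation rather than pointwise convergence of a unique minimizer.
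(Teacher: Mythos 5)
Your proposal is correct and follows essentially the same architecture as the paper's proof: reduce everything to the uniform convergence $\sup_{\bm{\theta}\in\Theta}\,\lvert\hat{R}^{r,*}(h_{\bm{\theta}})-R^r(h_{\bm{\theta}})\rvert\xrightarrow{p}0$ via a per-environment, two-step triangle inequality through the non-bootstrapped empirical risk, then deduce risk convergence from the standard $2\sup$ bound and set convergence from argmin consistency; your treatment of NRW as a $\bm{\theta}$-independent shift and of Reg via consistency of the plug-in infimum also matches the paper's handling (the paper bounds $\lvert\inf_g R_e^{\mathrm{MSE}}(g)-\inf_g\hat R_e^{\mathrm{MSE}}(g)\rvert$ by the same uniform deviation, which is exactly your ``implicit plug-in'' step made explicit). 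Two remarks on where the executions diverge. First, the step you label as the main obstacle---per-environment uniform convergence under the bootstrap---is where the paper does its real quantitative work: symmetrization to empirical Rademacher complexities, the contraction lemma using $2(V+W)$-Lipschitzness of the quadratic loss (Lemma~\ref{lemma:lipschitz}), the explicit bound $W\sqrt{T/m}$ for the piecewise-constant class (Lemma~\ref{lemma:rademacher}), and a Chernoff bound on $\mathbb{E}_e[1/\sqrt{s_e\vee1}]$ for the binomial bootstrap counts; your appeal to ``bounded $+$ Lipschitz $\Rightarrow$ Glivenko--Cantelli'' is correct for a compactly parameterized class but leaves this quantitative content unproved, and a complete write-up would need to supply it (or an equivalent bracketing/covering argument). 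Second, for $d_{\subset}(\hat{\Theta}^{r,*},\Theta^r_0)\xrightarrow{p}0$ you use a subsequence-and-compactness contradiction, whereas the paper argues directly: it introduces the $\epsilon$-enlargement $\Theta^r_{0,\epsilon}$, uses compactness and continuity to get a positive gap $\delta_\epsilon=\inf_{\Theta\setminus\Theta^r_{0,\epsilon}}R^r-R^r_{\min}>0$, and shows that on the event where the uniform deviation is at most $\eta<\delta_\epsilon/3$ no empirical minimizer can lie outside $\Theta^r_{0,\epsilon}$. Both arguments are valid; the paper's avoids the measurability and diagonal-extraction bookkeeping your random-subsequence argument would require, at the cost of introducing the gap explicitly---which your contradiction argument implicitly relies on anyway (via continuity of $R^r$ on the compact set $\Theta\setminus\Theta^r_{0,\epsilon}$).
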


\section{Simulation experiments}\label{sec:sim_experiments}
We now evaluate the performance of our proposed methods on simulated data. All results are fully reproducible using the code available at \url{https://github.com/francescofreni/nldg.git}. Appendix~\ref{sec:additional_experiments} reports further experiments.

We construct training environments by sampling from randomized data generating processes, with and without shifts in the covariate distributions $P^X_e$ across environments. In principle, out-of-distribution performance can be evaluated under many possible shifts across training and test environments. Here, we focus on the convex hull $\mathcal{P}_\text{CVXH}$ of the training environment distributions as the test distributions, as defined in Equation~\eqref{eq:gdro_uncertainty_set}. This choice
is natural in our setting for two reasons. First, Theorem~\ref{thm:gdro_equivalence} shows that minimizing the worst-case risk over $\mathcal{P}_\mathrm{CVXH}$ is equivalent to solving the MaxRM problem in Equation~\eqref{eq:prob_gdro}. Since MaxRM-RF solves this problem, 
no
competing method achieves (in population) a strictly lower worst-case risk on this uncertainty set. Second, the same theorem implies that it suffices to evaluate the worst-case risk on the extreme points of $\mathcal{P}_\mathrm{CVXH}$, namely the training environments themselves. We therefore estimate worst-case performance using new independent samples drawn from these environments.

Section~\ref{sec:exp_variants_comparison} compares the different MaxRM-RF training strategies introduced in Section~\ref{sec:maxrm_rf}. Section~\ref{sec:comparison_gdro} then benchmarks MaxRM-RF against standard random forests \citep{Breiman2001}, group DRO \citep{Sagawa2020}, and magging \citep{Buhlmann2016}.

\subsection{Comparison of different strategies for MaxRM random forest}\label{sec:exp_variants_comparison}
\paragraph{Goal.} We compare the alternative strategies proposed for training MaxRM-RF against each other and against the standard random forest (RF). Here, we evaluate methods in terms of the maximum and pooled mean squared error (MSE) across environments, which allows us to study the trade-off between minimizing the worst-case and the average risk.

\paragraph{Experiment description.} We generate $1{,}000$ training observations, evenly divided across three environments $\mathcal{E}_\text{tr}=\{e_1,\,e_2,\,e_3\}$. For all $e\in\mathcal{E}_\text{tr}$, the covariate $X^e \in \mathbb{R}$ is sampled independently and uniformly from the interval $[-4,\,4]$, and the response $Y^e$ satisfies $$Y^e=\alpha_eX^e\mathbf{1}_{\{X^e\le0\}}+\beta_eX^e\mathbf{1}_{\{X^e>0\}}+\epsilon^e,$$ where $\epsilon^e\sim\mathcal{N}(0,\,\sigma^2)$, with $\sigma=1/2$. The parameters $(\alpha_e,\,\beta_e)$ differ across environments: we use $(\alpha_1,\,\beta_1)=(-1/2,\,4)$, $(\alpha_2,\,\beta_2)=(3,\,1/2)$ and $(\alpha_3,\,\beta_3)=(5/2,\,1)$ for $e_1,\,e_2$ and $e_3$, respectively. An independent test set is generated in the same way. The oracle solutions minimizing the maximum MSE across environments and the pooled MSE correspond to the slope parameters $(\alpha^*,\,\beta^*) = (5/4,\,9/4)$ and $(\alpha_{\mathrm{pool}}^*,\,\beta_{\mathrm{pool}}^*) = (5/3,\,11/6)$, respectively.
Each full experiment---data generation, training, and evaluation---is repeated $20$ times with different random seeds. We report the mean maximum MSE across environments, the pooled MSE, and the runtime\footnote{The runtime is measured on an Apple M4 Pro CPU (14 cores) using 10 parallel workers.}, with $95\%$ confidence intervals.

\paragraph{Methods.}
We compare the following methods:
\textbf{RF} (standard random forest),
    \textbf{MaxRM-RF-posthoc} (RF with post-hoc adjustment, see Section~\ref{sec:leaves}),
\textbf{MaxRM-RF-local} (local tree construction strategy, see Section~\ref{subsec:localnew}),
\textbf{MaxRM-RF-global} and \textbf{MaxRM-RF-global-NonDFS} (global tree construction strategies using the best-reduction heuristic and DFS, resp., see Section~\ref{subsec:globalnew}),
\textbf{MaxRM-RF-w} (RF with non-uniform weights, see Section~\ref{sec:weights}), and 
    \textbf{MaxRM-RF-\{posthoc/local/global\}-w} (MaxRM-RF-posthoc, -local, -global with non-uniform weights, see Section~\ref{sec:weights}).
All forests use $100$ trees and a minimum leaf size of $15$. We use CLARABEL \citep{Goulart2024} as the interior-point solver to solve the convex optimization problems described in Section~\ref{sec:maxrm_rf}. 

\paragraph{Results.}
Table~\ref{tab:sim_diff_methods} shows the maximum MSE across environments and the pooled MSE, with the $95\%$ confidence intervals based on the $20$ repetitions. RF achieves the lowest pooled MSE, as it minimizes the average prediction error. Although MaxRM-RF-local reduces the maximum MSE across training environments compared to RF, jointly updating all the leaf values with the global strategies---MaxRM-RF-global and MaxRM-RF-global-NonDFS---leads to further improvement and achieves the closest performance to the oracle that minimizes the maximum MSE
(the leaf-value assignment is a global procedure with the MaxRM objective). 
The post-hoc adjustment performs similarly to the global variants while being computationally significantly cheaper. In contrast, variants using non-uniform weights for averaging the trees underperform relative to their uniform-weight counterparts:
the potential benefit of non-uniform weights 
comes with the
disadvantage of having access to only 70\% of the training data for fitting the trees.

In the simulation settings we consider in this paper,
MaxRM-RF-posthoc offers the best balance between accuracy and computational cost. It performs comparably to the global variants while requiring only a fraction of the runtime. Therefore, we use MaxRM-RF-posthoc in all subsequent experiments and refer to it simply as MaxRM-RF.

\begin{table}[t]
    \centering
    \caption{Comparison of random forest-based methods in terms of maximum and pooled MSE across training environments, and runtime. Reported values are the sample mean $\pm$ half-length of the $95\%$ confidence interval across 20 repetitions. Both the global and post-hoc variants approach the oracle solution 
    minimizing the maximum MSE across training environments,
    with post-hoc adjustment being fastest.}
    \begin{tabular}{lccc}
        \toprule
        Method & Maximum MSE & Pooled MSE & Runtime (s)\\
        \midrule

       \multicolumn{4}{l}{\textbf{Baseline}}\\
        RF & $24.88 \pm 0.46$ & $12.82 \pm 0.23$ & $0.08 \pm 0.01$ \\
        \addlinespace[0.7ex]
        \midrule
        
        \multicolumn{4}{l}{\textbf{Uniform weights}}\\
        MaxRM\text{-}RF\text{-}posthoc & $16.75 \pm 0.32$ & $13.65 \pm 0.24$ & $1.03 \pm 0.03$ \\
        MaxRM\text{-}RF\text{-}local & $18.06 \pm 0.34$ & $14.71 \pm 0.25$ & $7.27 \pm 0.11$ \\
        MaxRM\text{-}RF\text{-}global & $16.54 \pm 0.28$ & $13.75 \pm 0.24$ & $38.56 \pm 1.00$ \\
        MaxRM\text{-}RF\text{-}global\text{-}NonDFS & $16.55 \pm 0.29$ & $13.74 \pm 0.24$ & $108.88 \pm 3.09$ \\
        \addlinespace[0.7ex]
        \midrule
        
        \multicolumn{4}{l}{\textbf{Non-uniform weights}}\\
        MaxRM\text{-}RF\text{-}w & $20.90 \pm 0.62$ & $13.50 \pm 0.28$ & $0.70 \pm 0.17$ \\
        MaxRM\text{-}RF\text{-}posthoc\text{-}w & $17.12 \pm 0.43$ & $13.89 \pm 0.27$ & $1.73 \pm 0.36$ \\
        MaxRM\text{-}RF\text{-}local\text{-}w & $18.05 \pm 0.28$ & $14.74 \pm 0.24$ & $6.56 \pm 0.33$ \\
        MaxRM\text{-}RF\text{-}global\text{-}w & $17.24 \pm 0.52$ & $14.01 \pm 0.31$ & $25.94 \pm 0.88$ \\
        MaxRM\text{-}RF\text{-}global\text{-}NonDFS\text{-}w & $17.11 \pm 0.46$ & $13.97 \pm 0.30$ & $65.52 \pm 2.09$ \\
        \midrule

        \multicolumn{4}{l}{\textbf{Oracle solutions}}\\
        Oracle (for maximum MSE) & $16.58$ & $13.92$ & $-$\\
        Oracle (for pooled MSE) & $25.29$ & $12.99$ & $-$\\
        \addlinespace[0.7ex]

        \bottomrule
\end{tabular}
    \label{tab:sim_diff_methods}
\end{table}

\subsection{Comparison with group DRO and magging}\label{sec:comparison_gdro}
We simulate data $(X^e, Y^e) \in \mathbb{R}^p \times \mathbb{R}$ from $K\in\mathbb{N}$ training environments $e \in \mathcal{E}_\mathrm{tr} = \{e_1, \ldots, e_K\}$. As before, 
we use the new independent observations from the training environments as the test data. Since for larger numbers of environments $K$ there is more potential for the worst environment to differ from the others, we vary $K$ in the experiments.

We examine three types of shifts across environments. First, Section~\ref{sec:exp_without_shifts} allows shifts only through the 
the conditionals $P^{Y|X}_e$, %
keeping the marginal distributions fixed ($P^X_e = P^X$). 
In this setting, magging is similar to DRoL (distributionally robust learning) proposed by \citet{Wang2025}, which combines environment-specific predictors using weights estimated from (unlabeled) covariates in the test environment.
Section~\ref{sec:exp_with_shifts} allows both the conditionals $P^{Y|X}_e$ and the marginals $P^X_e$ to vary across environments, while keeping a common support $[-1,1]^p$ for $P^X_e$ in all environments. Finally, Section~\ref{sec:exp_identical_env} considers the case in which all environments are statistically identical (for all $e\in\mathcal{E}_\text{tr}$, $P_e = P$). 

For all three simulation settings and all environments $e \in \mathcal{E}_\mathrm{tr}$, the response satisfies 
$$Y^e = f^e(X^e) + \epsilon^e, \quad \epsilon^e \sim \mathcal{N}(0,\sigma^2).$$
In the first two settings, each environment-specific regression function $f^e : [-1,1]^p \to \mathbb{R}$ is drawn independently from a zero-mean Gaussian process prior with squared-exponential kernel, that is, 
$$ f^e \sim \mathcal{GP}\big(0,\,k_\mathrm{sq}\big), \qquad 
k_\mathrm{sq}(x,\,x') = \exp\!\left(-\frac{\lVert x-x'\rVert^2_2}{2\ell^2}\right), $$
with length-scale $\ell \in \mathbb{R}$. Thus, for different environment-specific regression functions $f^e$, the conditional distributions $P^{Y | X}_e$ change across environments.
In the third setting, a single function is drawn from the same distribution but used for all environments.

In this section, we evaluate all methods in terms of the maximum MSE. Section~\ref{sec:app_comparison_grdo_magging_diffrisks} in the appendix shows qualitatively similar results for the negative reward and the regret. To stay in the setting of Theorem~\ref{thm:gdro_equivalence}, we use the same maximum risk (MSE, negative reward, or regret) for evaluation that is used for training the methods based on MaxRM.

\subsubsection{Shifts only in \texorpdfstring{$P^{Y|X}_e$}{PY|Xe}}\label{sec:exp_without_shifts}

\paragraph{Experiment description.}
We consider between $K=2$ and $K=8$ training environments. From each environment, there are $n=2{,}000$ training and $2{,}000$ additional evaluation observations with $p=5$ covariates. The covariates 
are drawn 
as $X^e_{i,j} \stackrel{\text{i.i.d.}}{\sim} \mathrm{Unif}([-1,1])$ ($i\in\{1,\,\dots,\,n\}$, $j\in\{1,\,\dots,\,p\}$),
and responses are generated according to the Gaussian process model described above. The kernel length-scale and the noise level are fixed at $\ell = 1/2$ and $\sigma = 1/4$, respectively.

Group DRO is implemented via the nested gradient-based algorithm of \citet[Algorithm 1]{Sagawa2020} using a neural network predictor. We adapt the implementation and neural network architecture 
used by
\citet{Wang2025}
to optimize both model parameters and group weights using mini-batch stochastic gradient descent instead of full-batch gradient descent.
For magging, we use random forests as the base estimators in each environment. Unlike \citet{Buhlmann2016}, who minimize the maximum negative reward, we minimize the maximum MSE, ensuring a fair comparison when evaluating the maximum MSE. All random-forest-based models (RF, MaxRM-RF, and the environment-specific forests in magging) use $B=100$ trees and a minimum leaf size of 30. Appendix~\ref{sec:exp_hyperparams} investigates the effect of different hyperparameter choices.
We report the average maximum MSE across environments, together with confidence intervals over 100 repetitions. 

\paragraph{Results.}
Figure~\ref{fig:maxmse_noshift} 
shows the maximum MSE across test environments when the covariate distribution is identical across environments. Group DRO performs worst, exhibiting the largest MSE for all numbers of environments. Both magging and MaxRM-RF substantially improve upon RF and group DRO, achieving similarly low maximum MSE values. 
This is to be expected, as in the setting without changes in covariate distribution, both magging and MaxRM-RF minimize the maximum risk (in population). 
We believe that the fact that MaxRM-RF performs slightly better than magging is due to the avoidance of sample splitting (see also Section~\ref{sec:exp_identical_env}).
\begin{table}[t]
\centering
\begin{tabular}{cc}
\begin{minipage}{0.48\linewidth}
    \centering
    \begin{figure}[H]
        \centering
        \includegraphics[width=\linewidth]{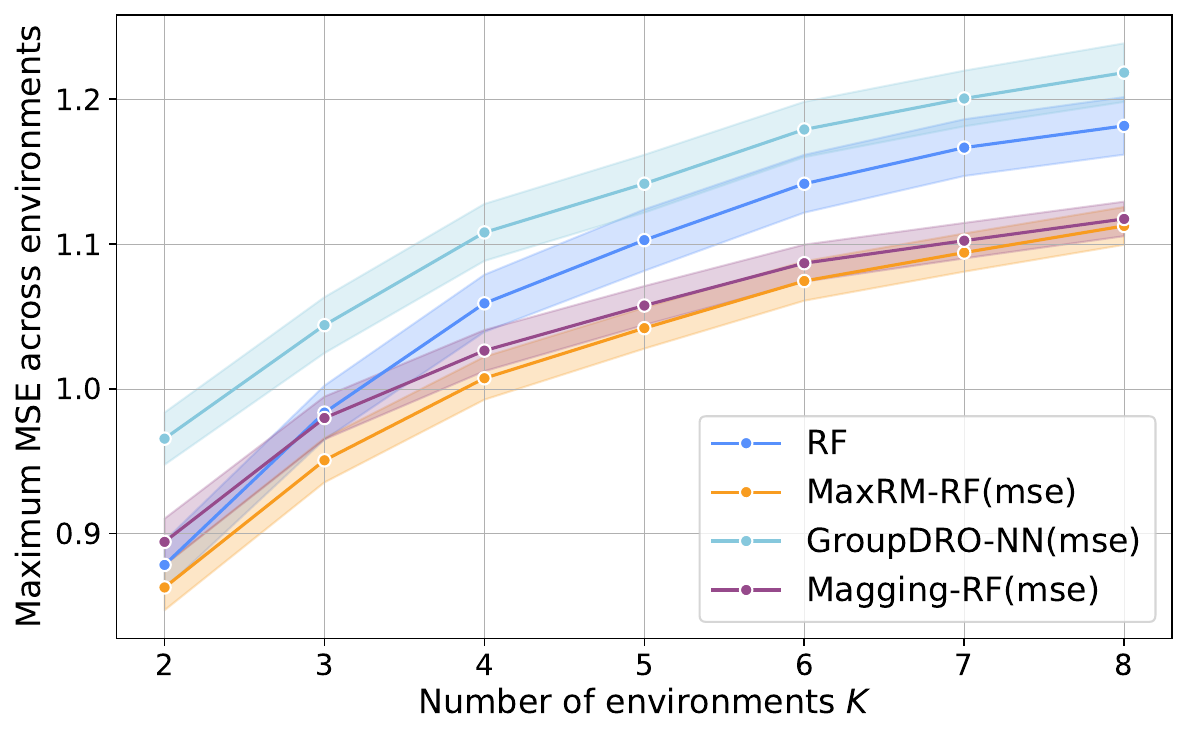}
        \caption{Maximum MSE across environments on test data when $P^X_e = P^X$ for all environments (averaged over 100 repetitions with 95\% confidence intervals). MaxRM-RF and magging perform similarly and outperform group DRO and RF.}
        \label{fig:maxmse_noshift}
    \end{figure}
\end{minipage}
&
\begin{minipage}{0.48\linewidth}
    \centering
    \begin{figure}[H]
        \centering
        \includegraphics[width=\linewidth]{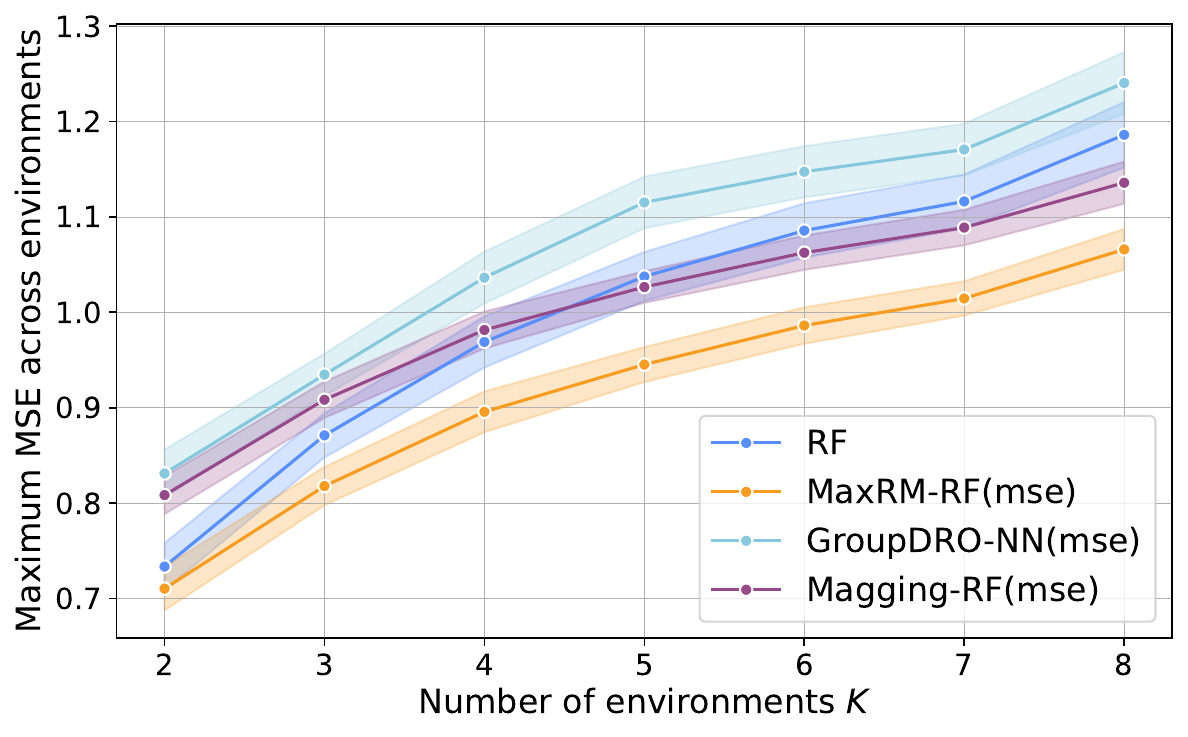}
        \caption{Maximum MSE across environments on test data under shifts in $P^X_e$ (averaged over 100 repetitions with 95\% confidence intervals). MaxRM-RF exhibits the best performance across all numbers of environments.}
        \label{fig:maxmse_shift}
    \end{figure}
\end{minipage}

\end{tabular}
\end{table}

\subsubsection{Shifts in both \texorpdfstring{$P^{Y|X}_e$}{PY|Xe} and \texorpdfstring{$P^X_e$}{PXe}}\label{sec:exp_with_shifts}

\paragraph{Experiment description.} The setup matches the previous experiment, except that 
the environments now have different marginal distributions
$P^X_e$. Here, each marginal distribution $P_e^X$ corresponds to a different Beta distribution. In particular, for each environment $e$, we 
draw parameters $ \alpha_e,\, \beta_e \stackrel{\text{i.i.d.}}{\sim} \mathrm{Unif}([1/2, \, 5/2])$, and sample 
$$ U^{e}_{i,j} \stackrel{\text{i.i.d.}}{\sim} \mathrm{Beta}(\alpha_e,\beta_e), 
        \qquad 
        X^{e}_{i,j} = 2 U^{e}_{i,j} - 1 \in [-1,\,1]. $$
The parameters $(\alpha_e,\beta_e)$ are fixed for that environment across all resamplings. 

\paragraph{Results.}
Figure~\ref{fig:maxmse_shift} 
shows the corresponding results.
In this more challenging setting, magging no longer consistently outperforms RF (in Appendix~\ref{app:comparison_magging}, we prove that in this setting with shifts in $P^X_e$, magging does not necessarily minimize the maximum risk across training environments). Group DRO again yields the weakest performance. MaxRM-RF, however, maintains the lowest maximum MSE across all numbers of environments. Among the examined methods, it demonstrates the strongest robustness against shifts in both the marginal distributions $P^X_e$ and the conditionals $P^{Y|X}_e$ across environments.

\subsubsection{Identical environments}
\label{sec:exp_identical_env}
\paragraph{Goal.}
We now study a setting in which all training environments are statistically identical. In this case, the MaxRM objective \eqref{eq:prob_gdro} coincides with the population version of empirical risk minimization (ERM), and the standard random forest minimizing the pooled MSE is optimal among the considered methods. This experiment checks whether there is a price to pay in terms of maximum MSE when focusing on MaxRM instead of ERM if there are no distribution shifts across environments.

\paragraph{Experiment description.}

We use the same data generating process as in Section~\ref{sec:exp_without_shifts}, with the exception that, for each simulation repetition, a single regression function $f \sim \mathcal{GP}(0,k_\mathrm{sq})$ is drawn for all environments. Thus, for all $e\in\mathcal{E}_\text{tr}$, $f^e = f$ and $P_e = P$, and the environments represent an arbitrary partition of i.i.d.\ observations from $P$. 

We use $K=5$ environments and vary the per-environment sample size $n$. RF, MaxRM-RF, group DRO, and magging are trained as in Section~\ref{sec:exp_without_shifts}. All random-forest-based methods use $B=100$ trees and a minimum leaf size of $15$.
For magging, since each environment-specific random forest is trained on only a $1/K$ fraction of the data, we accordingly set their minimum leaf size to $15/K = 3$. For each $n$, we report the average maximum MSE across $100$ repetitions with $95\%$ confidence intervals.

\paragraph{Results.}
Figure~\ref{fig:maxmse_noshift_equalenv} shows the maximum MSE across environments depending on $n$. MaxRM-RF closely matches RF for all considered $n$. This confirms that, in the absence of shifts across environments, we do not lose performance relative to the random forest. However, if there are shifts across environments, substantial improvements are possible as shown in Section~\ref{sec:exp_without_shifts} and Section~\ref{sec:exp_with_shifts}.

In contrast, magging yields higher maximum MSE values. It trains separate random forests in each environment and then reweights them, so each forest is based on a fraction of the whole dataset. 
Group DRO performs worst overall. 

\begin{figure}[t]
    \centering
        \includegraphics[width=0.65\linewidth]{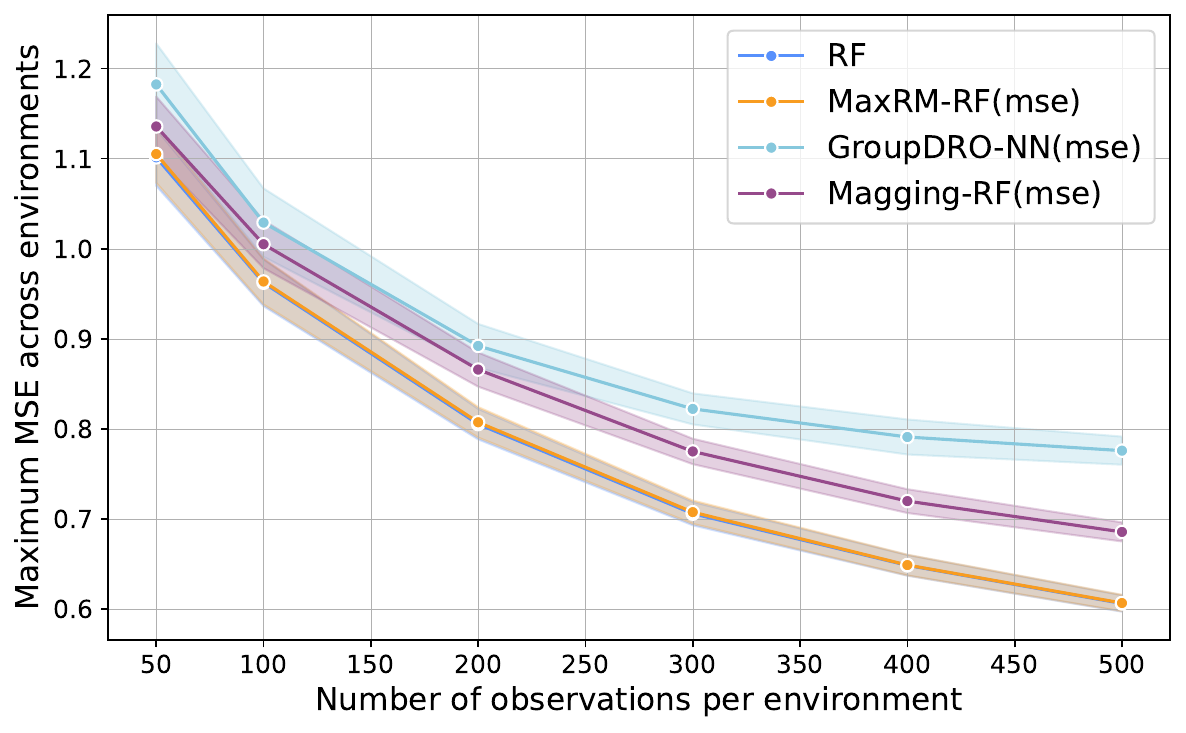}
    \caption{Maximum MSE across test environments when all environments are identical, that is, $P_e = P$ for all $e$. Curves show the average maximum MSE across environments, with shaded areas indicating $95\%$ confidence intervals over $100$ repetitions. 
    Since, in this setting, the MaxRM and ERM objectives coincide (in population), we expect MaxRM-RF and RF to perform similarly. Indeed,
    MaxRM-RF closely matches RF for all sample sizes (the blue line lies mostly underneath the orange line).
    }
    \label{fig:maxmse_noshift_equalenv}
\end{figure}

\section{Application on California housing data}
\label{sec:ca_housing}
We illustrate the behavior of MaxRM-RF on real data by analyzing the California housing dataset \citep{Pace1997}, 
see also 
\citet{Gnecco2024}.
The data contain 20,640 block groups from the 1990 U.S.\ Census, each providing the median house value together with six demographic and housing covariates (median household income, median house age, average number of rooms and bedrooms, population, and household size). 
Local housing markets differ across the state and we treat counties as distinct environments. We divide the 25 largest counties into five spatially coherent folds, as shown in Figure~\ref{fig:ca_map}.
\begin{figure}[t]
    \centering
    \includegraphics[width=0.45\linewidth]{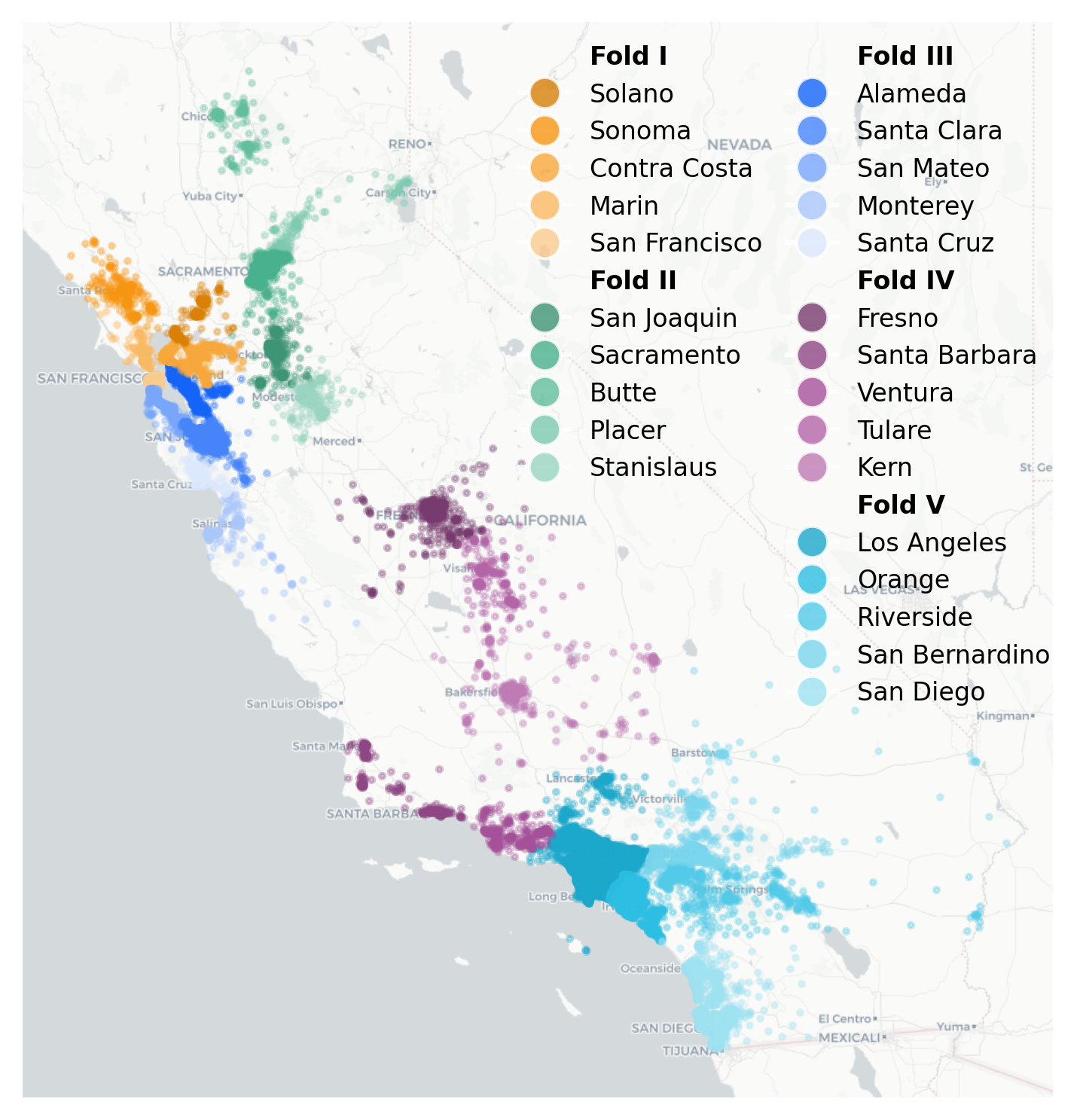}
    \caption{Geographic partition of the California housing dataset into counties. Colors represent the county and cross-validation fold. Map created using OpenStreetMap data and rendered via CARTO. \copyright OpenStreetMap contributors. \copyright CARTO.}
    \label{fig:ca_map}
\end{figure}

We now 
compare MaxRM-RF with post-hoc adjustment (Section \ref{sec:leaves}), 
standard random forests (RF), linear regression (LR), magging, and group DRO on
the following prediction task.
For each fold, all methods are trained on the remaining 20 counties and 
used for prediction
on the five held-out counties. Performance is quantified by the maximum test MSE %
across the held-out counties
and can thus be considered a metric for
worst-case performance under distribution shift.
The results are given in Table~\ref{tab:ca_res} for all methods using the MSE as the risk for training. The results for the other risks are given in Appendix~\ref{app:ca:extended}, Table~\ref{tab:app:ca_res}. For each method, we test whether it significantly improves on RF using a permutation test (gray shading indicates significance), and highlight the best method in bold. 
\begin{table}[t]
    \centering
    \caption{
        Maximum test MSE over five held-out counties. For each held-out environment, the best performing method is highlighted in bold. Cells shaded in gray indicate that the method performs better than RF, testing at a significance level of 0.05 with a permutation test and Bonferroni correction. MaxRM-RF(mse) performs best in four of five folds (significantly for three of five). 
    }
    \label{tab:ca_res}
    \begin{tabular}{lccccc}
    \toprule
    Fold & LR & RF & Magging-RF(mse) & GroupDRO-NN & MaxRM-RF(mse) \\
    \midrule
    I & $11.978$ & $1.269$ & $1.474$ & $1.351$ & \cellcolor{gray!25}\bm{$1.180$} \\ %
    II & $0.605$ & \bm{$0.525$} & $0.950$ & $0.919$ & $0.653$ \\ %
    III & $1.226$ & $0.765$ & $0.754$ & $1.005$ & \cellcolor{gray!25}\bm{$0.586$} \\ %
    IV & $0.962$ & $0.785$ & $1.352$ & $1.366$ & \bm{$0.758$} \\ %
    V & $0.714$ & $0.508$ & $0.944$ & $0.768$ & \cellcolor{gray!25}\bm{$0.490$} \\ %
    \bottomrule
    \end{tabular}
\end{table}

Across the five folds, MaxRM-RF(mse) achieves the lowest worst-case test MSE in four of the five held-out county groups, with three improvements 
being
statistically significant. Unlike RF and linear regression, which minimize the pooled error and do not guard against the worst-case environment, MaxRM-RF(mse) explicitly controls the highest-risk training environments. 
Neither magging nor group DRO improves on RF in this setting (magging is not designed 
to accommodate shifts in the covariate distribution).

Figure~\ref{fig:ca-housing-pairwise} in Appendix~\ref{app:ca:pairwise} provides a pairwise “train-on-A, test-on-B’’ analysis across all counties. Several counties—most notably Marin, San Francisco, and San Mateo—are consistently harder to predict when used as test environments, while diagonal (within-county) errors remain comparatively stable. 
Fold II consists of counties with relatively smaller residuals and prediction errors, suggesting that they do not dominate the maximum-risk objective, which may explain why RF outperforms MaxRM-RF(mse) in that fold.

We also assess robustness under less structured shifts by drawing 200 random test sets of five counties. MaxRM-RF(mse) 
outperforms RF in maximum test MSE in 126 cases (p-value of a binomial test equals 0.0001), indicating that its advantages persist beyond spatially grouped folds. 

In summary, 
both experiments (using spatial folds and randomly chosen folds) indicate that 
the method is able to better
guard against worst-case performance in the presence of heterogeneous county-level distributions than existing methods.

\section{Summary and future work}\label{sec:summary}
We study the regression setting with data collected from multiple and heterogeneous environments. Since ERM can fail in this setting, we consider the maximum risk minimization (MaxRM) framework and aim to maintain robust performance across multiple environments by minimizing the maximum risk over the observed training environments.
We propose MaxRM random forest, a modification of standard random forest that, instead of minimizing the pooled MSE over all observations, aims to 
minimize the maximum risk---MSE, negative reward, or regret---across training environments. Compared to previous group DRO implementations relying on neural networks our method performs better empirically, and,
unlike the magging estimator \citep{Buhlmann2016}, we allow the covariate distribution to vary across environments. We introduce 
different strategies 
to build MaxRM random forests, and, for the post-hoc adjustment strategy, we show that the estimators obtained from the empirical optimization problem converge to the minimizers of the population version of the problem (Theorem~\ref{thm:consistency}). Finally, we adapt the extragradient method and develop a block-coordinate descent approach to solve the post-hoc adjustment problem when interior-point methods fail to terminate. 
We extend existing generalization guarantees and prove consistency results for our method.

We now describe possible directions for future research. First, while Setting~\ref{setting:setting_maxrm} assumes homoscedastic noise within each environment, future work could relax this assumption. Another direction is to increase the flexibility of the predictor: in this work we consider standard regression trees with piecewise-constant leaf predictions; future work could instead consider trees that fit a linear regression model in each leaf \citep{Quinlan1992}; this would yield a more flexible predictor and the resulting optimization problem would remain convex. Moreover, while our current implementation of the extragradient method uses a single step size $\gamma$ for both the minimization and maximization updates (see Algorithm~\ref{alg:extragradient}), future work could explore variants with separate step sizes (see, e.g., \citealp{Li2022}). 
Finally, a natural direction for future work is to extend MaxRM random forest to classification settings, that is, to cases where the response takes values in $\{1,\,\dots,\,C\}$, with $C\in\{2,\,3,\,\dots\}$. Standard classification trees measure the \emph{impurity} of leaf regions through, for instance, the cross-entropy loss. We believe it should be possible, in principle, to minimize the maximum expected cross-entropy loss across environments using our proposed method. If the resulting optimization problem cannot be formulated as an SOCP, then instead of relying on interior-point solvers, it may be possible to  
perform the post-hoc adjustment using the Mirror Prox algorithm \citep[see, e.g.,][]{Guo2025}.

\acks{We thank Elliot Beck for helpful discussions on weighted trees in random forests. During this work, LK was supported by the ETH AI Center through an ETH AI Center doctoral fellowship.}

\vskip 0.2in
\bibliography{references}

@article{Liu2023,
  author        = {Liu, J. and Shen, Z. and He, Y. and Zhang, X. and Xu, R. and Yu, H. and Cui, P.},
  title         = {{Towards Out-Of-Distribution Generalization: A Survey}},
  journal       = {arXiv preprint arXiv:2108.13624},
  year          = {2023},
  eprint        = {2108.13624},
  archivePrefix = {arXiv},
  primaryClass  = {cs.LG}
}

@article{Arjovsky2020,
  author        = {Arjovsky, M. and Bottou, L. and Gulrajani, I. and Lopez-Paz, D.},
  title         = {{Invariant Risk Minimization}},
  journal       = {arXiv preprint arXiv:1907.02893},
  year          = {2020},
  eprint        = {1907.02893},
  archivePrefix = {arXiv},
}

@article{Lu2022,
  author        = {Lu, C. and Wu, Y. and Hern{\'a}ndez-Lobato, J. M. and Sch\"olkopf, B.},
  title         = {{Nonlinear Invariant Risk Minimization: A Causal Approach}},
  journal       = {arXiv preprint arXiv:2102.12353},
  year          = {2022},
  eprint        = {2102.12353},
  archivePrefix = {arXiv},
}

@article{Peters2016,
  author  = {Peters, J. and B\"uhlmann, P. and Meinshausen, N.},
  title   = {{Causal Inference by using Invariant Prediction: Identification and Confidence Intervals}},
  journal = {Journal of the Royal Statistical Society Series B: Statistical Methodology},
  volume  = {78},
  number  = {5},
  pages   = {947--1012},
  year    = {2016},
}

@article{Rojas-Carulla2018,
  author  = {Rojas-Carulla, M. and Sch\"olkopf, B. and Turner, R. and Peters, J.},
  title   = {{Invariant Models for Causal Transfer Learning}},
  journal = {Journal of Machine Learning Research},
  volume  = {19},
  number  = {36},
  pages   = {1--34},
  year    = {2018},
}

@inproceedings{Meinshausen2018,
  author    = {Meinshausen, N.},
  title     = {{Causality From a Distributional Robustness Point of View}},
  booktitle = {IEEE Data Science Workshop},
  pages     = {6--10},
  year      = {2018},
}

@article{Christiansen2022,
  author  = {Christiansen, R. and Pfister, N. and Jakobsen, M. E. and Gnecco, N. and Peters, J.},
  title   = {{A Causal Framework for Distribution Generalization}},
  journal = {IEEE Transactions on Pattern Analysis and Machine Intelligence},
  volume  = {44},
  number  = {10},
  pages   = {6614--6630},
  year    = {2022},
}

@article{Bental1998,
  author  = {Ben-Tal, A. and Nemirovski, A.},
  title   = {{Robust Convex Optimization}},
  journal = {Mathematics of Operations Research},
  volume  = {23},
  number  = {4},
  pages   = {769--805},
  year    = {1998},
}

@inproceedings{Bendavid2006,
  author    = {Ben-David, S. and Blitzer, J. and Crammer, K. and Pereira, F.},
  title     = {{Analysis of Representations for Domain Adaptation}},
  booktitle = {Advances in Neural Information Processing Systems},
  volume    = {19},
  year      = {2006},
}

@article{Sinha2020,
  author        = {Sinha, A. and Namkoong, H. and Volpi, R. and Duchi, J.},
  title         = {{Certifying Some Distributional Robustness with Principled Adversarial Training}},
  journal       = {arXiv preprint arXiv:1710.10571},
  year          = {2020},
  eprint        = {1710.10571},
  archivePrefix = {arXiv},
}

@inproceedings{Namkoong2016,
  author    = {Namkoong, H. and Duchi, J. C.},
  title     = {{Stochastic Gradient Methods for Distributionally Robust Optimization with f-divergences}},
  booktitle = {Advances in Neural Information Processing Systems},
  volume    = {29},
  year      = {2016},
}

@article{Duchi2021,
  author  = {Duchi, J. C. and Namkoong, H.},
  title   = {{Learning models with uniform performance via distributionally robust optimization}},
  journal = {The Annals of Statistics},
  volume  = {49},
  number  = {3},
  pages   = {1378--1406},
  year    = {2021},
}

@article{Rothenhausler2021,
  author  = {Rothenh\"ausler, D. and Meinshausen, N. and B\"uhlmann, P. and Peters, J.},
  title   = {{Anchor Regression: Heterogeneous Data Meet Causality}},
  journal = {Journal of the Royal Statistical Society Series B: Statistical Methodology},
  volume  = {83},
  number  = {2},
  pages   = {215--246},
  year    = {2021},
}

@article{Haavelmo1943,
  author  = {Haavelmo, T.},
  title   = {{The Statistical Implications of a System of Simultaneous Equations}},
  journal = {Econometrica},
  volume  = {11},
  number  = {1},
  pages   = {1--12},
  year    = {1943},
}

@article{Sagawa2020,
  author        = {Sagawa, S. and Pang, W. K. and Tatsunori, B. H. and Percy, L.},
  title         = {{Distributionally Robust Neural Networks for Group Shifts: On the Importance of Regularization for Worst-Case Generalization}},
  journal       = {arXiv preprint arXiv:1911.08731},
  year          = {2020},
  eprint        = {1911.08731},
  archivePrefix = {arXiv},
}

@article{Meinshausen2015,
  author  = {Meinshausen, N. and B\"uhlmann, P.},
  title   = {{Maximin effects in inhomogeneous large-scale data}},
  journal = {The Annals of Statistics},
  volume  = {43},
  number  = {4},
  pages   = {1801--1830},
  year    = {2015},
}

@article{Buhlmann2016,
  author  = {B\"uhlmann, P. and Meinshausen, N.},
  title   = {{Magging: Maximin Aggregation for Inhomogeneous Large-Scale Data}},
  journal = {Proceedings of the IEEE},
  volume  = {104},
  number  = {1},
  pages   = {126--135},
  year    = {2016},
}

@article{Mo2024,
  author        = {Mo, W. and Tang, W. and Xue, S. and Liu, Y. and Zhu, J.},
  title         = {{Minimax Regret Learning for Data with Heterogeneous Subgroups}},
  journal       = {arXiv preprint arXiv:2405.01709},
  year          = {2024},
  eprint        = {2405.01709},
  archivePrefix = {arXiv},
}

@article{Wang2025,
  author        = {Wang, Z. and B\"uhlmann, P. and Guo, Z.},
  title         = {{Distributionally Robust Learning for Multi-source Unsupervised Domain Adaptation}},
  journal       = {arXiv preprint arXiv:2309.02211},
  year          = {2023},
  eprint        = {2309.02211},
  archivePrefix = {arXiv},
}

@inproceedings{Zhao2019,
  author    = {Zhao, H. and Combes, R. T. D. and Zhang, K. and Gordon, G.},
  title     = {{On Learning Invariant Representations for Domain Adaptation}},
  booktitle = {Proceedings of the 36th International Conference on Machine Learning},
  pages     = {7523--7532},
  year      = {2019},
  series    = {Proceedings of Machine Learning Research},
  volume    = {97},
}

@article{Zhang2015,
  author  = {Zhang, K. and Gong, M. and Sch\"olkopf, B.},
  title   = {{Multi-Source Domain Adaptation: A Causal View}},
  journal = {Proceedings of the AAAI Conference on Artificial Intelligence},
  volume  = {29},
  number  = {1},
  pages = {3150–-3157},
  year    = {2015},
}

@article{Oren2019,
  author        = {Oren, Y. and Sagawa, S. and Hashimoto, T. B. and Liang, P.},
  title         = {{Distributionally Robust Language Modeling}},
  journal       = {arXiv preprint arXiv:1909.02060},
  year          = {2019},
  eprint        = {1909.02060},
  archivePrefix = {arXiv},
}

@article{Bengio2013,
  author  = {Bengio, Y. and Courville, A. and Vincent, P.},
  title   = {{Representation Learning: A Review and New Perspectives}},
  journal = {IEEE Transactions on Pattern Analysis and Machine Intelligence},
  volume  = {35},
  number  = {8},
  pages   = {1798--1828},
  year    = {2013},
}

@inproceedings{Locatello2019,
  author    = {Locatello, F. and Bauer, S. and Lucic, M. and Raetsch, G. and Gelly, S. and Sch\"olkopf, B. and Bachem, O.},
  title     = {{Challenging Common Assumptions in the Unsupervised Learning of Disentangled Representations}},
  booktitle = {Proceedings of the 36th International Conference on Machine Learning},
  pages     = {4114--4124},
  year      = {2019},
  series    = {Proceedings of Machine Learning Research},
  volume    = {97},
}

@inproceedings{Zhang2022,
  author    = {Zhang, X. and Zhou, L. and Xu, R. and Cui, P. and Shen, Z. and Liu, H.},
  title     = {{Towards Unsupervised Domain Generalization}},
  booktitle = {IEEE/CVF Conference on Computer Vision and Pattern Recognition (CVPR)},
  pages     = {4900--4910},
  year      = {2022},
}

@inproceedings{Hu2018,
  author    = {Hu, W. and Niu, G. and Sato, I. and Sugiyama, M.},
  title     = {{Does Distributionally Robust Supervised Learning Give Robust Classifiers?}},
  booktitle = {Proceedings of the 35th International Conference on Machine Learning},
  pages     = {2029--2037},
  year      = {2018},
  series    = {Proceedings of Machine Learning Research},
  volume    = {80},
}

@article{Breiman2001,
  author  = {Breiman, L.},
  title   = {{Random Forests}},
  journal = {Machine Learning},
  volume  = {45},
  number  = {1},
  pages   = {5--32},
  year    = {2001},
}

@book{Boyd2004,
  author    = {Boyd, S. and Vandenberghe, L.},
  title     = {{Convex Optimization}},
  publisher = {Cambridge University Press},
  year      = {2004}
}

@book{Nesterov1994,
  author    = {Nesterov, Y. and Nemirovskii, A.},
  title     = {{Interior-Point Polynomial Algorithms in Convex Programming}},
  publisher = {Society for Industrial and Applied Mathematics},
  year      = {1994},
}

@article{Pace1997,
  author  = {Pace, R. Kelley and Barry, R.},
  title   = {{Sparse spatial autoregressions}},
  journal = {Statistics \& Probability Letters},
  volume  = {33},
  number  = {3},
  pages   = {291--297},
  year    = {1997},
}

@inproceedings{Agarwal22,
  author    = {Agarwal, A. and Zhang, T.},
  title     = {{Minimax Regret Optimization for Robust Machine Learning under Distribution Shift}},
  booktitle = {Proceedings of the 35th Conference on Learning Theory},
  pages     = {2704--2729},
  year      = {2022},
  series    = {Proceedings of Machine Learning Research},
  volume    = {178},
}

@inproceedings{Domahidi2013,
  author    = {Domahidi, A. and Chu, E. and Boyd, S.},
  title     = {{ECOS: An SOCP solver for embedded systems}},
  booktitle = {European Control Conference (ECC)},
  pages     = {3071--3076},
  year      = {2013},
}

@article{Korpelevich1976,
  author  = {Korpelevich, G. M.},
  title   = {{The Extragradient Method for Finding Saddle Points and Other Problems}},
  journal = {Ekonomika i Matematicheskie Metody},
  volume  = {12},
  number  = {4},
  pages   = {747--756},
  year    = {1976}
}

@article{Wang2013,
  author        = {Wang, W. and Carreira-Perpi{\~n}{\'a}n, M. {\'A}.},
  title         = {{Projection onto the probability simplex: An efficient algorithm with a simple proof, and an application}},
  journal       = {arXiv preprint arXiv:1309.1541},
  year          = {2013},
  eprint        = {1309.1541},
  archivePrefix = {arXiv},
}

@article{Zhang2024,
  author        = {Zhang, Y. and Huang, M. and Imai, K.},
  title         = {{Minimax Regret Estimation for Generalizing Heterogeneous Treatment Effects with Multisite Data}},
  journal       = {arXiv preprint arXiv:2412.11136},
  year          = {2024},
  eprint        = {2412.11136},
  archivePrefix = {arXiv},
}

@article{Guo2025,
  author        = {Guo, Z. and Wang, Z. and Hu, Y. and Bach, F.},
  title         = {{Statistical Analysis of Conditional Group Distributionally Robust Optimization with Cross-Entropy Loss}},
  journal       = {arXiv preprint arXiv:2507.09905},
  year          = {2025},
  eprint        = {2507.09905},
  archivePrefix = {arXiv},
}

@article{Lund2022,
  author  = {Lund, Adam and Wengel Mogensen, S{\o}ren and Richard Hansen, Niels},
  title   = {Soft maximin estimation for heterogeneous data},
  journal = {Scandinavian Journal of Statistics},
  volume  = {49},
  number  = {4},
  pages   = {1761--1790},
  year    = {2022}
}

@article{HeinzeDeml2017,
  author = {C. Heinze-Deml and J. Peters and N. Meinshausen},
  title = {Invariant Causal Prediction for Nonlinear Models},
  journal = {Journal of Causal Inference},
  volume = {6},
  number = {2}, 
  pages = {1--35},
  year = {2018}
}

@article{Christiansen2018,
  author = {R. Christiansen and J. Peters},
  title = {Switching Regression Models and Causal Inference in the Presence of Discrete Latent Variables},
  journal = {Journal of Machine Learning Research},
  volume = {21},
  number = {41},
  pages = {1--46}, 
  year = {2020}
}

@inproceedings{Mogensen2022icml,
  author = {P. Mogensen and N. Thams and J. Peters},
  title = {Invariant Ancestry Search},
  booktitle = {Proceedings of the 39th International Conference on Machine Learning ({ICML})},
   pages = {15832--15857},
  publisher = {PMLR},
  year = {2022}
}

@inproceedings{Scholkopf2012,
author = {B. Sch\"{o}lkopf and D. Janzing and J. Peters and E. Sgouritsa and K. Zhang and J. M. Mooij},
booktitle = {Proceedings of the 29th International Conference on Machine Learning ({ICML})},
title = {On causal and anticausal learning},
pages = {1255--1262},
publisher = {Omnipress},
year = {2012}
}

@book{Pearl2009,
  title = {Causality: Models, Reasoning, and Inference},
  publisher = {Cambridge University Press},
  year = {2009},
  edition = {2nd},
address = {New York, USA},
  author = {J. Pearl},
}

@article{Gnecco2024,
  author        = {Nicola Gnecco and Jonas Peters and Sebastian Engelke and Niklas Pfister},
  title         = {{Boosted Control Functions: Distribution generalization and invariance in confounded models}},
  journal       = {arXiv preprint arXiv:2310.05805},
  year          = {2024},
  eprint        = {2310.05805},
  archivePrefix = {arXiv},
}

@article{Breiman1996,
  author  = {Breiman, Leo},
  title   = {Bagging Predictors},
  journal = {Machine Learning},
  year    = {1996},
  volume  = {24},
  number  = {2},
  pages   = {123--140},
}

@article{Goulart2024,
  author        = {Paul J. Goulart and Yuwen Chen},
  title         = {{Clarabel: An interior-point solver for conic programs with quadratic objectives}},
  journal       = {arXiv preprint arXiv:2405.12762},
  year          = {2024},
  eprint        = {2405.12762},
  archivePrefix = {arXiv},
}

@article{DuchiGlynnHongseok2021,
  title   = {{Statistics of Robust Optimization: A Generalized Empirical Likelihood Approach}},
  author  = {Duchi, John C. and Glynn, Peter W. and Namkoong, Hongseok},
  journal = {Mathematics of Operations Research},
  volume  = {46},
  number  = {3},
  pages   = {946--969},
  year    = {2021},
}

@book{Shalev2014, 
  title={{Understanding Machine Learning: From Theory to Algorithms}}, 
  publisher={Cambridge University Press}, 
  author={Shalev-Shwartz, Shai and Ben-David, Shai},
  year={2014},
}

@InProceedings{Lin2020,
  title = 	 {{On Gradient Descent Ascent for Nonconvex-Concave Minimax Problems}},
  author =       {Lin, Tianyi and Jin, Chi and Jordan, Michael},
  booktitle = 	 {Proceedings of the 37th International Conference on Machine Learning},
  pages = 	 {6083--6093},
  year = 	 {2020},
  volume = 	 {119},
  series = 	 {Proceedings of Machine Learning Research},
  publisher =    {PMLR},
}

@InProceedings{Li2022,
  title = 	 {{On Convergence of Gradient Descent Ascent: A Tight Local Analysis}},
  author =       {Li, Haochuan and Farnia, Farzan and Das, Subhro and Jadbabaie, Ali},
  booktitle = 	 {Proceedings of the 39th International Conference on Machine Learning},
  pages = 	 {12717--12740},
  year = 	 {2022},
  volume = 	 {162},
  series = 	 {Proceedings of Machine Learning Research},
  publisher =    {PMLR},
}

@article{Guo2024,
  author = {Zijian Guo},
  title = {{Statistical Inference for Maximin Effects: Identifying Stable Associations across Multiple Studies}},
  journal = {Journal of the American Statistical Association},
  volume = {119},
  number = {547},
  pages = {1968--1984},
  year = {2024},
  publisher = {ASA Website},
  eprint = {https://doi.org/10.1080/01621459.2023.2233162}
}

@book{Vapnik1998,
  title     = {{Statistical Learning Theory}},
  author    = {Vapnik, Vladimir N.},
  year      = {1998},
  publisher = {Wiley},
}

@article{Kuhn2025,
  author        = {Daniel Kuhn and Soroosh Shafiee and Wolfram Wiesemann},
  title         = {{Distributionally Robust Optimization}},
  journal       = {arXiv preprint arXiv:2411.02549},
  year          = {2025},
  eprint        = {2411.02549},
  archivePrefix = {arXiv},
}

@article{Alizadeh2003,
  author    = {Farid Alizadeh and Donald Goldfarb},
  title     = {Second-order cone programming},
  journal   = {Mathematical Programming},
  year      = {2003},
  volume    = {95},
  number    = {1},
  pages     = {3--51},
}

@article{Shapiro2002,
  author = {Alexander Shapiro and Anton Kleywegt},
  title = {Minimax analysis of stochastic problems},
  journal = {Optimization Methods and Software},
  volume = {17},
  number = {3},
  pages = {523--542},
  year = {2002},
  publisher = {Taylor \& Francis},
}

@article{Zhang2024SA,
  author        = {Lijun Zhang and Haomin Bai and Peng Zhao and Tianbao Yang and Zhi-Hua Zhou},
  title         = {{Stochastic Approximation Approaches to Group Distributionally Robust Optimization and Beyond}},
  journal       = {arXiv preprint arXiv:2302.09267},
  year          = {2024},
  eprint        = {2302.09267},
  archivePrefix = {arXiv},
}

@book{Mitzenmacher2005, 
  title={{Probability and Computing: Randomized Algorithms and Probabilistic Analysis}}, 
  publisher={Cambridge University Press}, 
  author={Mitzenmacher, Michael and Upfal, Eli}, 
  year={2005}
}

@article{Bauer1958,
  author    = {Heinz Bauer},
  title     = {{Minimalstellen von Funktionen und Extremalpunkte}},
  journal   = {Archiv der Mathematik},
  year      = {1958},
  volume    = {9},
  number    = {4},
  pages     = {389--393},
}

@inproceedings{Li10,
  author    = {Li, Hong Bo and Wang, Wei and Ding, Hong Wei and Dong, Jin},
  booktitle = {2010 IEEE 7th International Conference on E-Business Engineering}, 
  title     = {{Trees Weighting Random Forest Method for Classifying High-Dimensional Noisy Data}}, 
  year      = {2010},
  pages     = {160--163},
}

@article{Beck24,
  author  = {Elliot Beck and Damian Kozbur and Michael Wolf},
  title   = {{The Hedged Random Forest}},
  journal = {SSRN preprint SSRN:5032102},
  year    = {2024},
  eprint  = {5032102},
  archivePrefix = {SSRN},
}

@InProceedings{Liu2021,
  title = 	 {{Just Train Twice: Improving Group Robustness without Training Group Information}},
  author =       {Liu, Evan Z and Haghgoo, Behzad and Chen, Annie S and Raghunathan, Aditi and Koh, Pang Wei and Sagawa, Shiori and Liang, Percy and Finn, Chelsea},
  booktitle = 	 {Proceedings of the 38th International Conference on Machine Learning},
  pages = 	 {6781--6792},
  year = 	 {2021},
  volume = 	 {139},
  series = 	 {Proceedings of Machine Learning Research},
  publisher =    {PMLR},
}

@inproceedings{Quinlan1992,
  title={Learning with continuous classes},
  author={Quinlan, John R},
  booktitle={Proceedings of the 5th Australian Joint Conference on Artificial Intelligence},
  volume={92},
  pages={343--348},
  year={1992},
  organization={World Scientific, Singapore}
}

@article{Tseng1995,
title = {On linear convergence of iterative methods for the variational inequality problem},
journal = {Journal of Computational and Applied Mathematics},
volume = {60},
number = {1},
pages = {237-252},
year = {1995},
author = {Paul Tseng},
}

@InProceedings{Gorbunov2022,
  title = 	 { Extragradient Method: O(1/K) Last-Iterate Convergence for Monotone Variational Inequalities and Connections With Cocoercivity },
  author =       {Gorbunov, Eduard and Loizou, Nicolas and Gidel, Gauthier},
  booktitle = 	 {Proceedings of The 25th International Conference on Artificial Intelligence and Statistics},
  pages = 	 {366--402},
  year = 	 {2022},
  volume = 	 {151},
}

@inproceedings{Bach2019,
  title     = {A Universal Algorithm for Variational Inequalities Adaptive to Smoothness and Noise},
  author    = {Bach, Francis and Levy, Kfir Y.},
  booktitle = {Proceedings of the 32nd Annual Conference on Learning Theory (COLT)},
  volume    = {99},
  pages     = {1--31},
  year      = {2019},
}

@article{Wang2025_pca,
      title={{StablePCA: Learning Shared Representations across Multiple Sources via Minimax Optimization}}, 
      author={Zhenyu Wang and Molei Liu and Jing Lei and Francis Bach and Zijian Guo},
      year={2025},
      journal = {arXiv preprint arXiv:2505.00940},
      eprint = {2505.00940},
      archivePrefix={arXiv},
}

\appendix
\clearpage

\addcontentsline{toc}{section}{Appendix}

\listofappendices
\vskip 0.75in

\appsection{Proofs}\label{app:proofs}
\subsection{Additional results}
\begin{lemma}[Empirical Rademacher complexity of piecewise-constants]\label{lemma:rademacher}
    Consider a fixed partition of the input space $\mathbb{R}^p$, denoted by $\mathcal{L}\coloneqq\{\mathcal X_1,\dots,\mathcal X_T\}$. For all $\bm{x}\in\mathbb{R}^p$ and $\bm{\theta}\in\Theta\subseteq\mathbb{R}^T$, define $h_{\bm{\theta}}(\bm{x}) \coloneqq \sum_{t=1}^T \theta_t \mathbf{1}_{\{\bm{x} \in \mathcal{X}_t\}}$, where, for all $t\in\{1,\,\dots,\,T\}$, $|\theta_t|\le W$, $W<\infty$. Define $\mathcal H\coloneqq\{\boldsymbol x\mapsto h_{\boldsymbol\theta}(\boldsymbol x):\boldsymbol\theta\in\Theta\}$. Then, the empirical Rademacher complexity of $\mathcal{H}$ with respect to a sample $S=\{\bm{x}_1,\ldots,\bm{x}_m\}$ of size $m$ satisfies
    \begin{equation*}
        \hat{\mathfrak R}_{m}(\mathcal H; S) \le W\sqrt{\frac{T}{m}}.
    \end{equation*}
\end{lemma}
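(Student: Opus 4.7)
The plan is to unroll the definition of the empirical Rademacher complexity, exploit the fact that the partition is fixed to decouple the sum over observations into sums over leaves, and then bound the resulting Rademacher sums inside each leaf by their second moments.

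First I would write out
\begin{equation*}
\hat{\mathfrak{R}}_m(\mathcal{H}; S) = \mathbb{E}_{\bm{\sigma}}\!\left[\sup_{\bm{\theta}\in\Theta} \frac{1}{m}\sum_{i=1}^m \sigma_i\, h_{\bm{\theta}}(\bm{x}_i)\right]
\end{equation*}
with $\sigma_i$ i.i.d.\ Rademacher, plug in the piecewise-constant form of $h_{\bm{\theta}}$, and swap the order of summation. Since the partition $\mathcal{L}$ is fixed (it does not depend on $\bm{\theta}$), the indicator terms factor out of the supremum and the quantity inside $\mathbb{E}_{\bm{\sigma}}$ becomes
\begin{equation*}
\sup_{\bm{\theta}\in\Theta} \frac{1}{m} \sum_{t=1}^T \theta_t\, S_t, \quad \text{where } S_t \coloneqq \sum_{i:\bm{x}_i\in\mathcal{X}_t} \sigma_i.
\end{equation*}

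Next, I would use the uniform bound $|\theta_t|\le W$ to pull the supremum inside coordinate-wise, obtaining the upper bound $\tfrac{W}{m}\mathbb{E}_{\bm{\sigma}}\bigl[\sum_{t=1}^T |S_t|\bigr]$. For each $t$, $S_t$ is a sum of $N_t \coloneqq |\{i:\bm{x}_i\in\mathcal{X}_t\}|$ independent Rademacher variables, so by Jensen's inequality $\mathbb{E}|S_t| \le (\mathbb{E}\,S_t^2)^{1/2} = \sqrt{N_t}$.

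Finally, applying Cauchy--Schwarz to the remaining sum and using $\sum_{t=1}^T N_t = m$ yields
\begin{equation*}
\sum_{t=1}^T \sqrt{N_t} \le \sqrt{T\,\sum_{t=1}^T N_t} = \sqrt{Tm},
\end{equation*}
which combined with the previous display gives the claimed bound $W\sqrt{T/m}$. The argument is essentially mechanical; the only mild subtlety is the step that decouples the supremum, which crucially uses that the partition is fixed and that the parameter space is a box so that the supremum over $\bm{\theta}\in\Theta$ separates into coordinate-wise suprema producing $|S_t|$. There is no substantial obstacle beyond being careful with that decoupling.
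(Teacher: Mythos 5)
Your proposal is correct and follows essentially the same route as the paper: decouple the sum over observations into per-leaf Rademacher sums $S_t$, bound via $|\theta_t|\le W$ to get $\tfrac{W}{m}\sum_t \mathbb{E}|S_t|$, control each $\mathbb{E}|S_t|\le\sqrt{N_t}$ by Jensen (the paper phrases this as Cauchy--Schwarz plus independence), and finish with Cauchy--Schwarz over the leaves. The only cosmetic remark is that the coordinate-wise ``separation'' of the supremum need only be an inequality (since $\Theta$ is merely contained in the box $[-W,W]^T$, not necessarily equal to it), which suffices for the upper bound.
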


\begin{proof}
Let $\bm{\sigma}=(\sigma_1,\,\dots,\,\sigma_{m})$ be independent Rademacher random variables, that is, for all $i\in\{1,\,\dots,\,m\}$, $P(\sigma_i=+1)=P(\sigma_i=-1)=1/2$. Then, 
\begin{align*}
    \hat{\mathfrak{R}}_{m}(\mathcal{H};S)
    \le\mathbb{E}_{\bm{\sigma}}\left[\frac{1}{m}\sup_{\bm{\theta}\in\Theta}\,\left|\,\sum_{i=1}^{m}\sigma_i h_{\bm{\theta}}(\bm{x}_i)\,\right| \right]
    =\mathbb{E}_{\bm{\sigma}}\left[\frac{1}{m}\sup_{\bm{\theta}\in\Theta}\,\left|\,\sum_{t=1}^T\theta_t\sum_{i:\bm{x}_i\in\mathcal{X}_t}\sigma_i\,\right|\,\right].
\end{align*}
Then, by the triangle inequality and since, for all $\bm{\theta}\in\Theta$ and $t\in\{1,\,\dots,\,T\}$, $|\theta_t|\le W$,
\begin{equation*}
    \hat{\mathfrak{R}}_{m}(\mathcal{H};S)
    \le\frac{W}{m}\sum_{t=1}^T\mathbb{E}_{\bm{\sigma}}\left[\,\left|\,\sum_{i:\bm{x}_i\in\mathcal{X}_t}\sigma_i\,\right| \,\right].
\end{equation*}
For all $t\in\{1,\,\dots,\,T\}$, by the Cauchy-Schwarz 
inequality and the independence of the Rademacher random variables,
\begin{equation*}
    \mathbb{E}_{\bm{\sigma}}\left[\,\left|\,\sum_{i:\bm{x}_i\in\mathcal{X}_t}\sigma_i\,\right|\,\right]
    \le\sqrt{\mathbb{E}_{\bm{\sigma}}\left[\,\,\sum_{i:\bm{x}_i\in\mathcal{X}_t}\sigma_i^2 +2\sum_{\substack{{i,j:\,i<j,}\\ \bm{x}_i,\bm{x}_j\in\mathcal{X}_t}}\sigma_i\sigma_j \,\right]}
    =\sqrt{|\mathcal{X}_t|},
\end{equation*}
where $|\mathcal{X}_t|\coloneqq\sum_{i:\bm{x}_i\in\mathcal{X}_t}1$. Using again the Cauchy-Schwarz inequality, we get
\begin{equation*}
    \hat{\mathfrak{R}}_{m}(\mathcal{H};S)
    \le\frac{W}{m}\sum_{t=1}^T\sqrt{|\mathcal{X}_t|}
    \le\frac{W}{m}\sqrt{T\sum_{t=1}^T |\mathcal{X}_t|}
    =\frac{W}{m}\sqrt{Tm}=W\sqrt{\frac{T}{m}}.
\end{equation*}
This completes the proof of Lemma~\ref{lemma:rademacher}. %
\end{proof}

\begin{lemma}[Lipschitzness of quadratic losses]\label{lemma:lipschitz}
    For all 
    $c\in\mathbb{R}$ and
    $(y,h)\in\mathbb{R}\times\mathbb{R}$, let $\phi_c(y,h) \coloneqq (y-h)^2 + c$. Fix $V,W<\infty$. For all $c\in\mathbb{R}$ and $y\in\mathbb{R}$ 
    such that $|y|\le V$, the map $h\mapsto\phi_c(y,h)$ is $2(V+W)$-Lipschitz on $[-W,W]$. 
\end{lemma}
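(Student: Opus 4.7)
The plan is to show the claim by a direct computation using either the mean value theorem or a factorization of the difference of squares. Since $c$ does not depend on $h$, it cancels when forming $\phi(y,h_1)-\phi(y,h_2)$, so it plays no role and I can work with $(y-h)^2$ alone.

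First I would factor $(y-h_1)^2 - (y-h_2)^2 = (h_2-h_1)\bigl(2y - h_1 - h_2\bigr)$. Taking absolute values and applying the triangle inequality gives
\begin{equation*}
    |\phi(y,h_1) - \phi(y,h_2)| \;\le\; |h_1 - h_2|\,\bigl(2|y| + |h_1| + |h_2|\bigr).
\end{equation*}
Using the bounds $|y| \le V$ and $h_1, h_2 \in [-W, W]$, I bound $2|y| + |h_1| + |h_2| \le 2V + 2W$, which yields the claimed $2(V+W)$-Lipschitz constant. An alternative route is to note that $\partial_h \phi(y,h) = -2(y-h)$, so $|\partial_h \phi(y,h)| \le 2(|y| + |h|) \le 2(V+W)$ on the relevant domain, and then invoke the mean value theorem.

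There is no real obstacle here; the statement is a standard Lipschitz bound for a quadratic on a bounded interval, and the only thing to keep track of is that the additive constant $c$ drops out and that both $|y|$ and $|h|$ are controlled by $V$ and $W$ respectively. The proof should fit in a few lines.
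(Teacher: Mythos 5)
Your proof is correct and essentially matches the paper's, which applies the mean value theorem to get $|q(h)-q(h')| = 2|y-\xi|\,|h-h'| \le 2(V+W)|h-h'|$ — the route you mention as an alternative. Your primary route via the difference-of-squares factorization is an equally standard, equivalent computation yielding the same constant.
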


\begin{proof}
Let $c \in \mathbb{R}$ and $y\in[-V,V]$. 
Define $q(h)\coloneqq \phi_c(y,h)$. For all $h,h'\in[-W,W]$, by the mean value theorem there exists $\xi$ between $h$ and $h'$ such that 
\begin{align*}
    \left|\,q(h)-q(h')\,\right|=\left|q'(\xi)\right|\cdot\left|h-h'\right|=2\left|y-\xi\right|\cdot\left|h-h'\right|\le2(V+W)\left|h-h'\right|.
\end{align*}
Hence, for all $c \in \mathbb{R}$ and $y\in[-V,V]$, the map $h\mapsto \phi_c(y,h)$ is $2(V+W)$-Lipschitz on $[-W,W]$. This completes the proof of Lemma~\ref{lemma:lipschitz}.
\end{proof}

\subsection{Proof of Theorem~\ref{thm:gdro_equivalence}}\label{sec:proof_gdro_equivalence}
For all $f\in\mathcal{F}$ and $r\in\{\mathrm{MSE},\,\mathrm{NRW}\}$, maximizing $R_P^r(f)$ over $P\in\mathcal{P}_\mathrm{CVXH}$ reduces to maximizing a linear function in $\bm{q}$ over the simplex $\Delta_K$; the result 
then
follows since by Bauer's maximum principle \citep{Bauer1958}, any linear function over the simplex attains its maximum at one of its vertices:
\begin{equation*}
    \max_{P\in\mathcal{P}_\mathrm{CVXH}} R_P^r(f)=\max_{\bm{q}\in\Delta_K}\sum_{k=1}^K q_k R_{P_{e_k}}^r(f)=\max_{e\in\mathcal{E}_\text{tr}}R_e^r(f).
\end{equation*}
With the regret, instead, for all $f\in\mathcal{F}$,
\begin{align*}
    \max_{P\in\mathcal{P}_\mathrm{CVXH}}R_P^\mathrm{Reg}(f) &= \max_{P\in\mathcal{P}_\mathrm{CVXH}}\left\{\mathbb{E}_P\left[(Y-f(X))^2\right]-\inf_{g\in\mathcal{F}}\mathbb{E}_P\left[(Y-g(X))^2\right]\right\} \\[2mm]
    &=\max_{\bm{q}\in\Delta_K}\left\{\sum_{k=1}^K q_k\mathbb{E}_{e_k}\left[(Y-f(X))^2\right]-\inf_{g\in\mathcal{F}}\sum_{k=1}^K q_k\mathbb{E}_{e_k}\left[(Y-g(X))^2\right]\right\} \\[2mm]
    &\le\max_{\bm{q}\in\Delta_K}\sum_{k=1}^K q_k\left\{\mathbb{E}_{e_k}\left[(Y-f(X))^2\right]-\inf_{g\in\mathcal{F}}\mathbb{E}_{e_k}\left[(Y-g(X))^2\right]\right\} \\[2mm]
    &=\max_{\bm{q}\in\Delta_K}\sum_{k=1}^K q_k R_{e_k}^\mathrm{Reg}(f) \\[2mm]
    &=\max_{e\in\mathcal{E}_\text{tr}}R^\mathrm{Reg}_e(f),
\end{align*}
where the last equality follows again from Bauer's maximum principle.
Since $\mathcal{E}_\text{tr}\subseteq\mathcal{P}_\mathrm{CVXH}$, we also have, for all $f\in\mathcal{F}$,
\begin{equation*}
    \max_{P\in\mathcal{P}_\mathrm{CVXH}}R_P^\mathrm{Reg}(f)\ge \max_{e\in\mathcal{E}_\text{tr}} R_e^\mathrm{Reg}(f).
\end{equation*}
Therefore, equality follows. 
This completes the proof of Theorem~\ref{thm:gdro_equivalence}.
\hfill\BlackBox

\subsection{Proof of Proposition~\ref{prop:generalization_test_env}}\label{sec:proof_test_env}
We first show the claim in~\ref{itm:prop_equality} and then prove~\ref{itm:prop_mse_degeneration}. 

\paragraph{Proof of~\ref{itm:prop_equality}.}
Since $P^X$ is invariant across environments by Assumption~\ref{ass:no_changeXdistr}, we omit environment-specific superscripts on $X$. 
Let $\Delta_K^{(1)}$ be the set of
$\bm{q}\in\Delta_K$ such that there is an $e \in \mathcal{E}$ with 
$f^{e}(\cdot) = \sum_{k=1}^K \bm{q}_k f^{e_k}(\cdot)$ (we denote this $e$ by $e^\prime(\bm{q})$). 
For all other $\bm{q}$, i.e., for all $\bm{q} \in \Delta_K^{(2)} := \Delta_K \setminus \Delta_K^{(1)}$, we now construct new environments. To do so, let 
$(e^\prime(\bm{q}))_{\bm{q} \in \Delta_K^{(2)}}$ be environment indices, none of which is in $\mathcal{E}$. For all $\bm{q} \in \Delta_K^{(2)}$, the environment  
$e^\prime(\bm{q})$ has 
 covariate distribution $P^X$,
  an arbitrary but fixed noise distribution with
 noise variance $\sigma^2$ from Assumption~\ref{ass:constant_noisevar}, and conditional mean function 
 $\sum_{k=1}^K \bm{q}_k f^{e_k}(\cdot)$.
Therefore, for all $e\in\Econv$, there exists $\bm{q}\in\Delta_K$ such that $e = e^\prime(\bm{q})$. Depending on $\mathcal{E}$, it is not necessarily true that, for all $\bm{q} \in \Delta_K$, $e^\prime(\bm{q}) \in \Econv$.
We have
$\mathcal{E}_\text{tr}\subseteq\Econv=\mathcal{E}\cap\{e^\prime(\bm{q}):\bm{q}\in\Delta_K\}\subseteq\{e^\prime(\bm{q}):\bm{q}\in\Delta_K\}$.
In particular, for all $f\in\mathcal F$ and $r\in\{\mathrm{NRW},\,\mathrm{Reg},\,\mathrm{MSE}\}$,
\begin{equation} \label{eq:q-Econv}
    \max_{e \in \mathcal{E}_\mathrm{tr}} R_{e}^r(f) \leq \max_{e \in \Econv} R_{e}^r(f) \leq  \max_{\bm{q}\in\Delta_K} R_{e^\prime(\bm{q})}^r(f).
\end{equation}
To prove statement~\ref{itm:prop_equality}, it therefore suffices to show $\max_{e \in \mathcal{E}_\mathrm{tr}} R_{e}^r(f) = \max_{\bm{q}\in\Delta_K} R_{e^\prime(\bm{q})}^r(f)$. We apply Bauer's maximum principle, which states that any continuous convex function defined on a compact convex set attains its maximum at an extreme point of that set. Since the simplex $\Delta_K$ is a compact and convex subset of $\mathbb{R}^K$, we only need to show that, for all $r\in\{\mathrm{NRW},\,\mathrm{Reg},\,\mathrm{MSE}\}$, $\bm{q}\in\Delta_K$, and $f\in\mathcal{F}$, the map $\bm q\mapsto R_{e^\prime(\bm q)}^r(f)$ is continuous and convex.

For all $\bm{q} \in \Delta_K$ and $f\in\mathcal{F}$, the negative reward can be written as \citep[][Appendix~C.1]{Wang2025}
\begin{equation}\label{eq:nrw_qfun}
    R_{e^\prime(\bm{q})}^{\mathrm{NRW}}(f)
    =\mathbb{E}_{P^X}\!\left[f(X)^2\right]
    -2\sum_{k=1}^K q_k\,\mathbb{E}_{P^X}\!\left[f^{e_k}(X)f(X)\right],
\end{equation}
which is continuous and linear---and hence convex---in $\bm{q}$.
For all $\bm{q} \in \Delta_K$ and $f\in\mathcal{F}$, the regret can be written as \citep[][Appendix~C.3]{Wang2025}
\begin{equation}\label{eq:reg_qfun}
    R_{e^\prime(\bm{q})}^{\mathrm{Reg}}(f)
    =\mathbb{E}_{P^X}\!\left[\left(\sum_{k=1}^K q_k f^{e_k}(X)-f(X)\right)^2\right],
\end{equation}
and, analogously, the MSE can be written as
\begin{equation}\label{eq:mse_qfun}
    R_{e^\prime(\bm{q})}^{\mathrm{MSE}}(f)
    =\mathbb{E}_{P^X}\!\left[\left(\sum_{k=1}^K q_k f^{e_k}(X)-f(X)\right)^2\right]+\sigma^2
    =R_{e^\prime(\bm{q})}^{\mathrm{Reg}}(f)+\sigma^2.
\end{equation}
Both \eqref{eq:reg_qfun} and \eqref{eq:mse_qfun} are also continuous and convex in $\bm{q}$ because their Hessians are positive semi-definite \citep[][Appendix~C.3]{Wang2025}.

Therefore, by Bauer's maximum principle, the maximum over $\Delta_K$ is attained at an extreme point of $\Delta_K$ (i.e., a vertex corresponding to some $e\in\mathcal{E}_\text{tr}$). Therefore, we have shown that $\max_{e \in \mathcal{E}_\mathrm{tr}} R_{e}^r(f) = \max_{\bm{q}\in\Delta_K} R_{e^\prime(\bm{q})}^r(f)$. Since the lower and upper bounds coincide, all inequalities in \eqref{eq:q-Econv} must hold with equality.

\paragraph{Proof of~\ref{itm:prop_mse_degeneration}.} If Assumption~\ref{ass:constant_noisevar} does not hold, the equality in \ref{itm:prop_equality} is still satisfied for $r\in\{\mathrm{NRW},\,\mathrm{Reg}\}$, because the objectives in \eqref{eq:nrw_qfun} and \eqref{eq:reg_qfun} do not depend on the noise variance.

We now show that the conclusion can fail for the MSE when the noise variances differ across environments. 
For all $f \in \mathcal{F}$ and $e \in \mathcal{E}_\mathrm{tr}$, the MSE of $f$ in environment $e$ is
\begin{equation*}
    R_e^\mathrm{MSE}(f) 
    = \mathbb{E}_e\bigl[(Y^e - f(X))^2\bigr] 
    = \sigma_e^2 + \mathbb{E}_{P^X}\!\bigl[(f^e(X) - f(X))^2\bigr].
\end{equation*}
Since the second term is nonnegative, for all $f\in\mathcal{F}$ and $e\in\mathcal{E}_\text{tr}$, we have $R_e^\mathrm{MSE}(f) \ge \sigma_e^2$. 
Let the data-generating process be such that 
$e^*\in\mathcal{E}_\text{tr}$ is the unique environment such that $\sigma_{e^*}^2 = \max_{e \in \mathcal{E}_\mathrm{tr}} \sigma_e^2$. Then,
\begin{equation*}
    \min_{f\in\mathcal{F}} \,\max_{e \in \mathcal{E}_\mathrm{tr}}\, R_e^\mathrm{MSE}(f)\ge\sigma_{e^*}^2.
\end{equation*}
For all $e \in \mathcal{E}_\mathrm{tr}$, define the \emph{squared function gap}
between environments $e$ and $e^*$ as
$$\Delta_{e,e^*} \coloneqq \mathbb{E}_{P^X}\!\bigl[(f^e(X) - f^{e^*}(X))^2\bigr].$$
Let the data-generating process be such that
\begin{equation}\label{eq:condition_noise_variance}
    \sigma_{e^*}^2 \ge \max_{e \in \mathcal{E}_\mathrm{tr} \setminus \{e^*\}}\, \left\{ \sigma_e^2 + \Delta_{e,e^*} \right\}
\end{equation}
(this is achievable, e.g., if we let $f\in\mathcal{F}$ and, for all $e\in\mathcal{E}_\text{tr}$, $f^e=f$)
For the noisiest training environment $e^*$, the MSEs of its regression function $f^{e^*}$ are 
\begin{equation*}
    R_{e^*}^\text{MSE}(f^{e^*})=\sigma_{e^*}^2,\qquad 
    R_e^\text{MSE}(f^{e^*})=\sigma_e^2+\Delta_{e,\,e^*},\,\,\forall e\ne e^*.
\end{equation*}
Under condition~\eqref{eq:condition_noise_variance}, we have
\begin{equation*}
    \max_{e\in\mathcal{E}_\text{tr}}\,R_e^\text{MSE}(f^{e^*})=\sigma_{e^*}^2.
\end{equation*}
Hence, in this setting, minimizing the maximum MSE across training environments corresponds to learning $f^{e^*}$, the regression function of the noisiest training environment. (This behavior extends the degeneration behavior described by \citet{Mo2024} in the linear case to the nonlinear case.)

Since Assumption~\ref{ass:constant_noisevar} does not hold, we can construct
an environment $e' \in \Econv$ such that $f^{e'} = f^{e^*}$ 
(choosing $\bm{q}\in\Delta_K$ such that $q_{k^*}=1$ for $k^*$ such that $e^* = e_{k^*}$, and $q_k = 0$ for all $k \neq k^*$)
but $\sigma_{e'}^2 > \sigma_{e^*}^2$. 
For the
predictor $f^{e^*}$,
\begin{equation*}
    \max_{e' \in \Econv} R_{e'}^\mathrm{MSE}(f^{e^*}) \ge \sigma_{e'}^2 > \sigma_{e^*}^2 = \max_{e\in\mathcal{E}_\text{tr}}\,R_e^\text{MSE}(f^{e^*}),
\end{equation*}
which violates the equality \eqref{eq:prop_5} in Proposition~\ref{prop:generalization_test_env}.
\\ \\
This completes the proof of Proposition~\ref{prop:generalization_test_env}. \hfill\BlackBox

\subsection{Proof of Theorem~\ref{thm:consistency}}\label{sec:proof_consistency}
We divide the proof into three parts and show that, for all $r\in\{\text{MSE},\, \text{NRW},\, \text{Reg}\}$, as $n\to\infty$,
\begin{enumerate}[label=(\roman*)]
    \item\label{itm:uniform_convergence} $\sup_{\bm{\theta} \in \Theta}\, \left|\,R^r(h_{\bm{\theta}}) - \hat{R}^{r,*}(h_{\bm{\theta}})\,\right| \,\xrightarrow{p}\, 0$;
    \item\label{itm:excess_risk_convergence} for all $\hat{\bm{\theta}}^{r,*}\in\hat{\Theta}^{r,*}$ and $\bm{\theta}^r_0\in\Theta^r_0$, we have $R^r(h_{\hat{\bm{\theta}}^{r,*}})\,\xrightarrow{p}\,R^r(h_{\bm{\theta}^r_0})$;
    \item\label{itm:set_convergence} $d_{\subset}(\hat{\Theta}^{r,*},\,\Theta^r_0) \,\xrightarrow{p} \,0$.
\end{enumerate}
We first prove~\ref{itm:uniform_convergence}, and then use it to show~\ref{itm:excess_risk_convergence} and~\ref{itm:set_convergence}.\footnote{Unless stated otherwise and in slight abuse of notation, we include the bootstrap randomness into $\mathbb{E}_e$; furthermore, statements including a bootstrap sample are interpreted as holding almost surely.} 

\paragraph{Proof of~\ref{itm:uniform_convergence}.} For all $r\in\{\text{MSE},\, \text{NRW},\, \text{Reg}\}$, we first show uniform convergence of the maximum empirical risk to the maximum population risk. For all $\bm{\theta} \in \Theta$, we have, 
\begin{equation}\label{eq:upperbound}
\begin{split}
    \left|\,R^r(h_{\bm{\theta}})-\hat R^{r,*}(h_{\bm{\theta}})\,\right| &= \left|\,\max_{e \in \mathcal{E}_{\text{tr}}}\, R_e^r(h_{\bm{\theta}}) - \max_{e \in \mathcal{E}_{\text{tr}}}\, \hat{R}_e^{r,*}(h_{\bm{\theta}})\,\right| \\[2mm]
    &\leq \max_{e \in \mathcal{E}_{\text{tr}}}\, \left|\,R_e^r(h_{\bm{\theta}}) - \hat{R}_e^{r,*}(h_{\bm{\theta}})\,\right|.
\end{split}
\end{equation}
We first show that, for all $r\in\{\mathrm{MSE},\,\mathrm{NRW}\}$ and $e\in\mathcal{E}_\text{tr}$, $\mathbb{E}_e\left[\,\sup_{\bm{\theta} \in \Theta} \,\left|\,R_e^r(h_{\bm{\theta}}) - \hat{R}_e^{r,*}(h_{\bm{\theta}})\,\right|\,\right]\to 0$ as $n\to\infty$, and then show it for $r=\mathrm{Reg}$. 

For all $r\in\{\mathrm{MSE},\,\mathrm{NRW}\}$, define the function classes:
\begin{equation*}
    \mathcal{H} \coloneqq \{\bm{x} \mapsto h_{\bm\theta}(\bm{x}) : \theta\in\Theta\},
    \qquad
    \mathcal{G}^r \coloneqq \{(\bm{x},y) \mapsto \phi^r(y,h_{\bm{\theta}}(\bm{x})) : \bm{\theta} \in \Theta\},
\end{equation*}
where, for all $(\bm{x},y)\in\mathbb{R}^p\times\mathbb{R}$ and $\bm{\theta}\in\Theta$,
\begin{equation*}
    \phi^\text{MSE}(y,\,h_{\bm{\theta}}(\bm{x}))\coloneqq(y - h_{\bm\theta}(\bm{x}))^2, 
    \qquad
    \phi^\text{NRW}(y,\,h_{\bm{\theta}}(\bm{x}))\coloneqq(y - h_{\bm\theta}(\bm{x}))^2-y^2.
\end{equation*}
For all $r\in\{\text{MSE},\, \text{NRW},\, \text{Reg}\}$, $\bm{\theta}\in\Theta$ and $e\in\mathcal{E}_\text{tr}$, let $\hat{R}_e^r(h_{\bm{\theta}})$ be defined as $\hat{R}_e^{r,*}(h_{\bm{\theta}})$ but using $S_e$ instead of $S_e^*$ as in~\eqref{eq:empirical-MSE}. By the triangle inequality and \emph{symmetrization} \citep[e.g.,][Lemma~$26.2$]{Shalev2014}, for all $r\in\{\text{MSE},\, \text{NRW}\}$ and $e\in\mathcal{E}_\text{tr}$:
\begin{equation}\label{eq:symmetrization}
\begin{split}
    \mathbb{E}_e\left[\,\sup_{\bm{\theta} \in \Theta} \,\left|\,R_e^r(h_{\bm{\theta}}) - \hat{R}_e^{r,*}(h_{\bm{\theta}})\,\right|\,\right] &\le 
    \mathbb{E}_e\left[\,\sup_{\bm{\theta} \in \Theta} \,\left|\,R_e^r(h_{\bm{\theta}}) - \hat{R}_e^r(h_{\bm{\theta}})\,\right|+\sup_{\bm{\theta} \in \Theta} \,\left|\,\hat{R}_e^r(h_{\bm{\theta}}) - \hat{R}_e^{r,*}(h_{\bm{\theta}})\,\right|\,\right] \\[2mm]
    &\le2\mathbb{E}_e\left[\hat{\mathfrak{R}}_{n_e}(\mathcal{G}^r;S_e)\right]+2\mathbb{E}_{e}\left[\hat{\mathfrak{R}}_{s_e}(\mathcal{G}^r;S_e^*)\right] \\
\end{split}
\end{equation}
where $\hat{\mathfrak{R}}_{n_e}(\mathcal{G}^r;S_e)$ and $\hat{\mathfrak{R}}_{s_e}(\mathcal{G}^r;S_e^*)$ denote the empirical Rademacher complexities of $\mathcal{G}^r$ with respect to $S_e$ and $S_e^*$, respectively, with $\hat{\mathfrak{R}}_0(\cdot;\emptyset)=0$.
In addition, for all $e\in\mathcal{E}_\text{tr}$, given $S_e$, the bootstrap count $s_e$ follows a binomial distribution: $s_e\sim\mathrm{Bin}(n,p_e)$, where $p_e\coloneqq n_e/n$.

We now apply the contraction lemma 
\citep[see, e.g.,][Lemma~$26.9$]{Shalev2014}. 
By Assumption~\ref{ass:bdd_output}, there exists $V<\infty$ such that, for all $e\in\mathcal{E}_\text{tr}$, $|Y^e|\le V$ almost surely. 
Moreover, since $\Theta$ is compact (Assumption~\ref{ass:compact_parameter_space}), it is bounded. Hence, there exists a constant $W < \infty$ such that, for all $\bm{\theta}\in\Theta$ and $t \in \{1, \ldots, T\}$, we have $|\theta_t| \leq W$; consequently, for all $\bm{x}\in\mathbb{R}^p$, $|h_{\bm\theta}(\bm{x})|\leq W$. 

By Lemma~\ref{lemma:lipschitz}, for all $r\in\{\text{MSE},\, \text{NRW}\}$ and $y\in[-V,V]$, the map $h\mapsto\phi^r(y,h)$ is $2(V+W)$-Lipschitz on $[-W,W]$. 
Applying the contraction lemma to~\eqref{eq:symmetrization} yields, for all $r\in\{\text{MSE},\, \text{NRW}\}$ and $e\in\mathcal{E}_\text{tr}$, 
\begin{equation}\label{eq:contraction}
    \mathbb{E}_e\left[\,\sup_{\bm{\theta} \in \Theta} \,\left|\,R_e^r(h_{\bm{\theta}}) - \hat{R}_e^{r,*}(h_{\bm{\theta}})\,\right|\,\right] \le
    4(V+W)\left(\mathbb{E}_e\left[\hat{\mathfrak{R}}_{n_e}(\mathcal{H};S_e)\right] +\mathbb{E}_{e}\left[\hat{\mathfrak{R}}_{s_e}(\mathcal{H};S_e^*)\right]\right).
\end{equation}
By \eqref{eq:contraction} and Lemma~\ref{lemma:rademacher}, for all $r\in\{\text{MSE},\, \text{NRW}\}$ and $e\in\mathcal{E}_\text{tr}$,
\begin{equation*}
\mathbb{E}_e\left[\,\sup_{\bm{\theta} \in \Theta} \,\left|\,R_e^r(h_{\bm{\theta}}) - \hat{R}_e^{r,*}(h_{\bm{\theta}})\,\right|\,\right] \le 4(V+W)W\sqrt{T} \left(\frac{1}{\sqrt{n_e}}+\mathbb{E}_e\left[\frac{1}{\sqrt{s_e\vee 1}}\right]\right),
\end{equation*}
For all $e\in\mathcal{E}_\text{tr}$ and $\delta\in(0,1)$,
\begin{align*}
    \mathbb{E}_e\left[\frac{1}{\sqrt{s_e\vee 1}}\right] &=
    \mathbb{E}_e\left[\frac{1}{\sqrt{s_e\vee 1}}\mathbf{1}_{\{s_e\ge(1-\delta)np_e\}}\right]+\mathbb{E}_e\left[\frac{1}{\sqrt{s_e\vee 1}}\mathbf{1}_{\{s_e<(1-\delta)np_e\}}\right] \\[2mm]
    &\le\frac{1}{\sqrt{(1-\delta)np_e}}P_e\left(s_e\ge(1-\delta)np_e\right)+P_e\left(s_e<(1-\delta)np_e\right) \\[2mm]
    &\le\frac{1}{\sqrt{(1-\delta)np_e}}+\exp\left\{-\frac{\delta^2}{2}np_e\right\},
\end{align*}
where, in the last inequality, we use the Chernoff bound for the sum of Bernoulli trials \citep[][Theorem~$4.5$]{Mitzenmacher2005}. Therefore, for all $r\in\{\text{MSE},\, \text{NRW}\}$ and $e\in\mathcal{E}_\text{tr}$,
\begin{equation*}
    \mathbb{E}_e\left[\,\sup_{\bm{\theta} \in \Theta} \,\left|\,R_e^r(h_{\bm{\theta}}) - \hat{R}_e^{r,*}(h_{\bm{\theta}})\,\right|\,\right] \le 4(V+W)W\sqrt{T}\left(\frac{1}{\sqrt{n_e}}+\frac{1}{\sqrt{(1-\delta)np_e}}+\exp\left\{-\frac{\delta^2}{2}np_e\right\}\right),
\end{equation*}
which goes to zero as $n\to\infty$. Indeed, by Assumption~\ref{ass:nonvanishing_prop}, for all $e\in\mathcal{E}_\text{tr}$, there exists $p_e^\star\in(0,1]$ such that $n_e/n\to p_e^\star$ as $n\to\infty$, so $n_e\to\infty$ as $n\to\infty$.

When $r=\mathrm{Reg}$, instead, for all $e\in\mathcal{E}_\text{tr}$ and $\bm{\theta}\in\Theta$, 
\begin{align*}
    \left|\,R_e^\mathrm{Reg}(h_{\bm{\theta}}) - \hat{R}_e^\mathrm{Reg}(h_{\bm{\theta}})\,\right|
    &\le\left|\,R_e^\mathrm{MSE}(h_{\bm{\theta}})-\hat{R}_e^\mathrm{MSE}(h_{\bm{\theta}})\,\right| + \left|\,\inf_{g\in\mathcal{H}}\,R_e^\mathrm{MSE}(g)-\inf_{g\in\mathcal{H}}\,\hat{R}_e^\mathrm{MSE}(g)\,\right| \\[2mm]
    &\le\left|\,R_e^\mathrm{MSE}(h_{\bm{\theta}})-\hat{R}_e^\mathrm{MSE}(h_{\bm{\theta}})\,\right|+\sup_{g\in\mathcal{H}}\,\left|\,R_e^\mathrm{MSE}(g)-\hat{R}_e^\mathrm{MSE}(g)\,\right|,
\end{align*}
and analogously for $\left|\,\hat{R}_e^\mathrm{Reg}(h_{\bm{\theta}}) - \hat{R}_e^{\mathrm{Reg},*}(h_{\bm{\theta}})\,\right|$. Hence, for all $e\in\mathcal{E}_\text{tr}$ and $\bm{\theta}\in\Theta$, 
\begin{align*}
    \mathbb{E}_e\left[\,\sup_{\bm{\theta} \in \Theta} \,\left|\,R_e^\mathrm{Reg}(h_{\bm{\theta}}) - \hat{R}_e^{\mathrm{Reg},*}(h_{\bm{\theta}})\,\right|\,\right]
    &\le4\mathbb{E}_e\left[\hat{\mathfrak{R}}_{n_e}(\mathcal{G}^\mathrm{MSE};S_e)\right]+4\mathbb{E}_{e}\left[\hat{\mathfrak{R}}_{s_e}(\mathcal{G}^\mathrm{MSE};S_e^*)\right],
\end{align*}
and the remaining steps proceed analogously to the cases $r\in\{\mathrm{MSE},\,\mathrm{NRW}\}$, with the only difference that the constant in the analogue of~\eqref{eq:contraction} is $8(V+W)$ instead of $4(V+W)$.

For all $r\in\{\mathrm{MSE},\,\mathrm{NRW},\,\mathrm{Reg}\}$ and $e\in\mathcal{E}_\text{tr}$, we have therefore established that
\begin{equation}\label{eqn:sup_e_to_zero}
    \mathbb{E}_e\left[\,\sup_{\bm{\theta} \in \Theta} \,\left|\,R_e^r(h_{\bm{\theta}}) - \hat{R}_e^{r,*}(h_{\bm{\theta}})\,\right|\,\right]\to 0 
    \quad \text{as } n \to \infty.
\end{equation}
By Markov's inequality, we obtain that $\sup_{\bm{\theta} \in \Theta} \,\left|\,R_e^r(h_{\bm{\theta}}) - \hat{R}_e^{r,*}(h_{\bm{\theta}})\,\right| \xrightarrow{p} 0$ as $n\to \infty$. 
Using the union bound, we have that for all $\epsilon\in(0,\infty)$ and $r\in\{\text{MSE},\, \text{NRW},\, \text{Reg}\}$,
\begin{equation*}
    P\left(\max_{e\in\mathcal{E}_\text{tr}}\,\sup_{\bm{\theta}\in\Theta}\,\left|\,R_e^r(h_{\bm{\theta}}) - \hat{R}_e^{r,*}(h_{\bm{\theta}})\,\right|\geq \epsilon\right)\le \sum_{k=1}^K P_{e_k}\left(\sup_{\bm{\theta}\in\Theta}\,\left|\,R_{e_k}^r(h_{\bm{\theta}}) - \hat{R}_{e_k}^{r,*}(h_{\bm{\theta}})\,\right|\geq \epsilon\right).
\end{equation*}
Since $K$ is a fixed, finite constant, and, by~\eqref{eqn:sup_e_to_zero}, the right-hand side is a finite sum of terms, each of which converges to zero as $n\to\infty$, the left-hand side also vanishes as $n\to\infty$, yielding the desired convergence in probability:
$\max_{e\in\mathcal{E}_\text{tr}}\,\sup_{\bm{\theta}\in\Theta}\,\left|\,R_e^r(h_{\bm{\theta}}) - \hat{R}_e^{r,*}(h_{\bm{\theta}})\,\right|\xrightarrow{p}0$ as $n\to\infty$. 
By \eqref{eq:upperbound}, we conclude that, for all $r\in\{\text{MSE},\, \text{NRW},\, \text{Reg}\}$, 
\begin{equation}\label{eq:uniform_convergence}
    \sup_{\bm{\theta} \in \Theta}\, \left|\,R^r(h_{\bm{\theta}}) - \hat{R}^{r,*}(h_{\bm{\theta}})\,\right| 
    \leq  
\sup_{\bm{\theta}\in\Theta} \max_{e\in\mathcal{E}_\text{tr}} \,\left|\,R_e^r(h_{\bm{\theta}}) - \hat{R}_e^{r,*}(h_{\bm{\theta}})\,\right|
    \,\xrightarrow{p} 0\, \quad \text{as } n \to \infty,
\end{equation}
which proves uniform convergence of the maximum empirical risk to the maximum population risk.

\paragraph{Proof of~\ref{itm:excess_risk_convergence}.} For all $r\in\{\text{MSE},\, \text{NRW},\, \text{Reg}\}$, $\hat{\bm{\theta}}^{r,*}\in\hat{\Theta}^{r,*}$, and 
$\bm{\theta}^r_0\in\Theta^r_0$, we now turn to the excess risk, 
defined as $R^r(h_{\hat{\bm{\theta}}^{r,*}})-R^r(h_{\bm{\theta}^r_0})$, which is non-negative since ${\bm{\theta}}^{r}_0$ minimizes ${R}^{r}$. 
Compactness of $\Theta$ (Assumption~\ref{ass:compact_parameter_space}) and continuity of $R^r(h_{\bm{\theta}})$ and $\hat R^{r,*}(h_{\bm{\theta}})$ on $\Theta$ ensure the existence of $\hat{\bm{\theta}}^{r,*}$ and $\bm{\theta}^r_0$. 
Consider the excess risk decomposition:
\begin{equation*}
\begin{split}
    R^r(h_{\hat{\bm{\theta}}^{r,*}})
    - R^r(h_{\bm{\theta}^r_0})
    = &\bigl[ R^r(h_{\hat{\bm{\theta}}^{r,*}}) 
    - \hat{R}^{r,*}(h_{\hat{\bm{\theta}}^{r,*}}) \bigr] \\
    &+ \bigl[ \hat{R}^{r,*}(h_{\hat{\bm{\theta}}^{r,*}})
    - \hat{R}^{r,*}(h_{\bm{\theta}^r_0}) \bigr] \\
    &+ \bigl[ \hat{R}^{r,*}(h_{\bm{\theta}^r_0})
    - R^r(h_{\bm{\theta}^r_0}) \bigr].
\end{split}
\end{equation*}
The middle term is non-positive since $\hat{\bm{\theta}}^{r,*}$ minimizes $\hat{R}^{r,*}$, while the first and third terms are both bounded by the uniform deviation:
\begin{equation*}
    R^r(h_{\hat{\bm{\theta}}^{r,*}}) 
    - \hat{R}^{r,*}(h_{\hat{\bm{\theta}}^{r,*}})\le\sup_{\bm \theta\in\Theta}\,\left|\,R^r(h_{\bm \theta})-\hat R^{r,*}(h_{\bm \theta})\,\right|,
\end{equation*}
\begin{equation*}
    \hat{R}^{r,*}(h_{\bm{\theta}^r_0})
    - R^r(h_{\bm{\theta}^r_0})\le\sup_{\bm \theta\in\Theta}\,\left|\,R^r(h_{\bm \theta})-\hat R^{r,*}(h_{\bm \theta})\,\right|.
\end{equation*}
Hence,
\begin{equation*}
    R^r(h_{\hat{\bm{\theta}}^{r,*}})
    - R^r(h_{\bm{\theta}^r_0}) 
    \le 2\sup_{\bm \theta\in\Theta}\,\left|\,R^r(h_{\bm \theta})-\hat R^{r,*}(h_{\bm \theta})\,\right|.
\end{equation*} %
Therefore, by~\eqref{eq:uniform_convergence}, we conclude that
\begin{equation*}
    R^r(h_{\hat{\bm{\theta}}^{r,*}}) \xrightarrow{p} R^r(h_{\bm{\theta}^r_0}), \quad \text{as } n \to \infty.
\end{equation*}

\paragraph{Proof of~\ref{itm:set_convergence}.} For all $r\in\{\text{MSE},\, \text{NRW},\, \text{Reg}\}$, we show consistency of the post-hoc estimators $\hat{\Theta}^{r,*}$. Let $R_{\min}^r\coloneqq\min_{\bm{\theta}\in\Theta}\,R^r(h_{\bm{\theta}})$ and, as in \citet[Section~$5.1$]{DuchiGlynnHongseok2021}, for all $\epsilon\in(0,\,\infty)$, denote by $\Theta_{0,\epsilon}^r\coloneqq\{\bm{\theta}\in\Theta\;:\;\mathrm{dist}(\bm{\theta},\,\Theta^r_0)\le\epsilon\}$ the $\epsilon$-enlargement of $\Theta^r_0$. Define $\delta_\epsilon~\coloneqq~\inf_{\bm{\theta}\in\Theta\setminus\Theta^r_{0,\epsilon}}R^r(h_{\bm{\theta}})-R^r_{\min}>0$. 
Fix an $\epsilon>0$ and an $\eta\in(0,\,\delta_\epsilon/3)$. 
By~\eqref{eq:uniform_convergence}, %
\begin{equation*}
    P\left(\sup_{\bm{\theta}\in\Theta}\,\left|\,\hat{R}^{r,*}(h_{\bm{\theta}})-R^r(h_{\bm{\theta}})\,\right|\le\eta\right)\;\to\;1 \quad \text{as } n\to \infty.
\end{equation*}
If, 
for a given realization,
$\sup_{\bm{\theta}\in\Theta}\,\left|\,\hat{R}^{r,*}(h_{\bm{\theta}})-R^r(h_{\bm{\theta}})\,\right|\le\eta$, 
the following two statements hold
\begin{enumerate}[label=(\roman*)]
    \item 
    $\inf_{\bm{\theta}\in\Theta} \hat{R}^{r,*}(h_{\bm{\theta}})\le R_{\min}^r+\eta$
    (this holds because,
    for all $\bm{\theta}\in\Theta^r_0$, $\hat{R}^{r,*}(h_{\bm{\theta}})\le R_{\min}^r+\eta$),
    \item for all $\bm{\theta}\in\Theta\setminus\Theta^r_{0,\epsilon}$, $\hat{R}^{r,*}(h_{\bm{\theta}})\ge R^r(h_{\bm{\theta}})-\eta\ge R_{\min}^r + \delta_\epsilon-\eta>R_{\min}^r+\tfrac{2}{3}\delta_\epsilon  > R_{\min}^r+2\eta$.
\end{enumerate}
Inequalities 
(i) and (ii)
imply that
\begin{equation*}
    \inf_{\bm{\theta}\in\Theta\setminus\Theta_{0,\epsilon}^r}\,\hat{R}^{r,*}(h_{\bm{\theta}})>\inf_{\bm{\theta}\in\Theta}\,\hat{R}^{r,*}(h_{\bm{\theta}}),
\end{equation*}
that is, the post-hoc adjustment estimators cannot lie outside $\Theta_{0,\epsilon}^r$ (i.e., $d_{\subset}(\hat{\Theta}^{r,*},\,\Theta^r_0)\le\epsilon$). Equivalently, 
\begin{equation*}
    \left\{\sup_{\bm{\theta}\in\Theta}\,\left|\,\hat{R}^{r,*}(h_{\bm{\theta}})-R^r(h_{\bm{\theta}})\,\right|\le\eta\right\}\subseteq 
    \left\{d_{\subset}(\hat{\Theta}^{r,*},\,\Theta^r_0)\le\epsilon\right\}.
\end{equation*}
Therefore,
\begin{equation*}
    P\left(d_{\subset}(\hat{\Theta}^{r,*},\,\Theta^r_0)\le\epsilon\right)\ge
    P\left(\sup_{\bm{\theta}\in\Theta}\,\left|\,\hat{R}^{r,*}(h_{\bm{\theta}})-R^r(h_{\bm{\theta}})\,\right|\le\eta\right)\to 1 \quad \text{as } n\to \infty,
\end{equation*}
which proves that, for all $r\in\{\text{MSE},\, \text{NRW},\, \text{Reg}\}$, $d_{\subset}(\hat{\Theta}^{r,*},\,\Theta^r_0) \,\xrightarrow{p} \,0$ as $n\to \infty$.
This completes the proof of Theorem~\ref{thm:consistency}. \hfill\BlackBox

\appsection{Magging under changing covariate distribution}\label{app:comparison_magging}
We show that, when the covariate distribution changes across environments, the magging estimator \citep{Buhlmann2016} need not minimize the maximum risk across training environments. Namely, there may be a different estimator that yields lower worst-case risk.

Consider the following data-generating setting. Let $\mathcal{E}_\text{tr}:=\{1,\,2,\,3\}$ denote the set of training environments and define
\begin{equation*}
    X^{(1)},X^{(3)}\sim0.9\cdot\mathcal{U}(0,4)+0.1\cdot\mathcal{U}(-4,0), \qquad 
    X^{(2)}\sim0.1\cdot\mathcal{U}(0,4)+0.9\cdot\mathcal{U}(-4,0).
\end{equation*}
For all $e\in\mathcal{E}_\text{tr}$, $Y^{(e)}=c_eX^{(e)}+\epsilon^{(e)}$, with $\epsilon^{(e)}\sim\mathcal{N}(0,1)$ independent of $X^{(e)}$, and slopes $c_1:=3$, $c_2:=-3$, and $c_3:=2$. 

We now show that, for all $x\in[-4,4]$, the optimal predictor of the form 
\begin{equation}\label{eq:predictors_piecewiselinear}
    f(x)=c_+x\mathbf{1}_{\{x\ge0\}}+c_-x\mathbf{1}_{\{x<0\}},
\end{equation}
with 
suitably chosen
$c_+, c_-\in\mathbb{R}$, achieves a lower maximum MSE compared to the optimal predictor of the form
\begin{equation}\label{eq:predictors_magging}
    g(x)=\beta x, \qquad \beta=q_1c_1+q_2c_2+q_3c_3, \qquad q_1,q_2,q_3\in[0,1],\,\sum_{e=1}^3 q_e=1
\end{equation}
(the latter corresponds to the magging estimator).
Indeed, 
with predictors of the form~\eqref{eq:predictors_piecewiselinear}, the mean squared errors in each environment are
\begin{align*}
    \mathbb{E}_1\left[\left(Y^{(1)}-f\left(X^{(1)}\right)\right)^2\right] &= \frac{24}{5}(3-c_+)^2+\frac{8}{15}(3-c_-)^2+1 \\
    \mathbb{E}_2\left[\left(Y^{(2)}-f\left(X^{(2)}\right)\right)^2\right] &= \frac{8}{15}(-3-c_+)^2+\frac{24}{5}(-3-c_-)^2+1 \\
    \mathbb{E}_3\left[\left(Y^{(3)}-f\left(X^{(3)}\right)\right)^2\right] &= \frac{24}{5}(2-c_+)^2+\frac{8}{15}(2-c_-)^2+1.
\end{align*}
Minimizing the maximum MSE across the three environments yields $c_+^*=2.4$, $c_-^*=-2.4$, and the resulting worst-case MSE is $18.28$. With predictors of the form~\eqref{eq:predictors_magging}, instead, minimizing the maximum MSE across environments yields $\beta^*=0$, which achieves a worst-case risk of $49$. Therefore, in this setting, the predictor minimizing the maximum MSE lies outside the class of magging estimators. 

This can be verified empirically, too:
Figure~\ref{fig:comparison_magging} shows the difference between MaxRM-RF and the magging estimator, both
aiming to minimize 
the maximum MSE. The former
is closer to 
the oracle function, while the latter 
is closer to 
the zero function, yielding a larger maximum MSE.

\begin{figure}[t]
    \centering
    \includegraphics[width=0.65\linewidth]{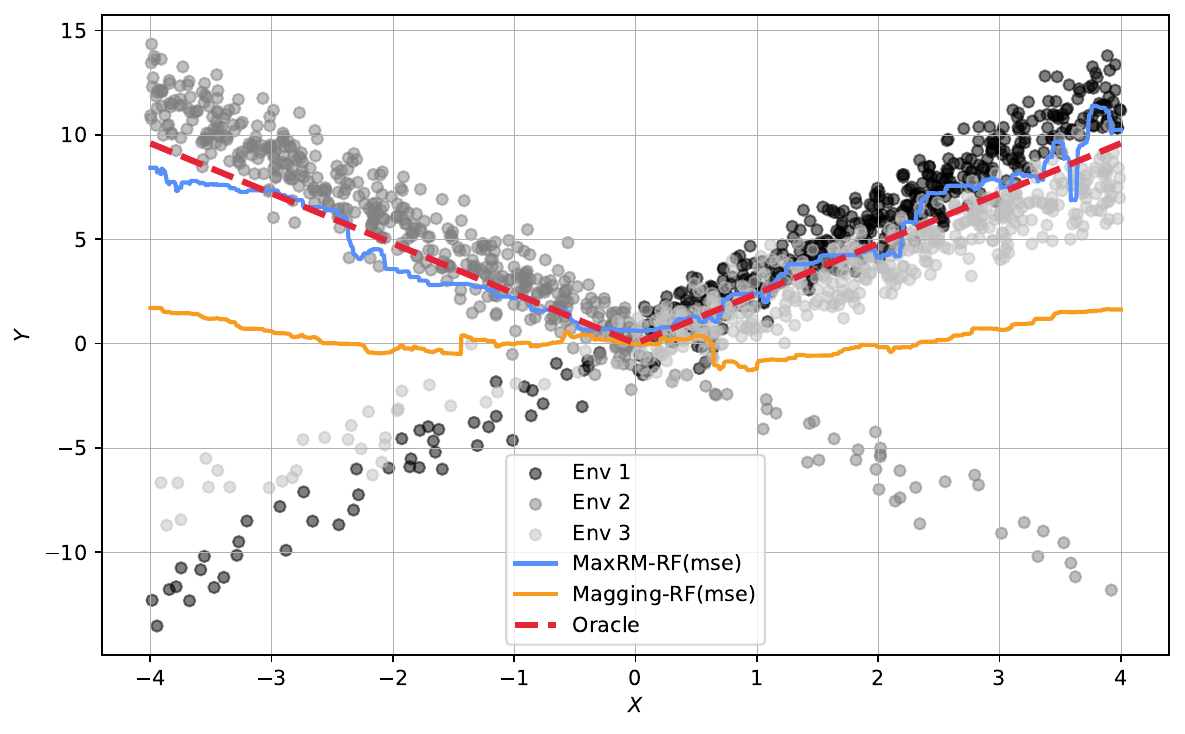}
    \caption{Data from three environments with different covariate distributions. The oracle function (dashed red) corresponds to $f^*(x)=2.4\,x\mathbf{1}_{\{x\ge0\}}-2.4\,x\mathbf{1}_{\{x<0\}}$. MaxRM random forest (solid blue) 
    more closely approximates the oracle solution and yields
    a lower maximum MSE than the magging estimator (solid orange).
    }
    \label{fig:comparison_magging}
\end{figure}

\appsection{Further details on optimization} \label{app:optim}
\subsection{Extragradient method for MaxRM-RF-posthoc}\label{sec:extragradient}
The extragradient method \citep{Korpelevich1976} is an algorithm designed to solve variational inequalities and saddle-point problems, that is, problems where the objective is simultaneously minimized with respect to some variables and maximized with respect to other variables \citep[see, e.g., also][]{Tseng1995, Gorbunov2022, Bach2019, Guo2025, Wang2025_pca}. 
A standard approach for such problems is gradient descent-ascent \citep[GDA;][]{Lin2020}, which, at each iteration, performs a descent step in the minimization variables and an ascent step in the maximization variables. However, GDA can fail to converge even in convex–concave problems \citep[see, e.g.,][and references therein]{Li2022}. To stabilize and improve the accuracy of the parameter updates, the extragradient method introduces an additional (`look-ahead') (half-)step and uses the gradient evaluated at this intermediate step for the full step (we provide details below). %

In our setting, we apply the extragradient method to the optimization problem~\eqref{eq:obj_posthoc}, after replacing $R_e^r$ with its empirical counterpart $\hat{R}_e^r$. 
By Theorem~\ref{thm:gdro_equivalence},~\eqref{eq:obj_posthoc} 
admits the equivalent saddle-point formulation:
\begin{equation*}
    \min_{\bm{\theta}\in\mathbb{R}^T}\,\max_{\bm{q}\in\Delta_K}\,\sum_{k=1}^K q_k \hat{R}_{e_k}^r (h_{\bm{\theta}}).
\end{equation*}
The resulting objective satisfies the standard convex-concave assumptions required for the extragradient method: the objective is convex in $\bm{\theta}$, since each $\hat{R}_e^r$ is quadratic in $\bm{\theta}$, and linear---and hence also concave---in $\bm{q}$. 

Algorithm~\ref{alg:extragradient} outlines the procedure. The projection operator $\Pi_{\Delta_K}$ in the ascent update denotes the projection onto the probability simplex $\Delta_K$, performed using %
Algorithm~1 of \citet{Wang2013}. %
In the experiments (see Appendix \ref{sec:algorithms_posthoc_evaluation}), we use the default hyperparameter values of our current implementation: step size $\gamma=0.1$, maximum number of iterations $T_{\max}=100$, tolerance $\delta=10^{-3}$, and patience $P=5$.

For simplicity, Algorithm~\ref{alg:extragradient}
is written for the empirical MSE. To minimize the maximum negative reward or regret across environments, it suffices to modify the function \textsc{RhatEMSE}
in Algorithm~\ref{alg:extragradient}:
\begin{itemize}
    \item For the negative reward, for all $e\in\mathcal{E}_\text{tr}$, subtract $\frac{1}{n_e} \sum_{i : e_i = e} y_i^2$;
    \item For the regret, one must first fit, for all $e \in \mathcal{E}_\text{tr}$, a separate regression tree by minimizing the MSE. For all $i\in\{1,\,\dots,\,n_e\}$, let $\hat{y}_i^{(e)}$ denote the fitted value for observation $i$ produced by the model trained on environment $e$. Then, in 
    \textsc{RhatEMSE}, 
    subtract $\frac{1}{n_e} \sum_{i : e_i = e} (y_i - \hat{y}_i^{(e)})^2$. 
\end{itemize}
No other changes to the algorithm are needed. 

\begin{algorithm}[t]
\caption{Extragradient method for post-hoc adjustment}\label{alg:extragradient}
\begin{algorithmic}[1]
\Statex\hspace{-\algorithmicindent} \textbf{Input:} Response values and environment labels $\{(y_i, e_i)\}_{i=1}^n$; training environments $\mathcal{E}_\text{tr}$; 
index partition $\{I_t\}_{t=1}^T$ with $I_t := \{i \in \{1, \ldots, n \} \mid  \bm{x}_i\in\mathcal{X}_t \}$; initial values $\bm{\theta}^{(0)} \in \mathbb{R}^{T}$; step size $\gamma$; 
max iterations $T_{\max}$; tolerance $\delta$; patience $P$ 
\Statex\hspace{-\algorithmicindent} \textbf{Output:} Optimized leaf values $\bm{\theta}^*$
\Statex\hspace{-\algorithmicindent} \textbf{Function} $\textsc{RhatEMSE}(\bm{\theta}, e)\gets\frac{1}{n_e} \sum_{t=1}^{T} \sum_{
i\in I_t
} (y_i - \theta_t)^2 \mathbf{1}_{\{e_i = e\}}$
\Statex\hspace{-\algorithmicindent} \textbf{Function} $\textsc{NablaRhatEMSE}(\bm{\theta}, e)
    \gets \left( -\frac{2}{n_e} \sum_{
    i\in I_t
    } (y_i - \theta_t)\mathbf{1}_{\{e_i = e\}} \right)_{t=1}^T$
\State $K \gets |\mathcal{E}_\text{tr}|$ 
\State $\text{no\_improve} \gets 0$, $\quad l^{(0)} \gets +\infty$, $\quad q_k^{(0)} \gets 1/K,\; \forall k\in\{1,\,\dots,\,K\}$
\State $\bm{\theta}^* \gets \bm{\theta}^{(0)}$, $l^* \gets l^{(0)}$
\For{$j = 0,\dots,T_{\max}-1$}
    \State %
    $\hat{R}_e^\text{MSE}(h_{\bm{\theta}^{(j)}})\gets \textsc{RhatEMSE}(\bm{\theta}^{(j)},e), \quad \forall e \in \mathcal{E}_\text{tr}$
    \State $\nabla \hat{R}_e^\text{MSE}(h_{\bm{\theta}^{(j)}}) \gets \textsc{NablaRhatEMSE}(\bm{\theta}^{(j)},e), \quad \forall e \in \mathcal{E}_\text{tr}$
    \State \textcolor{mygray}{$\triangleright$ Extragradient step 1: half step}
    \State $\bm{\theta}^{(j+1/2)} \gets \bm{\theta}^{(j)} - \gamma \sum_{k=1}^K q_k^{(j)} \nabla \hat{R}_{e_k}^\text{MSE}(h_{\bm{\theta}^{(j)}})$
    \State $q_k^{(j+1/2)} \gets \Pi_{\Delta_K} \left(q_k^{(j)} + \gamma \, \hat{R}_{e_k}^\text{MSE}(h_{\bm{\theta}^{(j)}})\right), \quad \forall k\in\{1,\,\dots,\,K\}$
    \State \textcolor{mygray}{$\triangleright$ Evaluate at half step}
    \State %
    $\hat{R}_e^\text{MSE}(h_{\bm{\theta}^{(j+1/2)}})\gets \textsc{RhatEMSE}(\bm{\theta}^{(j+1/2)},e), \quad \forall e \in \mathcal{E}_\text{tr}$
    \State $\nabla \hat{R}_e^\text{MSE}(h_{\bm{\theta}^{(j+1/2)}}) \gets \textsc{NablaRhatEMSE}(\bm{\theta}^{(j+1/2)},e), \quad \forall e \in \mathcal{E}_\text{tr}$
    \State \textcolor{mygray}{$\triangleright$ Extragradient step 2: full step}
    \State $\bm{\theta}^{(j+1)} \gets \bm{\theta}^{(j)} - \gamma \sum_{k=1}^K q_k^{(j+1/2)} \nabla \hat{R}_{e_k}^\text{MSE}(h_{\bm{\theta}^{(j+1/2)}})$
    \State $q_k^{(j+1)} \gets \Pi_{\Delta_K} \left(q_k^{(j)} + \gamma \, \hat{R}_{e_k}^\text{MSE}(h_{\bm{\theta}^{(j+1/2)}})\right), \quad \forall k\in\{1,\,\dots,\,K\}$
    \State \textcolor{mygray}{$\triangleright$ Evaluate at full step}
    \State %
    $\hat{R}_e^\text{MSE}(h_{\bm{\theta}^{(j+1)}})\gets \textsc{RhatEMSE}(\bm{\theta}^{(j+1)},e), \quad \forall e \in \mathcal{E}_\text{tr}$
    \State $l^{(j+1)} \gets \max_{e \in \mathcal{E}_\text{tr}} \hat{R}_e^\text{MSE}(h_{\bm{\theta}^{(j+1)}})$
    \If{$l^{(j+1)} < l^* - \delta$}
        \State $l^* \gets l^{(j+1)}, \quad \bm{\theta}^* \gets \bm{\theta}^{(j+1)}$
        \State $\text{no\_improve} \gets 0$
    \Else
        \State $\text{no\_improve} \gets \text{no\_improve} + 1$
    \EndIf
    \If{$\text{no\_improve} \geq P$}
        \State \textbf{break}
    \EndIf
\EndFor
\State \Return $\bm{\theta}^*$
\end{algorithmic}
\end{algorithm}

\subsection{Block-coordinate descent for MaxRM-RF-posthoc}\label{sec:bcd}
Section~\ref{sec:leaves} describes how solving~\eqref{eq:obj_posthoc} can be tackled using an SOCP.
To reduce the dimensionality of the problem when using interior point solvers, 
we also consider a block-coordinate descent (BCD) strategy for~\eqref{eq:obj_posthoc}. We partition the parameter vector $\bm{\theta}$ into disjoint blocks of fixed size $b$ and, at each iteration, solve a sub-problem by optimizing over a single block while keeping the remaining blocks fixed. 
Each subproblem can be cast in epigraph form and solved as an SOCP with interior-point solvers. We use fixed-sized blocks and update them in a cyclic manner across iterations. Alternatively, one could randomly select a block to update at each iteration or use variable-sized blocks. 
Algorithm~\ref{alg:bcd} summarizes the procedure. 
Although Algorithm~\ref{alg:bcd} is described for the MSE objective, it can be used for the negative reward or regret by adjusting the constraints accordingly, as described in 
Section~\ref{sec:extragradient}. 
As for the extragradient method, in our experiments we select the default hyperparameter values of our implementation: block size $b=15$, maximum number of iterations $T_{\max}=100$, tolerance $\delta=10^{-3}$, and patience $P=1$.
\begin{algorithm}[t]
\caption{Block-coordinate descent for posthoc adjustment}\label{alg:bcd}
\begin{algorithmic}[1]
\Statex\hspace{-\algorithmicindent} \textbf{Input:} Response values and environment labels $\{(y_i, e_i)\}_{i=1}^n$; training environments $\mathcal{E}_\text{tr}$; 
index partition $\{I_t\}_{t=1}^T$ with $I_t := \{i \in \{1, \ldots, n \} \mid  \bm{x}_i\in\mathcal{X}_t \}$; initial values $\bm{\theta}^{(0)}\in\mathbb{R}^{T}$; block size $b$; max iterations $T_{\max}$; tolerance $\delta$; patience $P$ 
\Statex\hspace{-\algorithmicindent} \textbf{Output:} Optimized leaf values $\bm{\theta}^*$
\State Partition $\{1,\dots,T\}$ into $B=\lceil T/b\rceil$ blocks $\{\mathcal{B}_1,\dots,\mathcal{B}_B\}$
\State $\text{no\_improve}\gets0$, $z^{(0)}\gets+\infty$
\State $\bm{\theta}^*\gets\bm{\theta}^{(0)}$, $z^*\gets z^{(0)}$
\For{$j=0,\dots,T_{\max}-1$}
    \State $b_j\gets(j\bmod B)+1$
    \State Fix $\theta_h^{(j+1)}=\theta_h^{(j)}\quad\forall\,h\notin\mathcal{B}_{b_j}$
    \State Solve
        \[
        \begin{aligned}
        \min_{\{\theta_l^{(j+1)}\}_{l\in\mathcal{B}_{b_j}}\in\mathbb{R}^{|\mathcal{B}_{b_j}|},\,z^{(j+1)}\in[0,\,\infty)}\,&z^{(j+1)},\\
        \text{s.t.}\,&
            \frac{1}{n_e}\sum_{t=1}^{T}\sum_{
            i\in I_t}
            (y_i-\theta_t^{(j+1)})^2\,\mathbf{1}_{\{e_i=e\}}
            \le z^{(j+1)},\quad\forall e\in\mathcal{E}_\text{tr}.
        \end{aligned}
        \]
        \indent and update block values $\{\theta_l^{(j+1)}\}_{l\in\mathcal{B}_{b_j}}$ and $z^{(j+1)}$
    \If{$|z^{(j+1)}-z^*|<\delta$}
        \State $\text{no\_improve}\gets\text{no\_improve}+1$
    \Else
        \State $\text{no\_improve}\gets0$
    \EndIf
    \If{$z^{(j+1)}<z^*$}
        \State $z^*\gets z^{(j+1)}$, \quad $\bm{\theta}^*\gets\bm{\theta}^{(j+1)}$
    \EndIf
    \If{$\text{no\_improve}\ge P$}
        \State \textbf{break}
    \EndIf
\EndFor
\State \Return $\bm{\theta}^*$
\end{algorithmic}
\end{algorithm}

\subsection{Alternative computation of local split parameters for MaxRM-RF-local}\label{sec:alternative_alg_local}
To avoid relying on interior-point solvers for the local update step (i.e., for computing $\theta^L$ and $\theta^R$), we exploit the structure of the optimization problem, which, empirically (not shown), is faster than interior point solvers if there are $\leq 3$ environments. 

Suppose the current partition consists of $M$ leaf regions and that the parameter vector resulting after splitting the $j$-th region is $\bm{\theta}^\prime\coloneqq(\theta_1,\,\dots,\,\theta_{j-1},\,\theta^{L},\,\theta^{R},\,\theta_{j+1},\,\dots,\,\theta_M)\in\mathbb{R}^{M+1}$, with $j\in\{1,\dots,M\}$. We wish to determine the values $\theta^L$ and $\theta^R$ assigned to the two resulting regions by solving, for an $r\in\{\mathrm{MSE},\,\mathrm{NRW},\,\mathrm{Reg}\}$, $\min_{(\theta^L,\,\theta^R)\in\mathbb{R}^2}\,\max_{e\in\mathcal{E}_\text{tr}}\,\hat{R}_e^r(h_{\bm{\theta}^\prime})$.

Let therefore $r\in\{\mathrm{MSE},\,\mathrm{NRW},\,\mathrm{Reg}\}$ be fixed. We can characterize the solutions of the problem using the Karush-Kuhn-Tucker (KKT) conditions, which in this case are both necessary and sufficient due to the convexity and differentiability of the objective and constraints, together with the existence of a strictly feasible point (Slater's condition)---see \citet[][Chapter~5]{Boyd2004}. For all $k\in\{1,\,\dots,\,K\}$, we introduce the Lagrange multiplier $\lambda_k$. The Lagrangian reads $$L(\theta^L,\, \theta^R,\, z,\, \lambda_1,\, \dots,\,\lambda_K)=z+\sum_{k=1}^K\lambda_k(\hat{R}_{e_k}^r(h_{\bm{\theta}^\prime})-z).$$
Let $(\theta^{*,\,L},\,\theta^{*,\,R},\,z^*, \{\lambda^*_k\}_{k=1}^K)$ be an optimal primal-dual solution and let the optimal parameter be $\bm{\theta}^*\coloneqq(\theta_1,\,\dots,\,\theta_{j-1},\,\theta^{*,\,L},\,\theta^{*,\,R},\,\theta_{j+1},\,\dots,\,\theta_M)$. The KKT conditions are as follows: 
\begin{enumerate}[label=(\roman*)]
    \item\label{itm:kkt1} Primal feasibility: for all $k\in\{1,\,\dots,\,K\}$, $\hat{R}_{e_k}^r(h_{\bm{\theta}^*})\le z^*$;
    \item\label{itm:kkt2} Dual feasibility: for all $k\in\{1,\,\dots,\,K\}$, $\lambda_k^*\ge0$; 
    \item\label{itm:kkt3} Complementary slackness: for all $k\in\{1,\,\dots,\,K\}$, $\lambda_k^*(\hat{R}_{e_k}^r(h_{\bm{\theta}^*})-z^*)=0$, which implies that if $\lambda_k^*>0$, then $\hat{R}_{e_k}^r(h_{\bm{\theta}^*})=z^*$;
    \item\label{itm:kkt4} Stationarity at the optimum:
    \begin{align*}
        \nabla_{(\theta^{L},\,\theta^R)}L\Big|_{\left(\bm{\theta}^*,z^*,\{\lambda_k^*\}_{k=1}^K\right)}&=\sum_{k=1}^K\lambda_k\nabla_{(\theta^L,\,\theta^R)}\hat{R}_{e_k}^r(h_{\bm{\theta}^\prime})\Big|_{\left(\bm{\theta}^*,\{\lambda_k^*\}_{k=1}^K\right)}=0, \\
        \frac{\partial L}{\partial z}\Big|_{\left(\bm{\theta}^*,z^*,\{\lambda_k^*\}_{k=1}^K\right)}&=1-\sum_{k=1}^K\lambda_k^*=0 \,\Leftrightarrow\,\sum_{k=1}^K\lambda_k^*=1.
    \end{align*}
\end{enumerate}
Define the \emph{active} set at the optimum as $\mathcal{A}\coloneqq\{k\in \{1,\,\dots,\,K \} \mid \hat{R}_{e_k}^r(h_{\bm{\theta}^*})=z^*\}$.
Complementary slackness implies that $\lambda_k^*>0$ only if $k\in\mathcal{A}$.
Hence, the KKT conditions~\ref{itm:kkt2}, \ref{itm:kkt3} and \ref{itm:kkt4} 
are equivalent to
the following statements: $\forall k\in\mathcal A,\;\lambda_k^*\ge 0$, $\forall k\in\{1,\,\dots,\,K\}\setminus\mathcal A,\;\lambda_k^*=0$,
\begin{equation}\label{eq:KKT_grad}
    \sum_{k\in\mathcal{A}}\lambda_k^*=1,\quad \sum_{k\in\mathcal{A}}\lambda_k^* \nabla_{(\theta^{L},\,\theta^R)}\hat{R}_{e_k}^r(h_{\bm{\theta}^\prime})\Big|_{\bm{\theta}^\prime=\bm{\theta}^*}=0.
\end{equation}
For all $k\in\{1,\ldots,K\}$, let $n_k^L$ and $n_k^R$ denote the number of observations from environment $e_k$ that fall in $\mathcal{X}^L$ and $\mathcal{X}^R$, respectively, and let $\mu_k^L$ and $\mu_k^R$ denote the corresponding sample means. Then, $\nabla_{(\theta^L,\,\theta^R)}\hat{R}_{e_k}^r(h_{\bm{\theta}^\prime})=(2n_k^L(\theta^L-\mu_k^L),\,2n_k^R(\theta^R-\mu_k^R))^\top/n_k$ where $n_k$ is the total number of observations from environment $e_k$.

\paragraph{Case $K=1$.} When there is only one environment, as in standard regression trees, the solution is given by $\theta^{*,\,L}=\mu_1^L$,  $\theta^{*,\,R}=\mu_1^R$.

\paragraph{Case $K=2$.} When two environments are present, consider multipliers $(\lambda,1-\lambda)$ with $\lambda\in[0,1]$ and a candidate solution $\bm{\theta}(\lambda)$. The stationarity condition reads $$\lambda\nabla_{(\theta^L,\,\theta^R)}\hat{R}_{e_1}^r(h_{\bm{\theta}(\lambda)})+(1-\lambda)\nabla_{(\theta^L,\,\theta^R)}\hat{R}_{e_2}^r(h_{\bm{\theta}(\lambda)})=0,\quad\lambda\in[0,1].$$
Solving this system yields the parameterized solution  
$$
\theta^{L}(\lambda)=\frac{\lambda \frac{n_1^L}{n_1}\mu_1^L+(1-\lambda)\frac{n_2^L}{n_2}\mu_2^L}{\lambda \frac{n_1^L}{n_1}+(1-\lambda)\frac{n_2^L}{n_2}},\quad 
\theta^{R}(\lambda)=\frac{\lambda \frac{n_1^R}{n_1}\mu_1^R+(1-\lambda)\frac{n_2^R}{n_2}\mu_2^R}{\lambda \frac{n_1^R}{n_1}+(1-\lambda)\frac{n_2^R}{n_2}}.
$$ 
When $\lambda\in\{0,\,1\}$, the solution corresponds to a boundary case where only one environment is active, and the problem reduces to minimizing the risk in that environment. When $\lambda\in(0,\,1)$, both environments are active, and complementary slackness requires that their risks coincide: $\hat{R}_{e_1}^r(h_{\bm{\theta}(\lambda)})=\hat{R}_{e_2}^r(h_{\bm{\theta}(\lambda)})$.
If such a $\lambda$ exists, it can be found by solving this equality with a univariate root finder. Finally, we compare this candidate with the boundary cases $\lambda\in\{0,\,1\}$ and select the one with the smallest maximum risk. We denote the resulting choice by $\lambda^*$ and the corresponding parameters by $(\theta^{*,L},\theta^{*,R})$ (and maximum risk $z^*$).

\paragraph{Case $K\ge 3$.} When three or more environments are present, we use the following result.

\begin{theorem}[Carathéodory]\label{thm:caratheodory}
Let $X$ be a subset of $\mathbb{R}^d$, and let $\bm{q} \in \mathrm{conv}(X)$, the convex hull of $X$. 
Then $\bm{q}$ can be written as a convex combination of at most $d+1$ points in $X$.
\end{theorem}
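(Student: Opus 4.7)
The plan is to prove Carathéodory's theorem by a standard minimal-representation argument: start from any representation of $\bm q$ as a convex combination of points of $X$, assume we have chosen one with the smallest possible number of points $m$, and derive a contradiction if $m > d+1$ by constructing a shorter representation.

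First I would write $\bm q = \sum_{i=1}^m \lambda_i x_i$ with $x_i \in X$, $\lambda_i > 0$ (dropping zero coefficients), and $\sum_{i=1}^m \lambda_i = 1$, where $m$ is minimal. If $m \le d+1$ we are done, so suppose $m \ge d+2$. Then the $m-1$ vectors $x_2 - x_1, \ldots, x_m - x_1$ live in $\mathbb{R}^d$ and their count exceeds $d$, so they must be linearly dependent. Hence there exist scalars $\mu_2, \ldots, \mu_m$, not all zero, with $\sum_{i=2}^m \mu_i (x_i - x_1) = 0$. Setting $\mu_1 \coloneqq -\sum_{i=2}^m \mu_i$ yields coefficients $(\mu_1, \ldots, \mu_m)$, not all zero, satisfying $\sum_{i=1}^m \mu_i = 0$ and $\sum_{i=1}^m \mu_i x_i = 0$.

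Next I would exploit this redundancy. For any $t \in \mathbb{R}$, the coefficients $\lambda_i(t) \coloneqq \lambda_i - t\mu_i$ still sum to $1$ and still produce $\bm q$ as $\sum_{i=1}^m \lambda_i(t) x_i$. Since not all $\mu_i$ are zero and they sum to zero, at least one $\mu_i$ is strictly positive. Take $t^\star \coloneqq \min\{\lambda_i/\mu_i : \mu_i > 0\} > 0$; this is the largest value of $t$ keeping all $\lambda_i(t) \ge 0$. At $t = t^\star$, all $\lambda_i(t^\star) \ge 0$, they sum to $1$, and at least one of them equals zero. Dropping the vanishing term yields a representation of $\bm q$ as a convex combination of at most $m-1$ points of $X$, contradicting minimality of $m$. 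Hence $m \le d+1$, completing the proof.

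No step looks genuinely difficult: the only thing to be careful about is choosing $t^\star$ correctly so that the coefficients remain nonnegative, and making sure the sign convention (taking the minimum only over indices with $\mu_i > 0$) gives the desired cancellation. The argument is entirely standard linear-algebra bookkeeping on the affine dependence among $x_1, \ldots, x_m$.
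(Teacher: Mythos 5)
Your proof is correct: the minimal-representation argument with the affine dependence $\sum_i \mu_i x_i = 0$, $\sum_i \mu_i = 0$ and the perturbation $\lambda_i - t^\star \mu_i$ is the standard and complete proof of Carath\'eodory's theorem, and every step (in particular the choice of $t^\star$ over indices with $\mu_i > 0$) is handled correctly. Note that the paper itself does not prove this statement; it is quoted as a classical result and used as a tool in Appendix~\ref{sec:alternative_alg_local}, so there is no paper proof to compare against, but your argument is exactly the textbook one and would serve as a valid self-contained justification.
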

For all $k \in \mathcal{A}^*$, let 
$g_k := \nabla_{(\theta^L,\,\theta^R)}\hat{R}_{e_k}^r(h_{\bm{\theta}^\prime})\Big|_{\bm{\theta}^\prime=\bm{\theta}^*}$ 
be the environment-specific gradient at the optimum.
Condition~\eqref{eq:KKT_grad} implies that the zero vector lies in the convex hull of these environment-specific gradients, that is, $\bm{0} \in \mathrm{conv}(\{g_k\}_{k\in\mathcal{A}^*})\subseteq \mathbb R^2$. By Theorem~\ref{thm:caratheodory}, there exists a subset $S\subseteq\mathcal{A}^*$ with $|S|\le 3$ such that $\bm{0}\in\mathrm{conv}(\{g_k\}_{k\in S})$. Therefore, it suffices to consider candidate active sets with at most three environments.

We have already considered the cases $|S|=1$ and $|S|=2$. Now suppose that three environments $i,\,j,\,k\in\{1,\ldots,K\}$ are active.
Consider multipliers
$(\lambda_i,\lambda_j,\lambda_k)$ with $\lambda_k:=1-\lambda_i-\lambda_j$ and
$\lambda_i,\lambda_j,\lambda_k\ge 0$ and candidate parameters $\bm{\theta}(\lambda_i,\lambda_j)$.
The stationarity condition becomes
\begin{equation*}
    \lambda_i\nabla_{(\theta^L,\,\theta^R)}\hat{R}_{e_i}^r(h_{\bm{\theta}(\lambda_i,\lambda_j)})+\lambda_j\nabla_{(\theta^L,\,\theta^R)}\hat{R}_{e_j}^r(h_{\bm{\theta}(\lambda_i,\lambda_j)})+(1-\lambda_i-\lambda_j)\nabla_{(\theta^L,\,\theta^R)}\hat{R}_{e_k}^r(h_{\bm{\theta}(\lambda_i,\lambda_j)})=0.
\end{equation*}
Solving this system yields the parameterized solution
\begin{equation*}
    \theta^L(\lambda_i, \lambda_j)=\frac{\lambda_i \frac{n_i^L}{n_i}\mu_i^L+\lambda_j\frac{n_j^L}{n_j}\mu_j^L+(1-\lambda_i-\lambda_j)\frac{n_k^L}{n_k}\mu_k^L}{\lambda_i \frac{n_i^L}{n_i}+\lambda_j\frac{n_j^L}{n_j}+(1-\lambda_i-\lambda_j)\frac{n_k^L}{n_k}},
\end{equation*}
and analogously for $\theta^R(\lambda_i, \lambda_j)$. 
When all three multipliers are strictly positive, complementary slackness requires equality of the risks: $\hat{R}_{e_i}^r(h_{\bm{\theta}(\lambda_i,\lambda_j)})=\hat{R}_{e_j}^r(h_{\bm{\theta}(\lambda_i,\lambda_j)})=\hat{R}_{e_k}^r(h_{\bm{\theta}(\lambda_i,\lambda_j)})$, 
from which we obtain candidate values of $\lambda_i,\,\lambda_j,\,\lambda_k$ and thus of $\theta^L$ and $\theta^R$.

In practice, since the number of active environments is unknown, we proceed as follows:
\begin{enumerate}[label=(\roman*)]
    \item Enumerate all subsets $S\subseteq\{1,\,\dots,\,K\}$ with $|S|\le 3$;
    \item For every $S$, solve for a candidate $(\theta^L,\theta^R,z,\{\lambda_k\}_{k\in S})$
    such that: 
    $$\forall k\in S,\;\;\hat{R}_{e_k}^\text{MSE}(h_{\bm{\theta}^\prime})=z,\;\;\lambda_k>0; \quad \sum_{k\in S}\lambda_k\nabla_{(\theta^L,\,\theta^R)}\hat{R}_{e_k}^\text{MSE}(h_{\bm{\theta}^\prime})=0;\quad \sum_{k\in S}\lambda_k=1;$$
    \item Check that $\forall k^\prime\notin S$, $\hat{R}_{e_{k^\prime}}^\text{MSE}(h_{\bm{\theta}^\prime})\le z$; 
    \item Choose the feasible solution with smallest maximum risk $z$ and denote the resulting parameters by $(\theta^{*,L},\theta^{*,R})$ and the maximum risk by $z^*$.
\end{enumerate}
Empirically, when $K\le 3$, we find that, on the simulated datasets used in our experiments, this method to compute $\theta^{*,\,L}$ and $\theta^{*,\,R}$ in the local update strategy is faster than relying on interior-point solvers.

\appsection{Additional numerical experiments}\label{sec:additional_experiments}

\subsection{Comparison between MaxRM trees and MaxRM random forest}\label{sec:comparison_tree_rf}
\paragraph{Goal.} Theorem~\ref{thm:consistency} shows that, for a single regression tree, the set of post-hoc adjustment estimators converges to the set of population minimizers of the maximum risk across environments. We now aim to illustrate that 
MaxRM-RF reduces variance compared to
MaxRM regression trees (MaxRM-RT).

\paragraph{Experiment description.} We consider the data-generating setting introduced in Section~\ref{sec:exp_variants_comparison}, which admits a known oracle solution minimizing the maximum MSE across environments. We train both a single MaxRM-RT and MaxRM-RF with $100$ trees on $900$ observations, compute predictions on a test set, and evaluate the squared bias relative to the oracle solution and the variance of the predictions.

\paragraph{Results.} Table~\ref{tab:bias_variance} reports the squared bias and variance averaged over $50$ simulation runs for both MaxRM-RT and MaxRM-RF. A single MaxRM regression tree exhibits high variance ($0.60$) compared to the MaxRM-RF ($0.018$). Moreover, the forest also achieves a lower squared bias ($0.007$ compared to $0.035$), since its richer function class can better approximate the oracle solution.

\begin{table}[t]
    \centering
    \caption{Bias--variance decomposition for a single MaxRM regression tree (RT) and for the MaxRM random forest (RF). Reported values are averaged over 50 runs.}
    \begin{tabular}{lcc}
        \toprule
        Method & Bias$^2$ & Variance \\
        \midrule
        MaxRM-RT & $0.035$ & $0.604$ \\
        MaxRM-RF   & $0.007$ & $0.018$ \\
        \bottomrule
    \end{tabular}
    \label{tab:bias_variance}
\end{table}

\subsection{Evaluation of alternative algorithms for post-hoc adjustment}\label{sec:algorithms_posthoc_evaluation}
\paragraph{Goal.} In Appendix~\ref{app:optim}, we introduce two alternative algorithms to solve the post-hoc adjustment problem in~\eqref{eq:obj_posthoc}: the extragradient method (EG; Appendix~\ref{sec:extragradient}) and block-coordinate descent (BCD; Appendix~\ref{sec:bcd}). Here, we aim to evaluate the empirical performance of these two methods against the interior-point method used as the default for the post-hoc adjustment in MaxRM random forest.

\paragraph{Experiment description.} We consider the same data-generating process and same hyperparameters for all forests as in Section~\ref{sec:exp_variants_comparison}. All MaxRM-RF variants aim to minimize the maximum MSE across environments. In addition to RF and MaxRM-RF(mse) using the interior-point solver CLARABEL \citep{Goulart2024}, denoted simply by MaxRM-RF(mse), we evaluate MaxRM-RF(mse) with EG (with 
the default hyperparameter values specified in Appendix~\ref{sec:extragradient}%
) and with BCD (with 
the default hyperparameter values specified in Appendix~\ref{sec:bcd}).
Each full experiment is repeated $20$ times with different random seeds, and we
report the mean maximum MSE across environments, the pooled MSE, and the total runtime\footnote{The runtime is measured on an Apple M4 Pro CPU (14 cores) using 10 parallel workers.}, including $95\%$ confidence intervals. %

\paragraph{Results.} Figure~\ref{fig:xtrgrd} compares the fitted functions for a single simulated dataset.
Both MaxRM-RF(mse) and MaxRM-RF(mse) with EG closely align with the oracle solution that minimizes the maximum MSE across environments. While MaxRM-RF(mse) with BCD improves over RF in terms of maximum MSE, it yields a less accurate approximation than the other two MaxRM methods, possibly due to the small block size. Table~\ref{tab:xtrgrd} reports the average maximum and pooled MSE with $95\%$ confidence intervals across the $20$ repetitions. MaxRM-RF(mse) and MaxRM-RF(mse) with EG are not significantly different in terms of maximum and pooled MSE. Consistent with Figure~\ref{fig:xtrgrd}, MaxRM-RF(mse) with BCD also improves the maximum MSE compared to RF, but its approximation is less accurate, 
possibly
due to the small block size. 
\begin{figure}[t]
    \centering
    \includegraphics[width=0.7\linewidth]{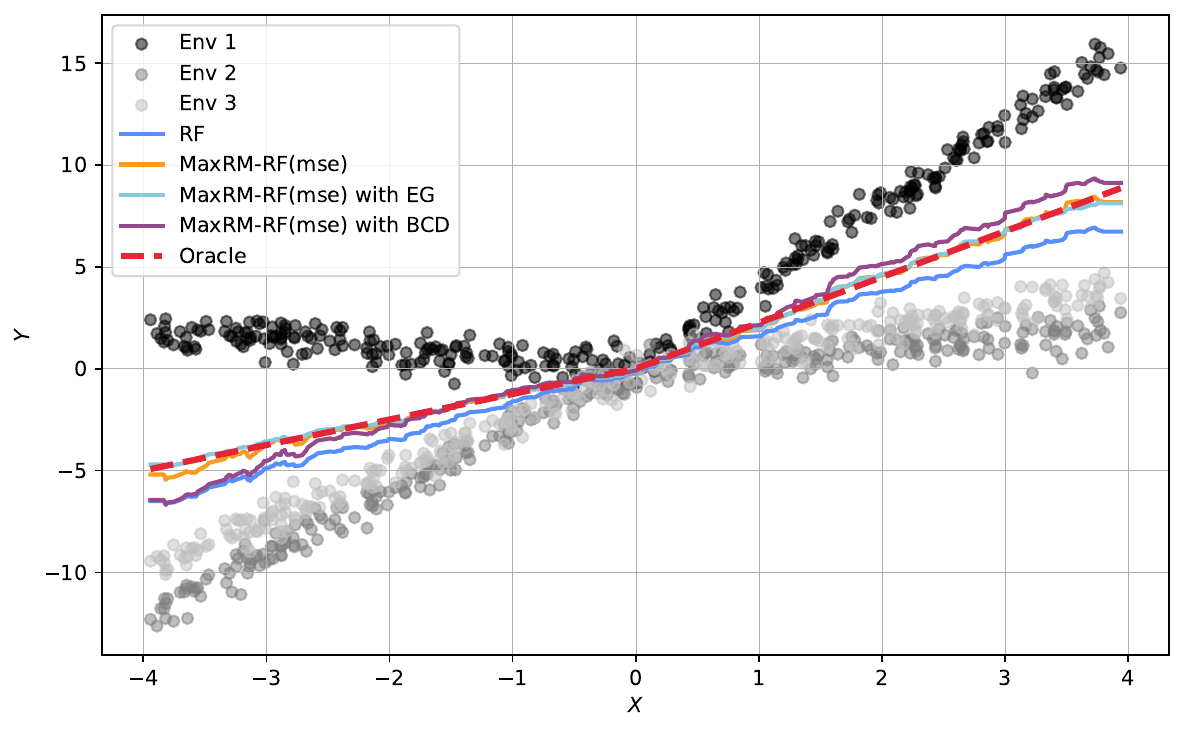}
    \caption{Comparison of fitted functions across methods: random forest (solid blue), MaxRM-RF with post-hoc adjustment using the interior-point method (solid orange), extragradient method (solid light blue), and block-coordinate descent (solid purple). All MaxRM-RF methods better approximate the oracle solution compared to RF, with BCD being slightly worse than than the other MaxRM-RF methods. Table~\ref{tab:xtrgrd} provides a quantitative comparison. 
    }\label{fig:xtrgrd}
\end{figure}
\begin{table}[t]
    \centering
    \caption{Comparison of forest-based methods in terms of maximum and pooled MSE across training environments, and runtime. Reported values are the mean $\pm$ half-length of the $95\%$ confidence interval. There are no significant differences between MaxRM-RF(mse) and MaxRM-RF(mse) with EG in terms of maximum MSE and pooled MSE. MaxRM-RF(mse) with BCD also improves over RF in terms of maximum MSE, but to a lesser extent.}
    \begin{tabular}{lccc}
        \toprule
        Method & Maximum MSE & Pooled MSE & Runtime (s)\\
        \midrule
        RF                       & $24.88 \pm 0.46$  & $12.82 \pm 0.23$ & $0.08 \pm 0.00$\\
        MaxRM-RF(mse)            & $16.75 \pm 0.32$  & $13.65 \pm 0.24$ & $1.05 \pm 0.03$\\
        MaxRM-RF(mse) with EG       & $16.76 \pm 0.30$  & $13.79 \pm 0.25$ & $1.92 \pm 0.12$\\
        MaxRM-RF(mse) with BCD       & $17.36 \pm 0.32$  & $14.04 \pm 0.25$ & $1.54 \pm 0.04$\\
        Oracle (for maximum MSE) & $16.58$           & $13.92$           & $-$\\
        Oracle (for pooled MSE)  & $25.29$ & $12.99$ & $-$\\
        \bottomrule
    \end{tabular}
    \label{tab:xtrgrd}
\end{table}

\subsection{Resolving leaf indeterminacy} \label{app:indet} 
We repeat the experiment from Section~\ref{sec:exp_with_shifts}, using the setting with shifts in $P^X_e$ across environments for the maximum MSE (results for other risks and for the no-shift setting are qualitatively similar and omitted). This time, after post-hoc adjustment, we revert the values of the leaves in each tree that do not contain any observations from the worst-case environments for that tree to the empirical mean of all observations of the response variable in the corresponding leaf (ignoring possible changes in the overall objective), as done by RF.
The results of Figure~\ref{fig:revert_to_RF}
\begin{figure}[t]
  \centering
  \subcaptionbox{Maximum MSE.}%
    {\includegraphics[width=0.49\linewidth]{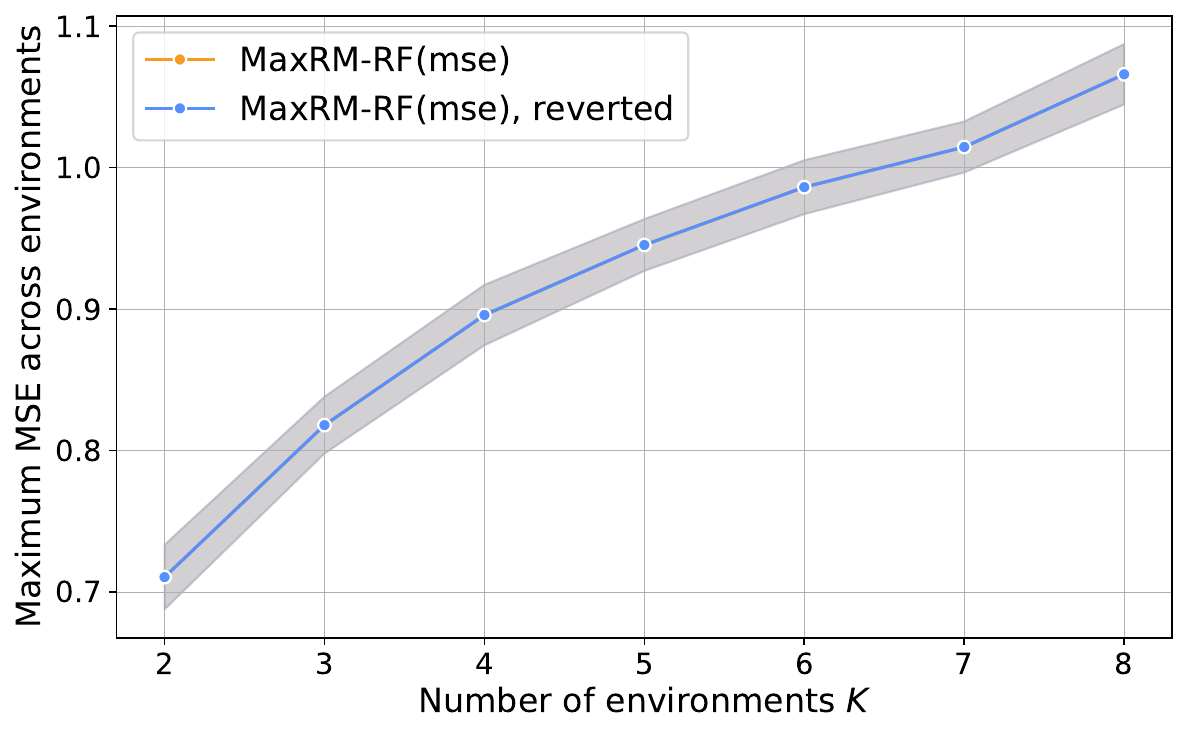}}
  \hfill
  \subcaptionbox{Proportion of indeterminate leaves.}%
    {\includegraphics[width=0.49\linewidth]{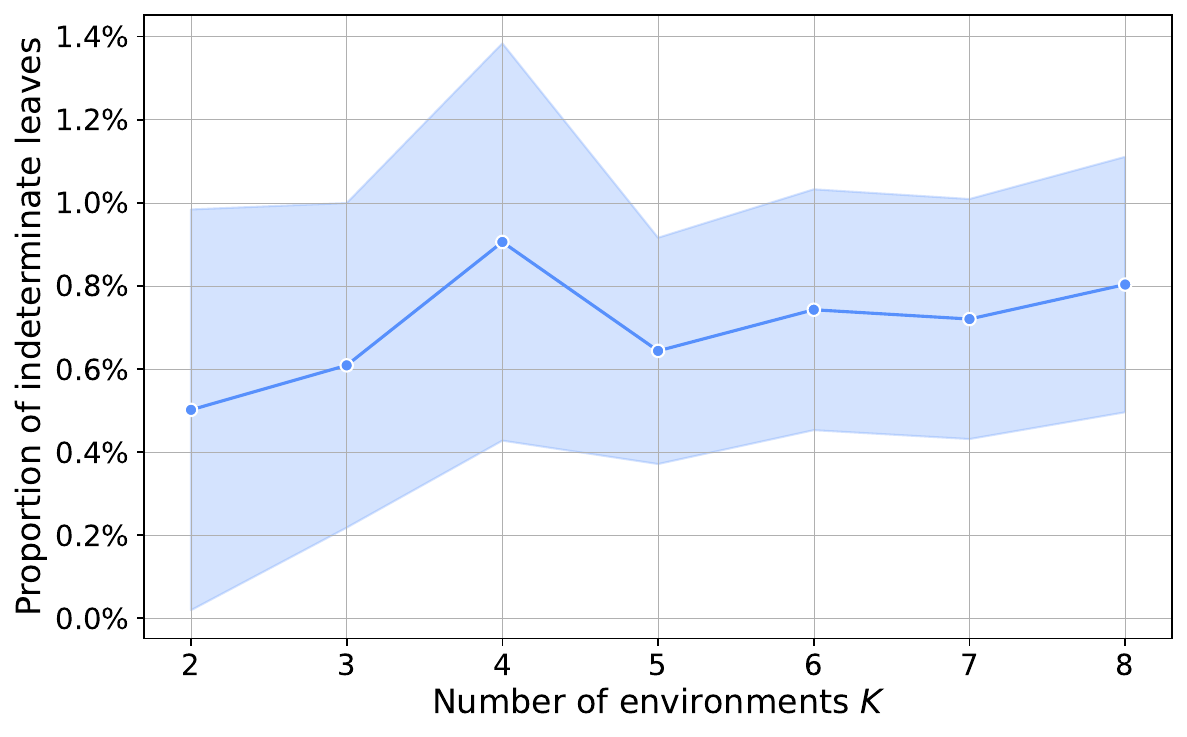}}
  \caption{Experiment from Section~\ref{sec:exp_with_shifts} (shifts in $P^X_e$), repeated for MaxRM-RF(mse) with implicit regularization (Remark~\ref{rem:indeterm}) and for a modified version in which leaf values not containing observations from the worst-case environments are reverted to their standard RF values (MaxRM-RF(mse), reverted). Curves show in (a) the average maximum risk on the test data of both variants over 100 repetitions and (b) the average proportion of indeterminate leaves among all leaves of the random forests; shaded regions show $95\%$ confidence intervals. The two variants exhibit indistinguishable performance (the yellow curve lies below the blue curve), and fewer than \(1\%\) of leaves are indeterminate.}
  \label{fig:revert_to_RF}
\end{figure}
show that this yields similar performance, and only few leaves are indeterminate. We therefore propose to use the implicit regularization as described in Remark~\ref{rem:indeterm}.

\subsection{Comparing MaxRM-RF and RF across hyperparameters}\label{sec:exp_hyperparams}

\paragraph{Goal.} 
In Sections~\ref{sec:exp_without_shifts}--\ref{sec:exp_identical_env}, we use the following default random forest hyperparameters: $m_\text{try}=5$ (all covariates considered at each split), an unrestricted maximum tree depth, and a minimum leaf size of $30$. We now aim to check how different choices for these hyperparameters affect the difference in performance between MaxRM-RF and RF in terms of maximum risk minimization.

\paragraph{Experiment description.} We use the simulation setting from Section~\ref{sec:exp_with_shifts} with shifts in $P^X_e$ across environments. We use the maximum MSE as the risk for both training and evaluation. The results for the regret and negative reward, as well as for the setting without shifts, are qualitatively similar and not shown. We perform three experiments, varying one hyperparameter at a time (keeping the others fixed at their default values):
\begin{enumerate}[label=(\alph*)]
    \item $m_\text{try} \in \{1,2,3,4,5\}$,
    \item maximum tree depth $\in \{3,4,5,6,7,8\}$,
    \item minimum leaf size $\in \{1,3,5,10,15,20,25,30,35,40\}$.
\end{enumerate}
All forests use $B=100$ trees. We report the average maximum test MSE over $100$ repetitions.

\paragraph{Results.} Figure~\ref{fig:hyperparams}
\begin{figure}[t]
  \centering
  \subcaptionbox{$m_\text{try}$\label{fig:mtry}}%
    {\includegraphics[width=0.32\linewidth]{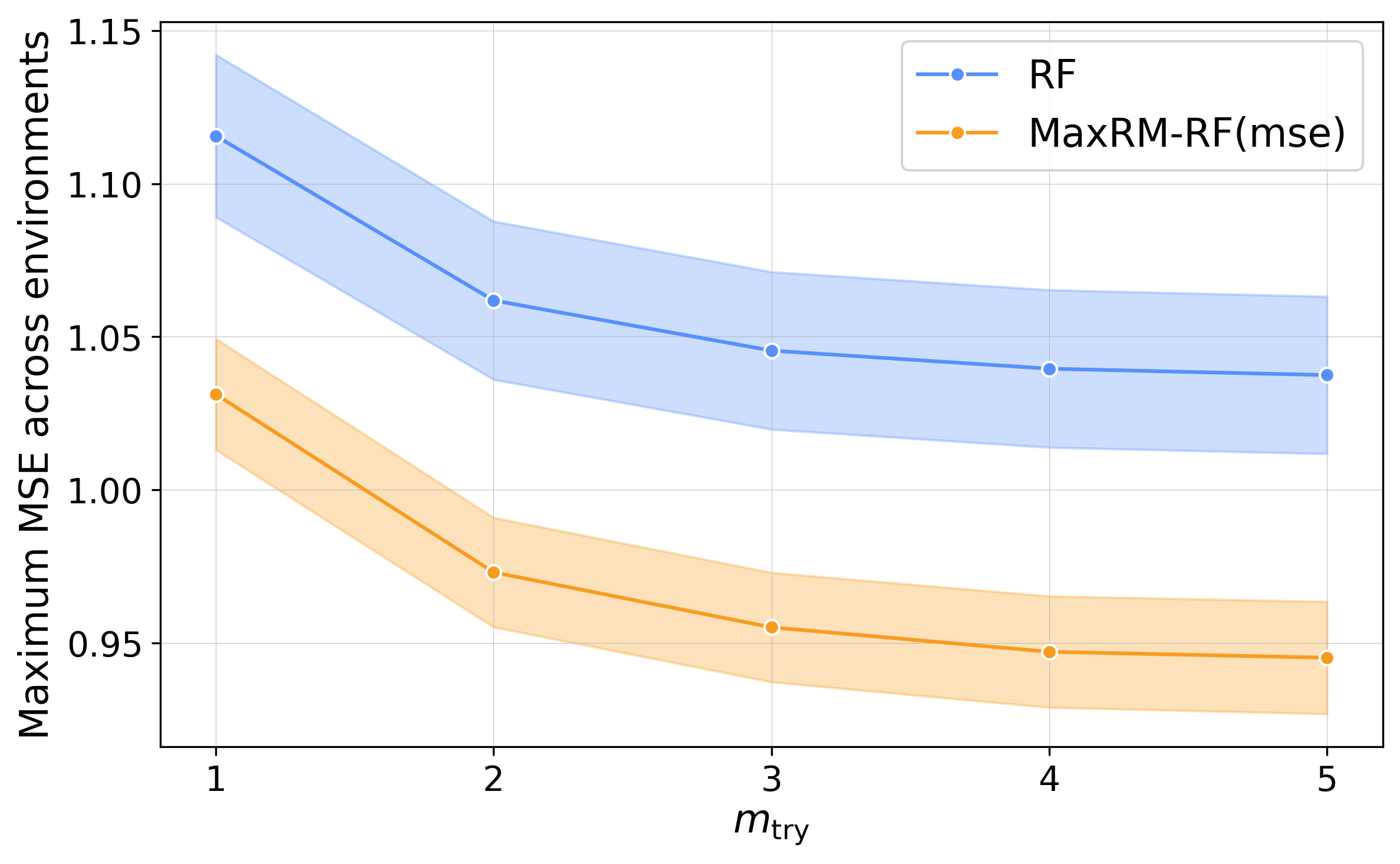}}
  \hfill
  \subcaptionbox{Maximum tree depth\label{fig:max_tree_depth}}%
    {\includegraphics[width=0.32\linewidth]{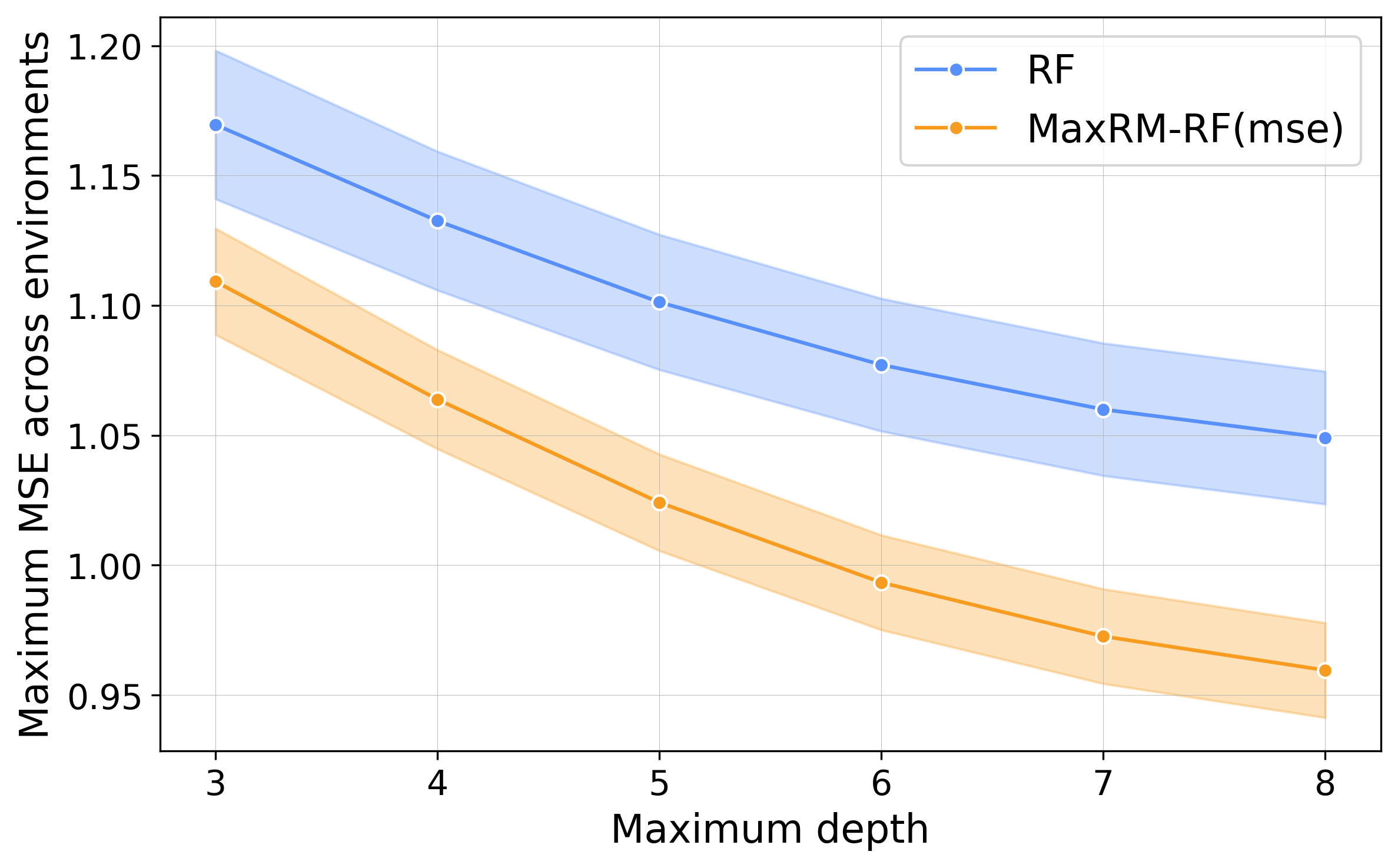}}
    \hfill
    \subcaptionbox{Minimum leaf size\label{fig:min_no_obs_leaf}}%
    {\includegraphics[width=0.32\linewidth]{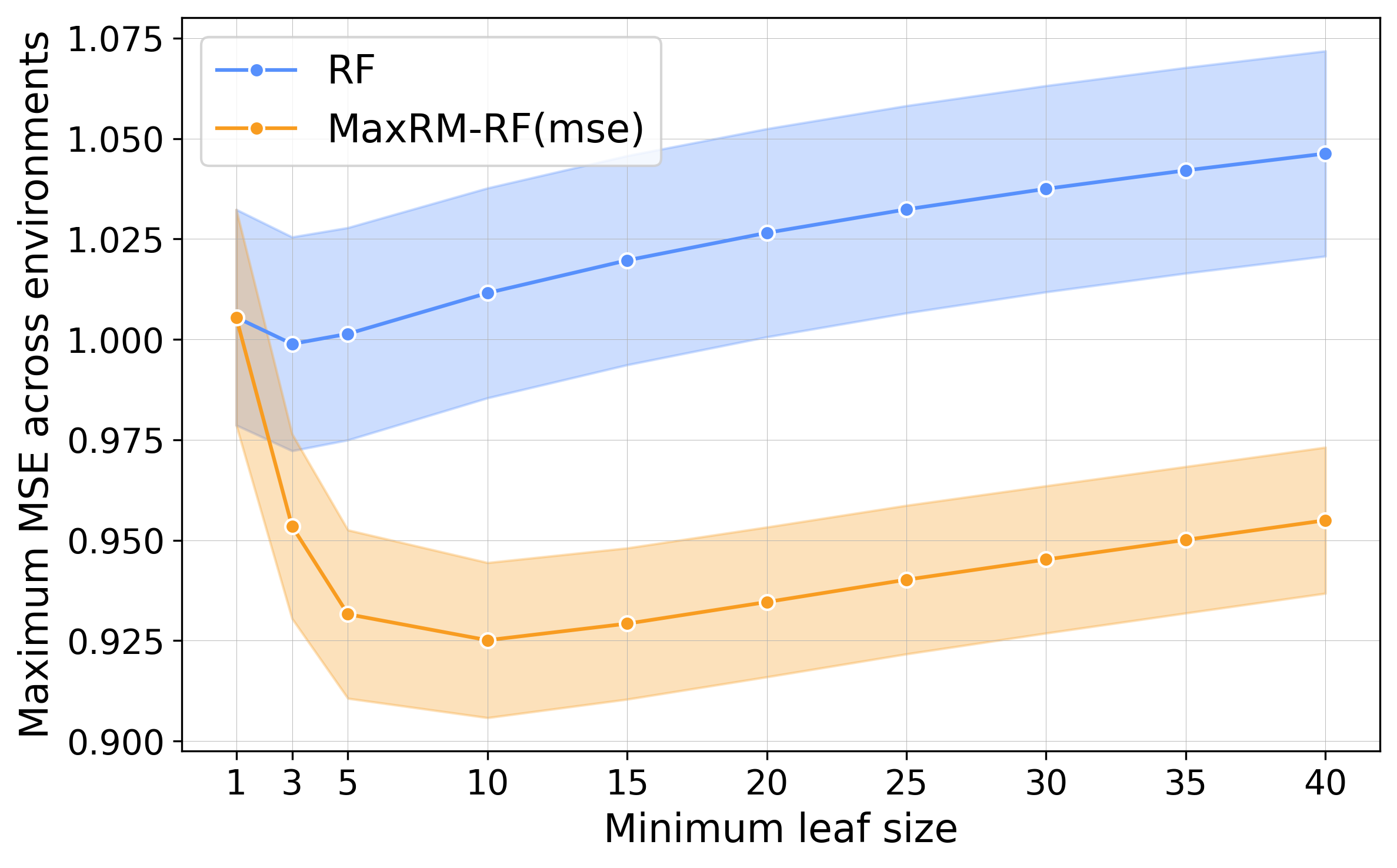}}
  \hfill
  \caption{Effect of hyperparameter choices on worst-case performance for RF and MaxRM-RF in terms of maximum MSE. Curves show the average maximum test MSE over $100$ repetitions with $95\%$ confidence intervals. MaxRM-RF outperforms RF across all settings, except for when the minimum leaf size is too small (this is to be expected, see Section~\ref{sec:exp_hyperparams}).}
  \label{fig:hyperparams}
\end{figure}
shows that MaxRM-RF reduces the maximum test MSE compared to RF for all tested hyperparameter values. For increasing $m_\text{try}$ (Figure~\ref{fig:hyperparams} (a)), the maximum MSEs decrease for both methods. Increasing the maximum tree depth (Figure~\ref{fig:hyperparams} (b)) improves worst-case performance for both methods. Increasing the minimum leaf size (Figure~\ref{fig:hyperparams} (c)) leads to higher maximum MSEs for both methods, as fewer splits reduce the flexibility of random forests. For a small minimum leaf size, the performance of MaxRM-RF approaches that of RF. This is expected, because in the extreme case of a single observation per leaf, each leaf corresponds to a single response value and environment. The RF leaf value already minimizes both the pooled and per-environment risks in that leaf, and therefore also the maximum risk, so post-hoc adjustment does not alter the RF leaf value.

\subsection{Comparison with group DRO and magging under alternative risks} \label{sec:app_comparison_grdo_magging_diffrisks}

\paragraph{Goal.} Section~\ref{sec:comparison_gdro} evaluates the performance of MaxRM-RF, the standard random forest, group DRO, and magging using the MSE. We now study their performance using the negative reward and regret instead.

\paragraph{Experiment description.} We repeat the experiments from Section~\ref{sec:comparison_gdro} with the same setup, but we replace the MSE with the negative reward and the regret. MaxRM-RF, group DRO, and magging are each configured to minimize the same maximum risk (negative reward or regret) that is used for evaluation.

\paragraph{Results.} Overall, the results for both alternative risks remain qualitatively the same as
the findings for the MSE from Section~\ref{sec:comparison_gdro}.
Figure~\ref{fig:comparison_nrw_reg_noshift} 
\begin{figure}[t]
  \centering
  \subcaptionbox{Maximum negative reward.}%
    {\includegraphics[width=0.49\linewidth]{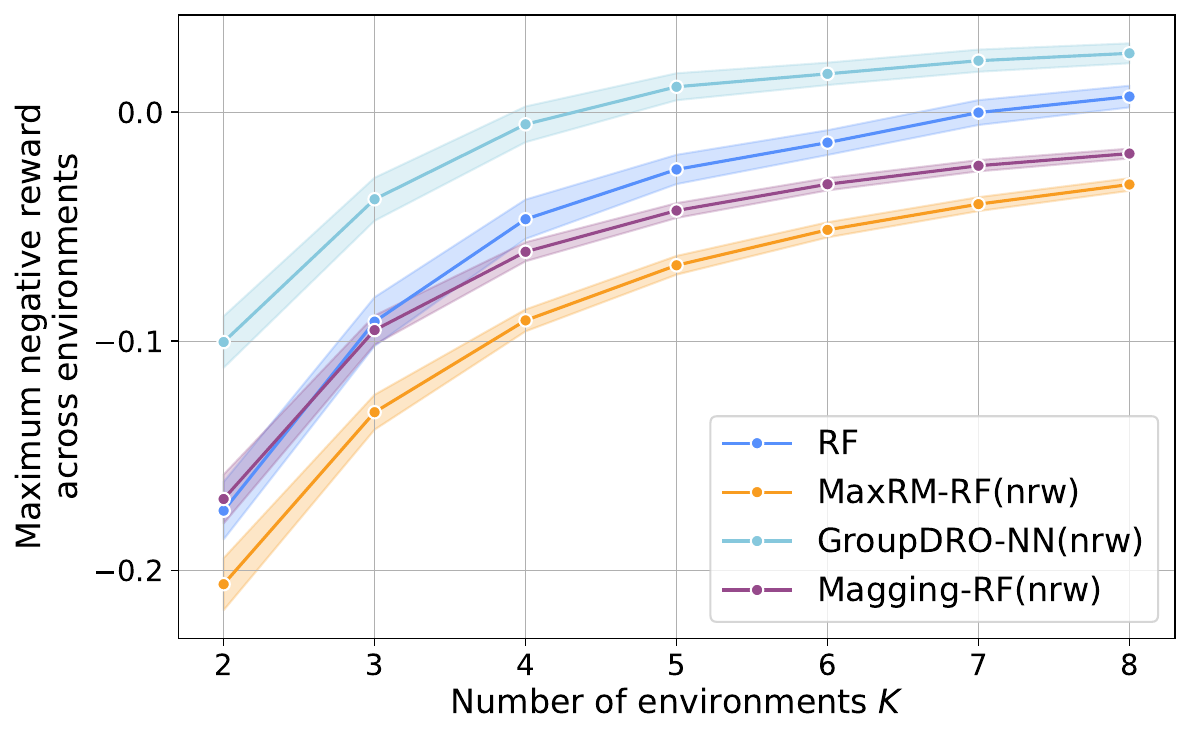}}
  \hfill
  \subcaptionbox{Maximum regret.}%
    {\includegraphics[width=0.49\linewidth]{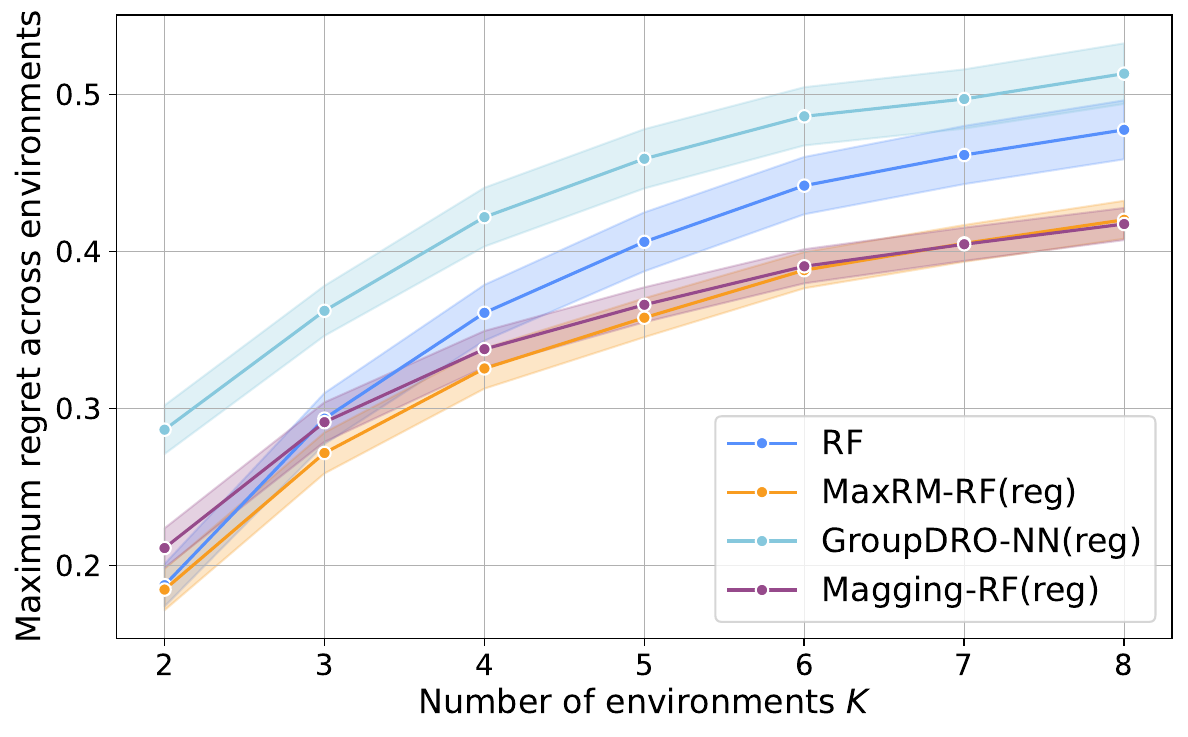}}
  \caption{Experiment from Section~\ref{sec:exp_without_shifts} without shifts in $P^X_e$ across environments, repeated for (a) the negative reward  and (b) the regret instead of the MSE. Curves report the average maximum risk over 100 repetitions; shaded regions show $95\%$ confidence intervals. MaxRM-RF and magging outperform group DRO and RF for both risks. For the negative reward in (a), MaxRM-RF obtains lower maximum negative rewards than magging, while the maximum regrets are similar for both methods in (b).}
  \label{fig:comparison_nrw_reg_noshift}
\end{figure}
\begin{figure}[t]
  \centering
  \subcaptionbox{Maximum negative reward.}%
    {\includegraphics[width=0.49\linewidth]{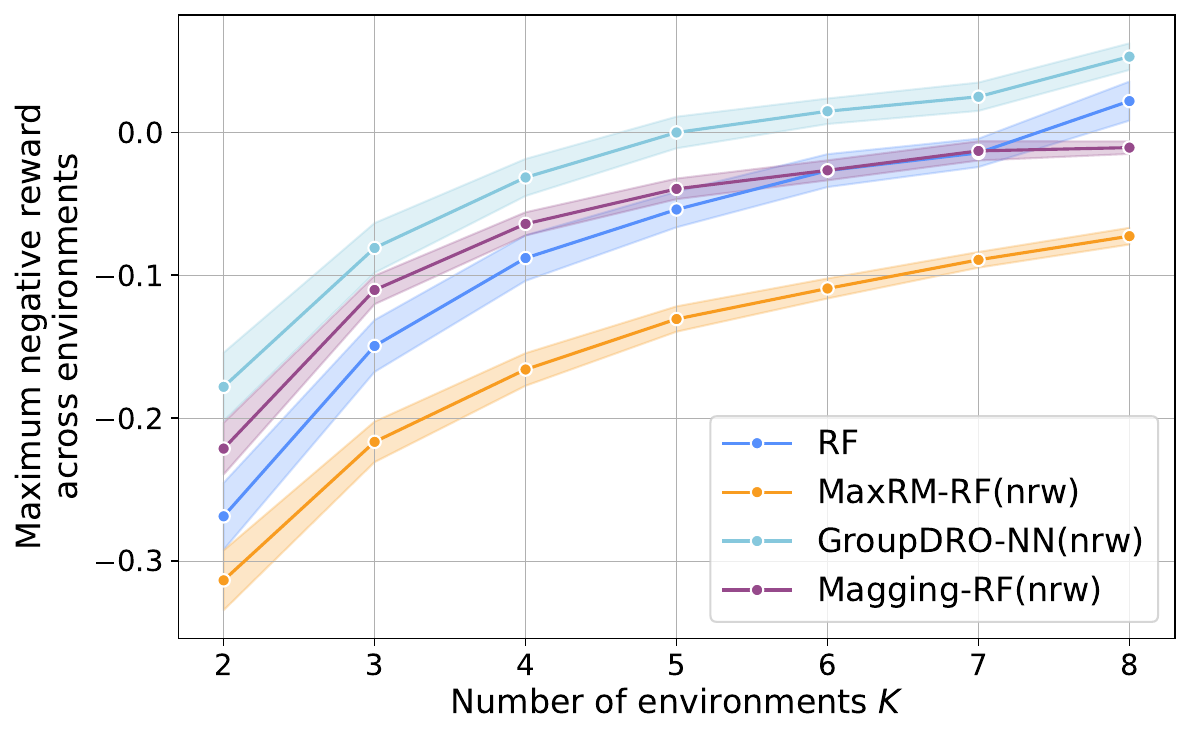}}
  \hfill
  \subcaptionbox{Maximum regret.}%
    {\includegraphics[width=0.49\linewidth]{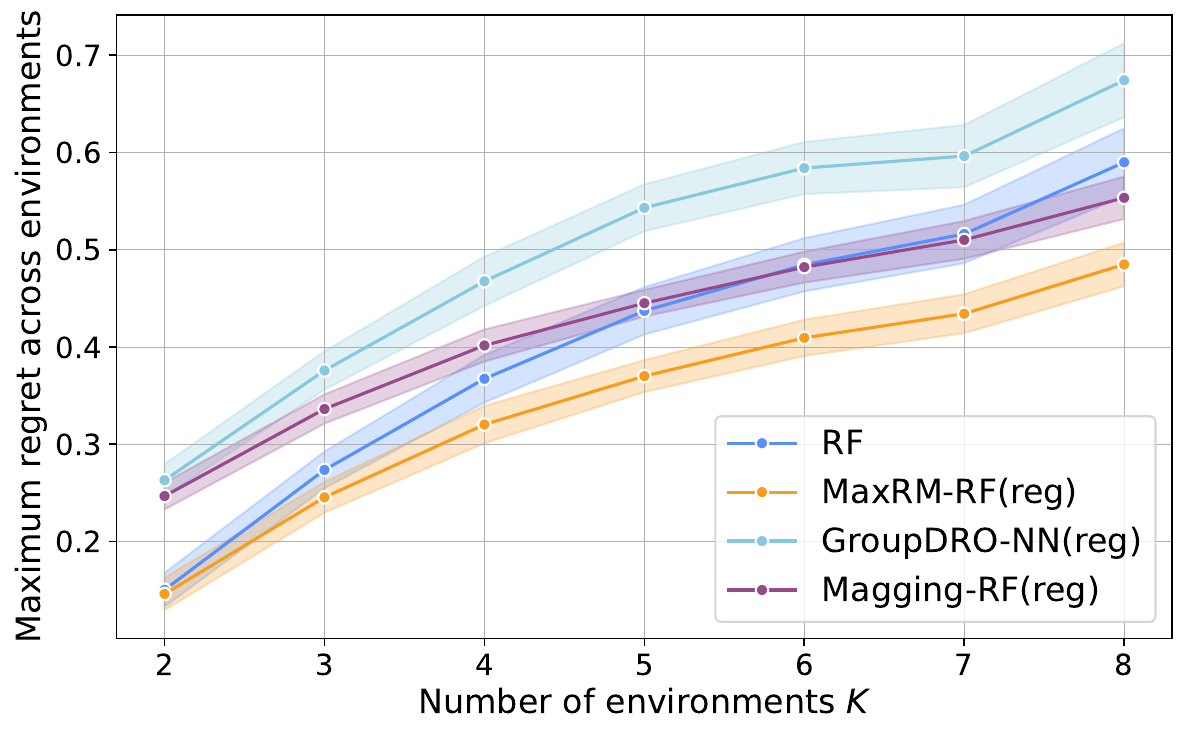}}
  \caption{Experiment from Section~\ref{sec:exp_with_shifts} with shifts in $P^X_e$ across environments, repeated for (a) the negative reward and (b) the regret instead of the MSE. Curves show the average maximum risk across environments, with shaded areas indicating $95\%$ confidence intervals over 100 repetitions. In the presence of shifts in $P^X_e$, MaxRM-RF consistently attains the lowest maximum risks, outperforming all baselines for both risks. Magging no longer improves upon RF under these shifts.}
  \label{fig:comparison_nrw_reg_shift}
\end{figure}
shows the maximum negative reward and regret across test environments when covariate distributions are identical across environments. As for the MSE, group DRO performs worst, exhibiting the largest maximum risks for all numbers of environments. Magging and MaxRM-RF improve upon RF for $K \geq 5$ environments. For the negative reward, MaxRM-RF 
outperforms all other methods. For the regret, MaxRM-RF and magging 
perform similarly.
Figure~\ref{fig:comparison_nrw_reg_shift} shows the corresponding results in the presence of covariate shifts. In this setting, MaxRM-RF again improves upon magging for both risks, as previously observed for the MSE in Section~\ref{sec:exp_with_shifts}. Moreover, magging no longer improves upon the standard random forest when covariate shifts are present.

\appsection{Additional analysis for the California housing application}\label{app:ca-housing}
\subsection{Full results for all MaxRM-RF variants and baselines}\label{app:ca:extended}
Table \ref{tab:app:ca_res} reports the full set of results for all methods and all risk objectives. The main text focuses on models trained with the maximum MSE risk; here, we additionally include the MaxRM-RF(nrw), MaxRM-RF(reg), magging(nrw), and magging(reg) variants.
MaxRM-RF(nrw) and MaxRM-RF(reg) variants improve upon RF in Fold II. The regret-based estimator behaves similarly to RF across folds, while the negative-reward variant performs worse and systematically shrinks predictions toward zero, consistent with observations in earlier studies \citep{Lund2022,Mo2024,Zhang2024}. In contrast, MaxRM-RF(mse)—reported in Section \ref{sec:ca_housing}—yields the greatest improvements in worst-case test MSE overall.
\begin{table}[t]
    \centering
    \caption{Maximum test MSE 
    in the California hosing application
    over five held-out counties. For each held-out fold, the best performing method is highlighted in bold. Cells shaded in gray indicate that the method performs better than RF, testing at a significance level of 0.05 with a permutation test and Bonferroni correction. MaxRM-RM(mse) performs best in four of five folds (significantly better than RF in three of the five folds). Abbreviations: Mag: Magging-RF, GDRO: GroupDRO-NN, MRF: MaxRM-RF.
    }
    \resizebox{\linewidth}{!}{
    \begin{tabular}{lccccccccc}
    \toprule
    Fold & LR & RF & GDRO & Mag(mse) & Mag(nrw) & Mag(reg) & MRF(mse) & MRF(nrw) & MRF(reg) \\
    \midrule
    I & $11.978$ & $1.269$ & $1.351$ & $1.474$ & $3.075$ & $1.653$ & \cellcolor{gray!25}\bm{$1.180$} & $2.295$ & $1.452$ \\ %
    II & $0.605$ & $0.525$ & $0.919$ & $0.950$ & $0.926$ & $0.712$ & $0.653$ & $0.485$ & \cellcolor{gray!25}\bm{$0.403$} \\ %
    III & $1.226$ & $0.765$ & $1.005$ & $0.754$ & $2.633$ & $0.959$ & \cellcolor{gray!25}\bm{$0.586$} & $1.683$ & $0.946$ \\ %
    IV & $0.962$ & $0.785$ & $1.366$ & $1.352$ & $1.129$ & $1.071$ & \bm{$0.758$} & $1.004$ & $0.839$ \\ %
    V & $0.714$ & $0.508$ & $0.768$ & $0.944$ & $1.028$ & $0.779$ & \cellcolor{gray!25}\bm{$0.490$} & $1.063$ & $0.546$ \\ %
    \bottomrule
    \end{tabular}
    }
    \label{tab:app:ca_res}
\end{table}

\subsection{Train–test pairwise performance across counties}\label{app:ca:pairwise}
To examine the heterogeneity between environments, we compute the maximum MSE of RFs for all ordered train–test county pairs. Figure~\ref{fig:ca-housing-pairwise} reports the resulting $25 \times 25$ matrix. Rows correspond to training counties and columns to test counties. 
\begin{figure}[t]
\centering
\includegraphics[width=0.7\linewidth]{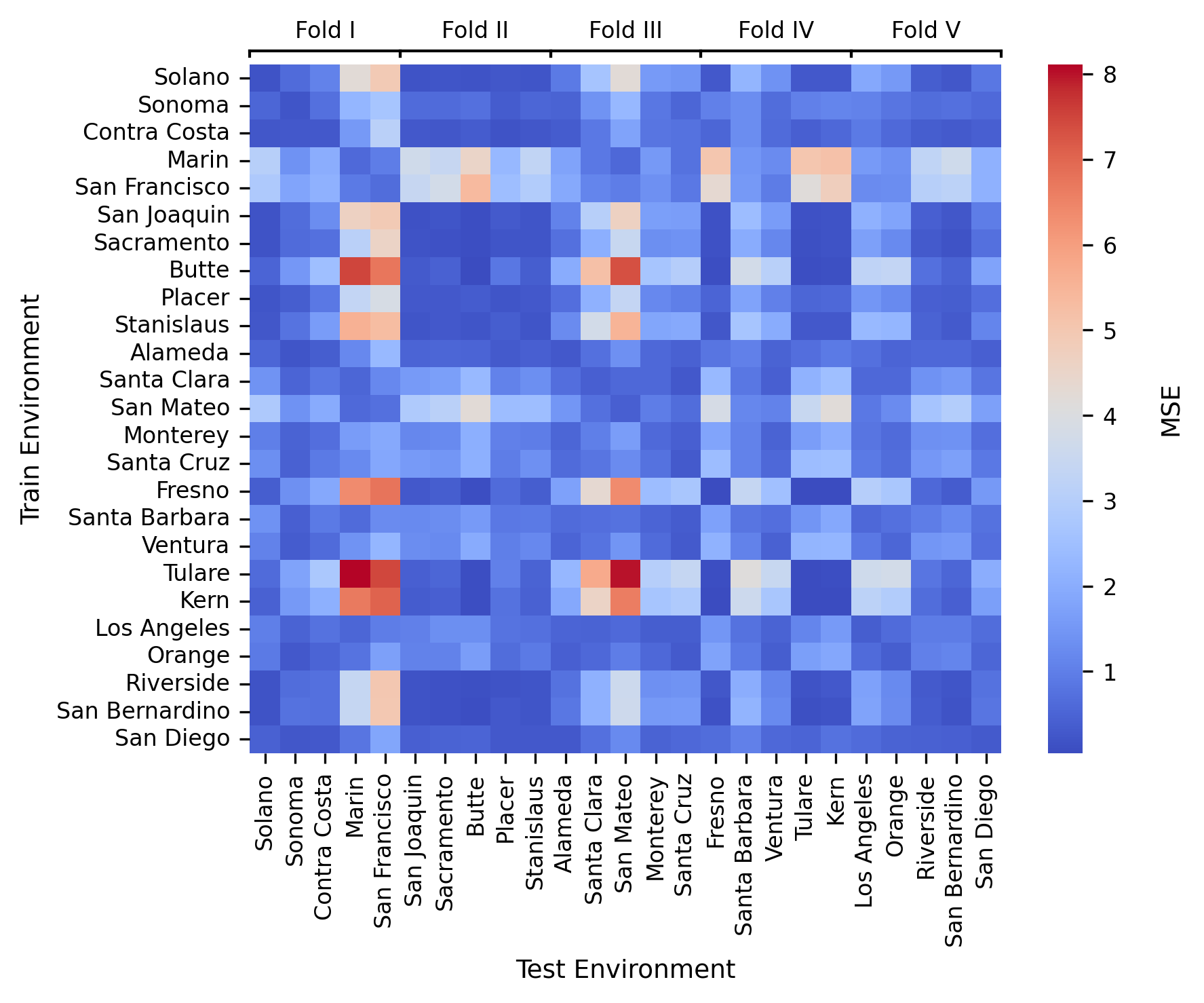}
\caption{Train–test MSEs for RFs for all ordered county pairs. Rows denote training counties and columns denote test counties. Diagonal values are the out-of-bag MSE. Redder cells indicate higher MSE when a model trained on one county is applied to another. Some
counties (e.g., Marin, San Francisco, San Mateo),
are  difficult to predict, 
independently of the choice of training environment. Here, the counties are grouped into $5$ folds as visualized in Figure~\ref{fig:ca_map}.
}
\label{fig:ca-housing-pairwise}
\end{figure}

Several counties—most notably Marin, San Francisco, and San Mateo—appear consistently more difficult to predict when used as test environments, whereas diagonal values (within-county errors) are comparatively small and similar. This structure confirms the presence of substantial and uneven distribution shifts across counties.

\end{document}